\documentclass[11pt]{article} 

\usepackage{amsfonts}
\usepackage{xcolor}
\usepackage{amsthm}

\usepackage{amsmath}
\usepackage{amssymb}
\usepackage[numbers,sort&compress]{natbib}

\usepackage{multirow}
\usepackage{mathtools}
\usepackage{graphicx}
\usepackage{url}
\usepackage{algorithm2e}
\PassOptionsToPackage{algo2e,ruled}{algorithm2e}
\setlength\algomargin{0pt}

\usepackage{amsthm}
\usepackage[margin=1in]{geometry}
\usepackage{hyperref}
\usepackage[capitalize]{cleveref}
\hypersetup{colorlinks=true,linkcolor=blue,citecolor=blue}

\usepackage[mathscr]{eucal}
\usepackage{thm-restate}

\newtheorem{theorem}{Theorem}
\newtheorem{lemma}[theorem]{Lemma}
  
    \newtheorem{definition}{Definition}
  \newtheorem{proposition}[theorem]{Proposition}
  \newtheorem{remark}[theorem]{Remark}

\newcommand{\comment}[1]{}

\title{Agnostic Smoothed Online Learning\\ without Knowledge of the Base Measure}



\author{
  \textbf{Mo\"ise Blanchard}\\
  Columbia University\\
  \small{\texttt{mb5414@columbia.edu}}
}

\date{}

\usepackage{thmtools}
\usepackage{thm-restate}
\usepackage[mathscr]{eucal}
\usepackage{bbm}

\newcommand{\acks}[1]{\section*{Acknowledgments}#1}

\newcommand{\nonl}
{\renewcommand{\nl}{\let\nl\oldnl}}

\usepackage{latexsym,tikz,graphicx}
\usetikzlibrary{calc}
\usetikzlibrary{decorations.pathreplacing}
\usetikzlibrary{patterns}

\renewenvironment{proof}[1][]{\par\noindent{\bf Proof #1\ }}{\hfill$\blacksquare$\\[2mm]}

\begin{document}

\newcommand{\trw}{\text{\small TRW}}
\newcommand{\maxcut}{\text{\small MAXCUT}}
\newcommand{\maxcsp}{\text{\small MAXCSP}}
\newcommand{\suol}{\text{SUOL}}
\newcommand{\wuol}{\text{WUOL}}
\newcommand{\crf}{\text{CRF}}
\newcommand{\sual}{\text{SUAL}}
\newcommand{\suil}{\text{SUIL}}
\newcommand{\fs}{\text{FS}}
\newcommand{\fmv}{{\text{FMV}}}
\newcommand{\smv}{{\text{SMV}}}
\newcommand{\wsmv}{{\text{WSMV}}}
\newcommand{\trwp}{\text{\small TRW}^\prime}
\newcommand{\alg}{\text{ALG}}
\newcommand{\rhos}{\rho^\star}
\newcommand{\brhos}{\brho^\star}
\newcommand{\bzero}{{\mathbf 0}}
\newcommand{\bs}{{\mathbf s}}
\newcommand{\bw}{{\mathbf w}}
\newcommand{\bws}{\bw^\star}
\newcommand{\ws}{w^\star}
\newcommand{\Prt}{{\mathsf {Part}}}
\newcommand{\Fs}{F^\star}

\newcommand{\Hs}{{\mathsf H} }

\newcommand{\hL}{\hat{L}}
\newcommand{\hU}{\hat{U}}
\newcommand{\hu}{\hat{u}}

\newcommand{\bu}{{\mathbf u}}
\newcommand{\ubf}{{\mathbf u}}
\newcommand{\hbu}{\hat{\bu}}

\newcommand{\primal}{\textbf{Primal}}
\newcommand{\dual}{\textbf{Dual}}

\newcommand{\Ptree}{{\sf P}^{\text{tree}}}
\newcommand{\bv}{{\mathbf v}}

\newcommand{\bq}{\boldsymbol q}

\newcommand{\rvM}{\text{M}}

\newcommand{\Acal}{\mathcal{A}}
\newcommand{\Bcal}{\mathcal{B}}
\newcommand{\Ccal}{\mathcal{C}}
\newcommand{\Dcal}{\mathcal{D}}
\newcommand{\Ecal}{\mathcal{E}}
\newcommand{\Fcal}{\mathcal{F}}
\newcommand{\Gcal}{\mathcal{G}}
\newcommand{\Hcal}{\mathcal{H}}
\newcommand{\Ical}{\mathcal{I}}
\newcommand{\Kcal}{\mathcal{K}}
\newcommand{\Lcal}{\mathcal{L}}
\newcommand{\Mcal}{\mathcal{M}}
\newcommand{\Ncal}{\mathcal{N}}
\newcommand{\Pcal}{\mathcal{P}}
\newcommand{\Scal}{\mathcal{S}}
\newcommand{\Tcal}{\mathcal{T}}
\newcommand{\Ucal}{\mathcal{U}}
\newcommand{\Vcal}{\mathcal{V}}
\newcommand{\Wcal}{\mathcal{W}}
\newcommand{\Xcal}{\mathcal{X}}
\newcommand{\Ycal}{\mathcal{Y}}
\newcommand{\Ocal}{\mathcal{O}}
\newcommand{\Qcal}{\mathcal{Q}}
\newcommand{\Rcal}{\mathcal{R}}

\newcommand{\brho}{\boldsymbol{\rho}}

\newcommand{\Cbb}{\mathbb{C}}
\newcommand{\Ebb}{\mathbb{E}}
\newcommand{\Nbb}{\mathbb{N}}
\newcommand{\Pbb}{\mathbb{P}}
\newcommand{\Qbb}{\mathbb{Q}}
\newcommand{\Rbb}{\mathbb{R}}
\newcommand{\Sbb}{\mathbb{S}}
\newcommand{\Vbb}{\mathbb{V}}
\newcommand{\Wbb}{\mathbb{W}}
\newcommand{\Xbb}{\mathbb{X}}
\newcommand{\Ybb}{\mathbb{Y}}
\newcommand{\Zbb}{\mathbb{Z}}

\newcommand{\Rbbp}{\Rbb_+}

\newcommand{\bX}{{\mathbf X}}
\newcommand{\bx}{{\boldsymbol x}}

\newcommand{\btheta}{\boldsymbol{\theta}}

\newcommand{\Pb}{\mathbb{P}}

\newcommand{\hPhi}{\widehat{\Phi}}

\newcommand{\Sigmah}{\widehat{\Sigma}}
\newcommand{\thetah}{\widehat{\theta}}

\newcommand{\indep}{\perp \!\!\! \perp}
\newcommand{\notindep}{\not\!\perp\!\!\!\perp}

\newcommand{\one}{\mathbbm{1}}
\newcommand{\1}{\mathbbm{1}}
\newcommand{\aprx}{\alpha}

\newcommand{\ST}{\Tcal(\Gcal)}
\newcommand{\x}{\mathsf{x}}
\newcommand{\y}{\mathsf{y}}
\newcommand{\Ybf}{\textbf{Y}}
\newcommand{\smiddle}[1]{\;\middle#1\;}

\definecolor{dark_red}{rgb}{0.2,0,0}
\newcommand{\detail}[1]{\textcolor{dark_red}{#1}}

\newcommand{\ds}[1]{{\color{red} #1}}
\newcommand{\rc}[1]{{\color{green} #1}}

\newcommand{\mb}[1]{\ensuremath{\boldsymbol{#1}}}

\newcommand{\metric}{\rho}
\newcommand{\proj}{\text{Proj}}

\newcommand{\paren}[1]{\left( #1 \right)}
\newcommand{\sqb}[1]{\left[ #1 \right]}
\newcommand{\set}[1]{\left\{ #1 \right\}}
\newcommand{\floor}[1]{\left\lfloor #1 \right\rfloor}
\newcommand{\ceil}[1]{\left\lceil #1 \right\rceil}
\newcommand{\abs}[1]{\left|#1\right|}
\newcommand{\norm}[1]{\left\|#1\right\|}

\newcommand{\todo}[1]{{\color{red} TODO: #1}}

\maketitle
\thispagestyle{empty}

\begin{abstract}
    Classical results in statistical learning typically consider two extreme data-generating models: i.i.d.\ instances from an unknown distribution, or fully adversarial instances, often much more challenging statistically. To bridge the gap between these models, recent work introduced the \emph{smoothed} framework, in which at each iteration an adversary generates an instance from a distribution constrained to have density bounded by $\sigma^{-1}$ compared to some fixed base measure $\mu$. This framework interpolates between the i.i.d. and adversarial cases, depending on the value of $\sigma$. For the classical online prediction problem, most prior results in smoothed online learning rely on the arguably strong assumption that the base measure $\mu$ is \emph{known} to the learner, contrasting with standard settings in the PAC learning or consistency literature. We consider the general \emph{agnostic} problem in which the base measure is \emph{unknown} and values are arbitrary. In this direction, \cite{block2024performance} showed that empirical risk minimization has sublinear regret under the \emph{well-specified} assumption. We propose an algorithm \textsc{R-Cover} based on recursive coverings which is the first to guarantee sublinear regret for agnostic smoothed online learning without prior knowledge of $\mu$ and without the well-specified assumption. For classification, we prove that \textsc{R-Cover} has adaptive regret $\tilde\Ocal(\sqrt{dT/\sigma})$ for function classes with VC dimension $d$, which is optimal up to logarithmic factors. For regression, we establish that \textsc{R-Cover} has sublinear oblivious regret for function classes with polynomial fat-shattering dimension growth.
\end{abstract}

\tableofcontents

\section{Introduction}

We study the classical prediction problem in which a learner sequentially observes an instance $x_t\in\Xcal$ and makes a prediction about a value $y_t\in \Ycal$ before observing the true value. The learner's objective is to minimize the error of its predictions $\hat y_t$ compared to the true value $y_t$, given by some known loss function. We focus on both classification with $\Ycal=\{0, 1\}$ and regression with $\Ycal=[0,1]$, but for ease of presentation the present discussion mostly concerns classification. A major question in statistical learning theory is to understand under which assumptions on the data generating process and in particular on the process generating instances $(x_t)_{t\geq 1}$, can one give learning guarantees in the sense that the learner incurs low excess loss compared to some benchmark function class $\Fcal$. Most of the literature focused on either of the two following settings.

On one extreme, one can consider that the sequence $(x_t)_{t\geq 1}$ is fully adversarial and may depend on the actions of the learner. In this case, classical results \citep{littlestone1988learning,ben2009agnostic} show that the best one can hope for is to achieve low excess loss compared to function classes with finite \emph{Littlestone dimension}. This is quite restrictive, for instance, this precludes positive results even for the simple function class of threshold functions $\{x\in[0,1]\mapsto\1_{x\geq x_0}, x_0\in[0,1]\}$ on $\Xcal=[0,1]$.

On the other extreme, one can suppose that the instance sequence $(x_t)_{t\geq 1}$ is i.i.d.\ typically under some unknown distribution $\mu$. In the PAC learning setting \citep{vapnik1971uniform,vapnik1974theory,valiant1984theory}, one can ensure low excess error compared to function classes with finite \emph{VC dimension} (see \cref{def:VC_dimension}) which is significantly weaker than having finite Littlestone dimension. For instance, this covers the class of linear separators for say $\Xcal=\Rbb^d$ for $d\geq 1$. In regression, this can be replaced with the notion of \emph{fat-shattering dimension} (see \cref{def:fat_shattering}) \citep{bartlett1994fat,kearns1994efficient}, which is a scale-dependent version of the VC dimension. 
In fact, when the data generating process is i.i.d.\ one can achieve consistency---vanishing average excess loss---without further function class assumptions\footnote{Note that this differs from the PAC learning setting in the sense that guarantees are asymptotic.}. For instance, in classification and when the instance space $\Xcal$ is Euclidean, the simple $k$-nearest neighbor algorithm is already consistent \citep{devroye1994strong,devroye2013probabilistic,gyorfi:02} under reasonable choices of $k(t)$. Similar consistency results can also be achieved for general spaces \citep{hanneke2021bayes,gyorfi2021universal}.

Ideally, one would aim to obtain similar guarantees as for the more amenable i.i.d.\ case under weaker statistical assumptions. Notably, there has been significant work to establish consistency results under non-i.i.d.\ instance processes $(x_t)_{t\geq 1}$, including relaxations of the i.i.d.\ assumption such as stationary ergodic processes \citep{morvai1996nonparametric,gyorfi1999simple,gyorfi:02} or processes satisfying some form of law of large numbers \citep{gray2009probability,steinwart2009learning}. More recently, \cite{hanneke2021learning} initiated a line of work on universal learning to characterize minimal assumptions on instance processes $(x_t)_{t\geq 1}$ for consistency \cite{blanchard2021universal, blanchard2022universal,blanchard2023universal,blanchard2023contextual,blanchard2023adversarial}. These results are however mostly asymptotic in nature.

\paragraph{Smoothed online learning.} To interpolate between the adversarial and i.i.d.\ case while preserving quantitative convergence rates, \cite{rakhlin2011online} introduced the setting of \emph{smoothed online learning}. In this setting, one supposes that the process $(x_t)_{t\geq 1}$ is generated from some limited adversary that samples $x_t\sim \mu_t$ according to some distribution $\mu_t$ conditional on the history, constrained to have density bounded by $1/\sigma$ with respect to some fixed distribution $\mu$ (see \cref{def:smooth_sequence}). Here, $\sigma\in[0,1]$ is a parameter quantifying the smoothness of the adversary. Effectively this corresponds to a setting where the instances chosen by the adversary do not put too much mass on regions with low $\mu$-probability, which restricts the power of the adversary to explore unrelated regions of the space. Depending on the smoothness parameter $\sigma$, smoothed online learning interpolates exactly between the adversarial setting ($\sigma=0$) and the i.i.d.\ setting ($\sigma=1$). Recent works showed that many of the positive results from the i.i.d.\ case can be achieved under smooth adversaries up to paying a reasonable price in the smoothness constraint $1/\sigma$, covering a wide variety of settings from standard classification and regression \cite{rakhlin2011online,block2022smoothed,haghtalab2022oracle,block2023sample,haghtalab2024smoothed}, sequential probability assignment \cite{bhatt2024smoothed}, learning in auctions \cite{durvasula2023smoothed,cesa2023repeated}, robotics \cite{block2022efficient,block2023oracle}, differential privacy \cite{haghtalab2020smoothed}, and reinforcement learning \cite{xie2022role}.

In particular, \cite{rakhlin2011online} presented a general framework for analyzing minimax regret against smooth adversaries in terms of a distribution-dependent sequential Rademacher complexity. Then, \cite{haghtalab2024smoothed,block2022smoothed} provided tight regret bounds for smoothed online learning for classification and regression respectively, under the core assumption that the base measure is known. As an important note, the notion of smoothness in terms of bounded Radon-Nikodym density with respect to the base measure can usually be generalized to general divergence balls as studied in \cite{block2023sample}.

\paragraph{Agnostic smoothed online learning.} Crucially, the above-mentioned works on the standard smooth online learning problem assume that the base measure $\mu$ is \emph{known} to the learner. Arguably, this is a somewhat strong assumption both in practice and in theory. Knowing the base measure significantly diverges from classical results in the PAC learning setting for which knowing the distribution of the data is unnecessary, or from results from the literature on consistency which require no prior knowledge on the data-generating process. Hence, we aim to answer the following:
\begin{center}
    \textit{Can we achieve sublinear regret for smoothed online learning without prior knowledge of the base measure? If so, which algorithm achieves the optimal excess error guarantee?}
\end{center}

Along this direction, \cite{block2024performance} notably showed that if the values $(y_t)_{t\geq 1}$ are well-specified, i.e., given a function class $\Fcal$, there exists some $f^\star\in\Fcal$ such that $\Ebb[y_t\mid x_t] = f^\star(x_t)$ for all $t\geq 1$, then empirical risk minimization (ERM) has a regret guarantee of the form $\sigma^{-1}  \sqrt{\text{comp}(\Fcal)\cdot T}$ for some complexity notion for the function class $\text{comp}(\Fcal)$ (see \cref{thm:main_result_block} for a complete statement). Importantly, ERM does not require any prior knowledge of the base measure. In terms of lower bounds, \cite{block2022smoothed} showed that some polynomial dependency of the regret in $\sigma^{-1}$ is necessary as opposed to the setting in which $\mu$ in known for which the regret usually depends on $\ln (\sigma^{-1})$. 

We focus on the general setting in which no assumptions are made on the values $(y_t)_{t\geq 1}$ selected by the adversary, and the learner has no prior knowledge on the base measure, which we refer to as the \emph{agnostic} smoothed online learning setting. As before, the goal is to achieve low regret compared to a fixed function class $\Fcal$.

\paragraph{Contributions.}
We answer positively to the previous question by providing a \emph{proper} algorithm \textsc{R-Cover} (Recursive Covering) that achieves the optimal regret guarantee in classification for function classes $\Fcal$ with finite VC dimension up to logarithmic factors (\cref{thm:main_thm}), and sublinear regret in regression for function classes with standard fat-shattering dimension growth (\cref{thm:main_regret_regression}). To the best of our knowledge, this is the first algorithm with sublinear regret guarantees for the general agnostic online learning problem without prior knowledge of the base measure. $\textsc{R-Cover}$ also does not require the knowledge of the smoothness parameter $\sigma$. 

Our main result is easiest to present for classification. Namely, when $\Fcal:\Xcal\to\{0, 1\}$ has VC dimension $d$, we prove that $\textsc{R-Cover}$ achieves the following regret guarantee:
\begin{equation*}
    \Ebb\sqb{\sum_{t=1}^T \ell_t(\hat y_t) - \inf_{f\in\Fcal} \sum_{t=1}^T \ell_t(f(x_t)) } =\tilde\Ocal\paren{\sqrt {\frac{dT }{\sigma}} },
\end{equation*}
where $\tilde\Ocal$ hides poly-logarithmic factors in $T$ only. This matches a lower bound for VC classes up to logarithmic factors concurrently obtained by the authors from \cite{block2024performance}.\comment{(confirmed via personal communication). }
In particular, \textsc{R-Cover} has optimal dependency in $T$, $d$, but also in the smoothness parameter $\sigma$. More precisely, there is a function class of VC dimension $d$ for which any learning algorithm must incur an expected regret $\sqrt{dT/\sigma}$ for some smooth adversary (\cref{thm:lower_bound}). This lower bound holds even in the \emph{realizable} setting (well-specified and noiseless) in which there exists some function $f^\star\in\Fcal$ fixed a priori for which $y_t=f^\star(x_t)$ for all $t\geq 1$, and the loss is fixed over time.

The proof of the regret guarantees of \textsc{R-Cover} crucially relies on a novel property that we prove for smooth adversaries (see \cref{prop:simplified,lemma:main_bound_modified}). At a high level, this tightly bounds the possible amount of exploration of unknown regions of the instance space for smooth adversaries, which may be of broader interest.
\comment{This may be of broader interest for smoothed analysis without prior knowledge of the base measure, or for understanding which relaxations of the smoothness assumption could be made while preserving regret guarantees.}

\section{Preliminaries}

\paragraph{Formal setup.}

Let $\Xcal$ be an instance space equipped with some sigma-algebra. The function class $\Fcal$ is a set of measurable functions $f:\Xcal\to[0,1]$. We fix a horizon $T\geq 1$ and consider the following sequential prediction task. At each iteration $t\in[T]$,
\begin{enumerate}
    \item An adversary chooses a distribution $\mu_t$ on $\Xcal$ depending on all history, samples $x_t\sim \mu_t$ independently from the history, then chooses a $1$-Lipschitz loss function $\ell_t:[0,1]\to [0,1]$ depending on $x_t$ and the history.
    \item The learner observes $x_t$ and makes a prediction $\hat y_t\in[0,1]$.
    \item The learner observes $\ell_t$ and incurs the loss $\ell_t(\hat y_t)$.
\end{enumerate}

This captures the standard online prediction problem in which there is a fixed $1$-Lipschitz loss $\ell:[0,1]\times [0,1]\to [0,1]$ and the loss of the learner is equal to $\ell(\hat y_t,y_t)$ for some value $y_t$ that is revealed after the prediction $\hat y_t$. 
\comment{Indeed, the adversary may choose the loss $\ell_t(\cdot) = \ell(\cdot,y_t)$ in step $1$. }
Next, we say that the learner is \emph{proper} if at each iteration $t\in[T]$, before observing the query $x_t$, the learner first commits to a function $\hat f_t\in \Fcal$ then, upon observing $x_t$, predicts the value $\hat y_t=\hat f_t(x_t)$. Our proposed algorithms will enjoy this property.

\comment{
\begin{definition}[Proper learners]\label{def:proper_learner}
    We say that the learner is \emph{proper} if at each iteration $t\in[T]$, before observing the query $x_t$, they first commit to a function $\hat f_t\in \Fcal$ then, upon observing $x_t$, predict the value $\hat y_t=\hat f_t(x_t)$.
\end{definition}
}

The smoothness assumption constrains the distributions $\mu_t$ chosen by the adversary.

\begin{definition}[Smooth distributions and smooth adversaries]\label{def:smooth_sequence}
    Let $\mu,p$ be probability measures on $\Xcal$. We say that $p$ is \emph{$\sigma$-smooth} with respect to $\mu$ if $\|\frac{dp}{d\mu}\|_\infty \leq 1/\sigma$,
    where $\|\cdot\|_\infty$ denotes the essential supremum.
    We say that an adversary is \emph{$\sigma$-smooth} with respect to the base measure $\mu$ if for any $t\in[T]$, the distribution $\mu_t$ selected by the adversary in step 1 above is $\sigma$-smooth with respect to $\mu$.
\end{definition}


The goal of the learner is to minimize their regret, that is, the excess error compared to the benchmark functions in $\Fcal$. Precisely, we distinguish between the expected \emph{adaptive} regret
\begin{equation*}
    \Ebb\sqb{\sum_{t=1}^T \ell_t(\hat y_t) - \inf_{f\in\Fcal} \sum_{t=1}^T \ell_t(f(x_t)) },
\end{equation*}
in which the benchmark function may depend on the specific realizations of the learning process, and the expected \emph{oblivious} regret in which the benchmark function is fixed a priori:
\begin{equation*}
    \Ebb\sqb{\sum_{t=1}^T \ell_t(\hat y_t) }- \inf_{f\in\Fcal}\Ebb\sqb{ \sum_{t=1}^T \ell_t(f(x_t)) },
\end{equation*}
Adaptive benchmarks are known to require significantly stronger analysis than oblivious benchmarks for smoothed online learning (e.g. see \cite{haghtalab2024smoothed}).

\paragraph{Complexity notions for the function class and prior results.}
\label{subsec:complexity_notion}

In classification, i.e., when the functions take value in $\{0, 1\}$, and when the instance process is i.i.d.\ ($\sigma=0$) it is known that in our setup, learnability is characterized by the VC dimension \citep{vapnik1971uniform,vapnik1974theory,valiant1984theory}.

\begin{definition}[VC dimension]\label{def:VC_dimension}
    Let $\Fcal:\Xcal\to\{0, 1\}$ be a function class. We say that $\Fcal$ \emph{shatters} a set of points $\{x_1,\ldots,x_m\}\subset\Xcal$ if for any choice of values $\epsilon\in\{0, 1\}^m$ there exists $f_\epsilon\in\Fcal$ such that $f_\epsilon(x_i)=\epsilon_i$ for all $i\in[m]$. The \emph{VC dimension} of $\Fcal$ is the size of the largest shattered set.
\end{definition}

In the regression setting for which functions take value on the interval $[0,1]$, a scale-dependent analog characterizes the learnability of the function class $\Fcal$. This is known as the fat-shattering dimension of the class \citep{bartlett1994fat,kearns1994efficient}.

\begin{definition}[Fat-shattering dimension]\label{def:fat_shattering}
    Let $\Fcal:\Xcal\to[0,1]$ be a function class. Fix $r>0$. We say that $\Fcal$ $r$-shatters a set $\{x_1,\ldots,x_m\} \subset \Xcal$ if there exist $s_1,\ldots,s_m\in [0,1]$ such that for any signs $\epsilon\in\{\pm 1\}^m$ there exists $f_\epsilon\in\Fcal$ such that $\epsilon(f_\epsilon(x_i) - s_i) \geq r$ for all $i\in[m]$. The \emph{fat-shattering dimension} of $\Fcal$ at scale $r>0$, denoted $\mathsf{fat}_\Fcal(r)$ is the size of the largest $r$-shattered set.
\end{definition}

Next, we define the notion of covering set and covering numbers. We voluntarily restrict these notions to the empirical infinite norm, which is sufficient for this work.

\begin{definition}[Covering set and covering numbers]\label{def:covering_numbers}
    Let $\Fcal:\Xcal\to[0,1]$ be a function class for regression. Fix a set $S=\{x_1,\ldots,x_n\}\subset \Xcal$ and $\epsilon\geq 0$. We say that $\Ccal\subset \Fcal$ is an $\epsilon$-cover of $\Fcal$ on $S$ if for all $f\in \Fcal$ there exists $g\in \Ccal$ such that for all $i\in[n]$, 
    $|f(x_i)-g(x_i)| \leq \epsilon.$
    The $\epsilon$-covering number of $\Fcal$ on $S$, denoted $\Ncal(\Fcal;\epsilon,S)$ is the size of the smallest $\epsilon$-cover of $\Fcal$ on $S$.
\end{definition}

To state related results, we also need to define the Wills functional \cite{wills1973gitterpunktanzahl,hadwiger1975will} of $\Fcal$, which is a less standard complexity measure. The definition below uses the formulation from \cite{block2024performance}.

\begin{definition}[Wills functional]
    Fix values $Z_1,\ldots,Z_m\in\Xcal$ and let $\xi=(\xi_1,\ldots,\xi_m)$ be a vector of i.i.d.\ standard Gaussian random variables. The Wills functional of $\Fcal$ on $Z_1,\ldots,Z_m$ is defined as
    \begin{equation*}
        W_{m,Z}(\Fcal) := \Ebb_\xi\sqb{\exp\paren{\sup_{f\in\Fcal} \sum_{i=1}^m \xi_i f(Z_i) - \frac{1}{2} f(Z_i)^2}}.
    \end{equation*}
\end{definition}

The above definition depends on the choice of $Z_1,\ldots,Z_m$. For simplicity we may omit this dependency---most of the time we take its expectation for $Z_1,\ldots,Z_m\overset{iid}{\sim}\mu$.
Properties of the Wills functional have been extensively studied \cite{wills1973gitterpunktanzahl,hadwiger1975will,mcmullen1991inequalities,mourtada2023universal}. We refer to \cite{mourtada2023universal} for detailed connections with metric complexities and universal coding. \cref{sec:wills_functional} gives a brief overview of links between Wills functional and more standard measures complexities that are most relevant to this work. 
In particular, for any choice of $Z_1,\ldots,Z_m$, $\ln W_m(\Fcal) \lesssim d\ln m$ for classes $\Fcal$ with finite VC dimension. \cite{mourtada2023universal} also showed that $\ln W_m(\Fcal)\leq \Gcal_m(\Fcal)$ where $\Gcal_m(\Fcal)$ is the Gaussian complexity of $\Fcal$ (see \cref{sec:wills_functional} for a definition). Last, \cite{block2024performance} showed that having $\ln W_m(\Fcal)=o(m)$ is necessary and sufficient to ensure learnability with polynomially many samples when the data is i.i.d.

Now that we have defined the Wills functional, we can formally state the main result from \cite{block2024performance} which shows that empirical risk minimization (ERM) achieves sublinear regret without knowledge of the base measure for the \emph{well-specified} setting.

\begin{theorem}[Theorem 1 of \cite{block2024performance}]
\label{thm:main_result_block}
    Let $\Fcal:\Xcal\to[0,1]$ be a function a function class. Consider the squared loss regression setting in which $\ell_t(\cdot) = (\cdot - y_t)^2$ for a value $y_t\in \Rbb$. Suppose that there exists some function $f^\star\in\Fcal$ such that $(x_t)_{t\geq 1}$ is a $\sigma$-smooth sequence on $\Xcal$ and that the values are given via $y_t=f^\star(x_t) + \eta_t$ where $\eta_t$ is a mean-zero subgaussian random variable with variance proxy $\nu^2$, conditionally on the history up to time $t$. Then, ERM makes predictions $\hat y_t$ such that
    \begin{equation*}
        \Ebb\sqb{\sum_{t=1}^T (\hat y_t-f^\star(x_t))^2 } \leq  \frac{ 20 \ln^3 T}{\sigma} \sqrt {T(1+\nu)(1+\ln \Ebb_\mu\sqb{W_{2T\ln(T)/\sigma}(256\Fcal)} )}.
    \end{equation*}
\end{theorem}

\paragraph{Further definitions and notations.}

We define the notion of tangent sequence \cite{de2012decoupling} which will be useful within the proofs.
\begin{definition}[Tangent sequence]
    Let $(Z_t)_{t\geq 1}$ be a sequence of random variables adapted to a filtration $(\Fcal_t)_{t\geq 1}$. A \emph{tangent sequence} $(Z_t')_{t\geq 1}$ is a sequence of random variables such that $Z_t$ and $Z_t'$ are i.i.d.\ conditionally on $\Fcal_{t-1}$ (and independently of $Z_{t'}$ for $t'>t$).
\end{definition}

Throughout this work, we will use this notation with primes to denote tangent sequences. We also denote by $\Hcal_t$ the history at the end of iteration $t\geq 0$ of the learning process, which is the sigma-algebra generated by $(x_l,\hat y_l,\ell_l)_{l\leq t}$. In particular, $x_t\mid\Hcal_{t-1}\sim\mu_t$ where $\mu_t$ is the distribution selected by the adversary in step 1 of the learning process. 
We use the notation $[T]:=\{1,\ldots,T\}$. We write $\lesssim$ to signify that the inequality holds up to universal constants. Last unless mentioned otherwise, the notation $\tilde\Ocal$ only hides poly-logarithmic factors in $T$.

\section{Main results}

While our analysis provides regret bounds for general regression function classes, these are more easily stated for classification.

\begin{theorem}\label{thm:main_thm}
    Fix $T\geq 1$. Let $\Fcal:\Xcal\to\{0, 1\}$ be a function class with VC dimension $d$. Suppose that $(x_t)_{t\geq 1}$ is a $\sigma$-smooth sequence on $\Xcal$ with respect to some unknown base measure $\mu$. Then, $\textsc{R-Cover}$ makes predictions $\hat y_t$ such that
    \begin{equation*}
        \Ebb\sqb{\sum_{t=1}^T \ell_t(\hat y_t ) - \inf_{f\in\Fcal} \sum_{t=1}^T \ell_t(f(x_t) ) } \leq  C\ln^{5/2} T \sqrt {\frac{dT }{\sigma}},
    \end{equation*}
    for some universal constant $C>0$.
\end{theorem}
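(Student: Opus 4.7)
The plan is to reduce the problem to learning with expert advice over a dynamically growing pool obtained by projecting $\Fcal$ onto the past observed data. Concretely, at each iteration $t$, before seeing $x_t$, consider the equivalence relation on $\Fcal$ where $f\sim g$ iff $f(x_l)=g(x_l)$ for all $l<t$; by \cref{lemma:sauer_lemma}, this partitions $\Fcal$ into at most $(2et/d)^d$ classes. Pick one representative per class as an expert (these are the empirical ``covers'' in \textsc{R-Cover}), and run the specialized experts algorithm of \cref{subsec:learning_with_experts} on this growing pool. The structural point is that any benchmark $f^\star\in\Fcal$ is, at every round, indistinguishable from some representative given the past data, so the pool always contains a good surrogate for $f^\star$.

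For the analysis, fix $f^\star\in\Fcal$ and let $\hat f_t$ denote the representative of the class containing $f^\star$ at round $t$, so that $\hat f_t(x_l)=f^\star(x_l)$ for all $l<t$ while $\hat f_t$ may differ from $f^\star$ on $x_t$. I would decompose the regret and sum in expectation as
\[
\Ebb\sqb{\sum_{t=1}^T \ell_t(\hat y_t)-\ell_t(f^\star(x_t))}
= \Ebb\sqb{\sum_{t=1}^T \ell_t(\hat y_t)-\ell_t(\hat f_t(x_t))} + \Ebb\sqb{\sum_{t=1}^T \ell_t(\hat f_t(x_t))-\ell_t(f^\star(x_t))}.
\]
The first term is a sleeping/specialist-experts regret against a reference that may change over time inside a pool of aggregate size $\tilde\Ocal(T^d)$, which the specialized algorithm from \cref{subsec:learning_with_experts} should handle. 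By $1$-Lipschitzness of $\ell_t$, the second ``tracking-error'' term is bounded by $\Ebb[\sum_t \one[\hat f_t(x_t)\neq f^\star(x_t)]]$.

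The central technical step is controlling this tracking error via the novel smoothness property from \cref{prop:simplified} (and \cref{lemma:main_bound_modified}). Since $\hat f_t$ and $f^\star$ agree on $x_1,\ldots,x_{t-1}$, a disagreement on $x_t$ means that $x_t$ ``explores'' a region not yet distinguished by past samples; by $\sigma$-smoothness, the conditional probability of such an event is at most $\sigma^{-1}\mu(\hat f_t\neq f^\star)$. The exploration bound then limits the expected total number of such surprises over $T$ rounds via a tangent-sequence / decoupling argument, combined with Sauer-Shelah control of the projection of $\Fcal$ on past points. Carried through carefully, this contributes $\tilde\Ocal(\sqrt{dT/\sigma})$, which dominates and matches the claimed bound.

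The hard part will be making the exploration bound rigorous without knowledge of $\mu$: representatives $\hat f_t$ are selected from purely empirical information, yet we must control their $\mu$-disagreement with $f^\star$ while paying only a single factor of $1/\sigma$ rather than $1/\sigma^2$. The recursive aspect of \textsc{R-Cover} is designed for this---covers are refined at successively finer empirical resolutions as more data accumulates, with each refinement invoking the smoothness property once to transfer empirical disagreement to $\mu$-disagreement. Summing the two contributions and absorbing the polylogarithmic overhead from Sauer-Shelah growth, uniform concentration, and experts regret into a $\log^{5/2} T$ factor then yields the theorem.
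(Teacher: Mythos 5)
There are genuine gaps, and the proposed reduction is not the paper's argument. First, your decomposition leaves the term $\Ebb[\sum_t \ell_t(\hat y_t)-\ell_t(\hat f_t(x_t))]$ uncontrolled: $\hat f_t$ is a \emph{time-varying} reference inside a pool that grows every round, so what you need is a sleeping/tracking-type guarantee, not regret against a fixed expert. \textsc{A-Exp} (\cref{alg:exponentially_weighted}) is a fixed-pool forecaster with a second-order bound against a \emph{fixed} competitor; the representative of $f^\star$'s equivalence class can change on the order of $T$ times as the class is refined, so no off-the-shelf experts bound gives $\tilde\Ocal(\sqrt{dT/\sigma})$ here. The paper sidesteps exactly this by freezing both the pool and the reference within dyadic epochs, recursing over $\approx\log T$ depths, and paying the restart cost through the $\Delta_k^{(p)}$-dependent bound of \cref{lemma:regret_exponentially_weighted}, with $\Delta_k^{(p)}$ itself bounded by smoothness (\cref{lemma:bounding_deltas,lemma:main_bound_modified}); this interplay is what lifts the single-level $T^{2/3}$ rate of \textsc{Cover} to $\sqrt{T}$, and your sketch has no substitute for it. Second, your smoothness step is essentially circular: bounding $\Pbb(\hat f_t(x_t)\neq f^\star(x_t)\mid\Hcal_{t-1})$ by $\sigma^{-1}\mu(\hat f_t\neq f^\star)$ is useless because $\mu(\hat f_t\neq f^\star)$ is precisely what cannot be controlled pointwise when $\mu$ is unknown and $\hat f_t$ is chosen from empirical agreement only (the lower-bound construction of \cref{thm:lower_bound} makes this disagreement probability large on many rounds). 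The actual control in the paper is an \emph{amortized} statement over epochs (\cref{prop:simplified}, and in full generality \cref{lemma:main_bound_modified}), whose proof via tangent sequences, the strengthened decoupling \cref{lemma:decoupling_stronger}, and Wills-functional bounds is the technical core; "invoking the smoothness property once per refinement" does not reconstruct it, and the proposition is in fact vacuous if applied with per-round epochs as your scheme would require.

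Finally, even granting both steps, you would only have bounded the \emph{oblivious} regret for a fixed $f^\star$, whereas \cref{thm:main_thm} is an \emph{adaptive} regret bound (the infimum sits inside the expectation). The paper needs its high-probability oblivious bound (\cref{prop:oblivious_vc_class}), a union bound over a $\mu$-based $\epsilon$-cover of $\Fcal$, and the coupling argument of \cref{lemma:coupling} with relative VC bounds to make this conversion; none of this appears in your proposal.
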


As a by-product of the analysis, we also provide a high-probability version of the above regret bound (see \cref{eq:high_probability_adaptive_bound_vc}).
Compared to the regret bound \cref{thm:main_result_block} which becomes $\sigma^{-1}\ln^{7/2} (T)\sqrt {dT}$ for VC classes, our regret bound holds for adversarial values $(y_t)_{t\in[T]}$ and has an improved dependency in $\sigma$: it grows as $1/\sqrt \sigma$ instead of $1/\sigma$. 
Our regret bound for \textsc{R-Cover} is complemented by a matching lower bound up to logarithmic factors, which holds even in the realizable noiseless setting. \comment{Confirmed by personal communication, }The authors from \cite{block2024performance} also generalized their lower bound for the regret empirical risk minimization (ERM) (Theorem 3) to general algorithms for VC classes, leading to the same result as below. We include the proof in \cref{sec:proof_lower_bound} for completeness. \comment{The proof strategy is also of independent interest and can be used to show that some of the properties we develop on smooth adversaries (\cref{prop:simplified,lemma:main_bound_modified}) are essentially tight. We refer to \cref{subsec:simpler_algo} for further discussion.}

\begin{restatable}{theorem}{LowerBound}
\label{thm:lower_bound}
    Fix $d\geq 1$. There exists a function class $\Fcal:\Xcal\to\{0, 1\}$ with VC dimension $d$ such that for any $\sigma\in(0,1)$, $T\geq 1$, and any learning algorithm, there is a function $f^\star \in \Fcal$ and a $\sigma$-smooth adversary such that the responses are realizable, that is, $y_t=f^\star(x_t)$ for all $t\in[T]$, and denoting by $\hat y_t$ the predictions of the algorithm,
    \begin{equation*}
        \Ebb\sqb{\sum_{t=1}^T \1 [\hat y_t \neq f^\star(x_t)] } \geq \min\paren{\frac{1}{12} \sqrt{\frac{dT(1-\sigma)}{\sigma}},\;  \frac{T}{24} }.
    \end{equation*}
\end{restatable}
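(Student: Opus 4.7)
I would exhibit an explicit randomized instance (function class, base measure, realizable $f^\star$, and $\sigma$-smooth query strategy) for which any deterministic learner incurs the claimed expected regret, then invoke Yao's minimax principle to obtain the worst-case bound. A natural candidate for the class is $\Fcal = \set{f_{\mathbf c}: \mathbf c \in \Nbb^d}$ on $\Xcal = \Nbb \times [d]$, with $f_{\mathbf c}(n,i) = \1[n = c_i]$. This class has VC dimension exactly $d$: it shatters any set of $d$ points chosen one per column (by independently deciding whether $c_i$ equals the point's first coordinate), while any set of $d+1$ points must contain two in the same column, which cannot both be labeled $1$ by any $f_{\mathbf c}$.

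\textbf{Randomized instance and regret analysis.} For an $N$ chosen to balance the final bound, let $\mu$ be uniform on $[N]\times [d]$ and draw $\mathbf c^\star$ uniformly from $[N]^d$, setting $f^\star = f_{\mathbf c^\star}$. At each round, the adversary picks a column $i_t$ and samples $n_t$ from a distribution on $[N]$ of density at most $1/\sigma$ against the uniform, so that the induced distribution on $\Xcal$ is $\sigma$-smooth against $\mu$ (in the regime $d\sigma \le 1$). I would then decompose the regret across columns: in each column $i$ the learner faces a hidden-target subproblem on $[N]$, and by a careful choice of the within-column query distribution the adversary can force $\Omega(\sqrt{T_i/\sigma})$ expected mistakes, where $T_i$ is the number of queries in column $i$. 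Summing across columns via Jensen's inequality with $\sum_i T_i = T$ gives $\Omega(\sqrt{dT/\sigma})$ total expected regret.

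\textbf{Main obstacle.} The central technical challenge is the per-column lower bound: showing that the adversary can force $\Omega(\sqrt{T_i/\sigma})$ mistakes despite realizability, where a naive argument only bounds mistakes per column by $O(1)$ (since a single revealing query $n_t = c_i^\star$ already determines $c_i^\star$ in the singleton construction above). Overcoming this requires either enriching the per-column class so that each mistake reveals only partial information about the target, or a more delicate probabilistic argument tracking the learner's posterior over $\mathbf c^\star$ jointly with the unknown base measure, in the spirit of the construction of \cite{block2024performance} for ERM and its generalization to arbitrary algorithms mentioned in the theorem statement. The $(1-\sigma)$ factor emerges naturally from the density constraint on $\mu_t$, vanishing at $\sigma = 1$ (the i.i.d.\ case), while the $T/24$ cutoff handles regimes where the linear-regret bound dominates, via a direct construction forcing a constant error rate per round. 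Finally, Yao's minimax principle converts the randomized-instance lower bound into the worst-case bound against a fixed $f^\star\in\Fcal$ and a fixed $\sigma$-smooth adversary.
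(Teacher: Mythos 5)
There is a genuine gap, and it sits exactly where you flag it: the per-column mistake bound. Your concrete candidate class (singletons per column, $f_{\mathbf c}(n,i)=\1[n=c_i]$) cannot work: against it, the trivial learner that always predicts $0$ makes at most one mistake per column (a mistake occurs only when $n_t=c^\star_{i_t}$, after which the column's target is known), so no $\sigma$-smooth realizable adversary can force more than $d$ total mistakes, far below $\sqrt{dT/\sigma}$. You acknowledge that one must ``enrich the per-column class so that each mistake reveals only partial information,'' but that enrichment is the heart of the theorem and is not supplied; everything else in your sketch (the $d$-fold product structure, the budget $\sum_i T_i=T$, the derandomization over the hidden target) is scaffolding around this missing step.

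The paper closes the gap with a threshold class, $\Fcal=\{(k,x)\mapsto \1[x\geq\theta_k]:\boldsymbol\theta\in[0,1]^d\}$ on $[d]\times[0,1]$, and a binary-search adversary. In each column the ``informative'' query is always the midpoint of the current uncertainty interval, and its label is a fresh unbiased bit $\epsilon$; the learner therefore errs with probability $1/2$ on every informative query, yet the entire label sequence remains realizable by a threshold (the bits just steer the binary search). The smoothness budget is handled by a mixture: each round the adversary puts mass $1-q$ on a fixed safe point $\bar x$ and mass $q/d$ on each column's midpoint, caps the number of informative queries per column at $N=\lfloor qT/d\rfloor$, and the base measure is $\sigma\delta_{\bar x}+(1-\sigma)\nu$ with $\nu$ uniform on the at most $dN$ potential midpoints --- crucially these midpoints are a function of the bit string alone (not of the learner), so $\mu$ is fixed a priori. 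The density check $qN/(1-\sigma)\leq 1/\sigma$ is what forces $q=\sqrt{d(1-\sigma)/(\sigma T)}$, yielding roughly $qT/2$ expected mistakes, i.e.\ the $\frac{1}{12}\sqrt{dT(1-\sigma)/\sigma}$ bound (with the $T/24$ branch covering small $T$); a law-of-total-probability step over the bits then fixes a single $f^\star$, playing the role your appeal to Yao's principle was meant to play. Any completion of your proposal would need an analogous device --- a per-column target supporting $\approx N$ rounds of forced uncertainty with only $N$ reachable query points, all fixed independently of the learner --- which is precisely what the threshold/binary-search construction provides and the singleton class cannot.
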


\textsc{R-Cover} uses a somewhat complex recursive construction to achieve the optimal regret guarantee from \cref{thm:main_thm}. For intuition, we also describe in \cref{subsec:simpler_algo} a very simple and intuitive algorithm \textsc{Cover} that achieves (worse) sublinear regret without prior knowledge of the base measure. This algorithm which essentially corresponds to the single-depth version of \textsc{R-Cover} and for instance enjoys a $\approx T^{2/3}$ regret guarantee in classification, which may be of independent interest. Obtaining regret guarantees in regression for \textsc{Cover} is also possible with the same tools developed for \textsc{R-Cover} and we omit details for simplicity. 

We next turn to the general regression setting. At the high level, our algorithm for classification is generalized to regression by constructing $\epsilon$-coverings of the function class for some scale $\epsilon$ that is used as a parameter (for VC classes we simply use $\epsilon=0$). In practice, the optimal choice of the scale $\epsilon$ lies in $[1/T,1]$ and only depends on the growth of the fat-shattering dimensions of $\Fcal$. However, tuning this parameter $\epsilon$ can be fully side-stepped by performing any learning with expert advice algorithm using as experts the algorithms $\textsc{R-Cover}$ for different choice of parameters $\epsilon \in\{2^{-l},l\leq \log_2 T\}$. The resulting algorithm would enjoy the same regret guarantees as for the optimally-tuned algorithm.
The full version of our regret bound is stated in \cref{thm:oblivious_adversary}. For readability, we instantiate the bound for standard growth scenarios of the fat-shattering dimension.

\begin{theorem}\label{thm:main_regret_regression}
    Fix $T\geq 1$. Let $\Fcal:\Xcal\to[0,1]$ be a function class
    and suppose that $(x_t)_{t\geq 1}$ is a $\sigma$-smooth sequence on $\Xcal$ with respect to some unknown base measure $\mu$. 
    There exists a universal constant $C>0$ such that we have the following bounds on the oblivious regret of $\textsc{R-Cover}$, where we denote by $\hat y_t$ the predictions of the algorithm.

    If $\mathsf{fat}_\Fcal(r)\leq d\ln\frac{1}{r}$ for all $r>0$, then $\textsc{R-Cover}$ run with the parameter $\epsilon=1/T$ yields
    \begin{equation*}
        \Ebb\sqb{\sum_{t=1}^T \ell_t(\hat y_t ) } - \inf_{f\in\Fcal} \Ebb\sqb{\sum_{t=1}^T \ell_t(f(x_t) ) } \leq C \ln^3 T \sqrt{\frac{dT}{\sigma}}.
    \end{equation*}

    If $\mathsf{fat}_\Fcal(r)\lesssim r^{-p}$ for $p>0$, then $\textsc{R-Cover}$ run with the parameter $\epsilon = \paren{\frac{\ln T}{T}}^{\frac{1}{p+1}}$ yields
    \begin{equation*}
        \Ebb\sqb{\sum_{t=1}^T \ell_t(\hat y_t ) } - \inf_{f\in\Fcal} \Ebb\sqb{\sum_{t=1}^T \ell_t(f(x_t) ) } \leq C \frac{\ln^3 T}{\sqrt \sigma} \cdot T^{1-\frac{1}{2(p+1)}} \paren{1+ \tilde O(\sigma^{-\frac{1}{2}}T^{-\frac{\min(p,1)}{2(p+1)(p+2)}}) },
    \end{equation*}
    where $\tilde O$ only hides logarithmic factors in $T$.
\end{theorem}

Our analysis also provides high-probability versions of these bounds (see \cref{thm:oblivious_adversary}).
Note that the guarantee for classification from \cref{thm:main_thm} bounds the expected adaptive regret, while in the regression case, \cref{thm:main_regret_regression} bounds the expected oblivious regret. We leave open the question of whether one can achieve guarantees for the adaptive regret in regression.

In the rest of this section, we first construct in \cref{subsec:recursive_classification} the algorithm \textsc{R-Cover} instantiated for classification. This already provides most of the necessary intuitions and for ease of presentation, we defer the construction of the algorithm in the general regression case to \cref{subsec:description_algo_regression}. \textsc{R-Cover} requires a specific variant for a learning with expert advice algorithm defined in \cref{subsec:learning_with_experts}.

\subsection{Recursive construction of \textsc{R-Cover} for classification}
\label{subsec:recursive_classification}

In its simplest form, \textsc{R-Cover} divides the horizon $[T]$ in $K$ equal-length epochs and uses a learning with expert advice algorithm on each epoch on a subset of functions from $\Fcal$ that are representative from the data observed in previous epochs. 
While we can show that this achieves a sublinear regret (see \cref{subsec:simpler_algo} and \cref{thm:regret_simple_algo} for a detailed discussion), to achieve a $\approx\sqrt T$ regret, we need to use a recursive construction, which is parameterized by a depth parameter $P\geq 0$.

\comment{For this simpler version, we can for instance use the classical \emph{Hedge} algorithm \cite{cesa2006prediction} on the projection of $\Fcal$ on the queries observed on previous epochs.}

To ease the recursive construction, in addition to the start time $T_0$, the end time $T_1$, and the depth $P$ of the algorithm we introduce an additional parameter $S\subset \Xcal\times \{0,1\}$ which corresponds to some labeled dataset for previous queries: $S=\{(x_t,\tilde y_t),t\in [T_0]\}$ where $\tilde y_t\in\{0,1\}$ for all $t\in[T_0]$. We denote by $\textsc{R-Cover}_{T_0,T_1}^{(p)}(S)$ the corresponding algorithm. As an important constraint on $S$, the dataset must be \emph{realizable} by the class $\Fcal$. Formally, there must exist $f\in\Fcal$ such that $f(x)=y$ for all $(x,y)\in S$. Intuitively, this dataset incorporates prior information gathered on the problem. 

\comment{The final algorithm will correspond to the depth-$P$ recursive algorithm instantiated with $T_0=0$, $T_1=T$, and an empty dataset $S=\emptyset$.}

\paragraph{Recursive construction.}

\comment{
\begin{algorithm}[ht]

\caption{Recursive construction of $\textsc{R-Cover}_{T_0,T_1}^{(P)}(S)$}\label{alg:recursive_construction}

\LinesNumbered
\everypar={\nl}

\hrule height\algoheightrule\kern3pt\relax
\KwIn{depth $P\geq 0$, times $T_0\leq T_1$ satisfying $T_1-T_0\geq 2^P$, realizable dataset $S\subset \Xcal\times\{0,1\}$}

\vspace{3mm}

\lIf{$P=0$}{
    Fix $f_S\in\Fcal$ realizing dataset $S$ and predict $\hat y_t = f_S(x_t)$ for all $t\in(T_0,T_1]$
}
\Else{
    Fix $f_S\in\Fcal$ realizing dataset $S$ and let $T_{1/2}:=\floor{\frac{T_0+T_1}{2}}$

    \For{$\alpha\in\{0,1/2\}$ \tcp*[h]{running epoch $(T_\alpha,T_{\alpha+1/2}]$}  }{
        After iteration $T_\alpha$, construct all distinct realizable datasets $S_1,\ldots,S_r\subset \Xcal\times\{0,1\}$ obtained by adding labeled points $(x_t,\tilde y_t)_{t\in(T_0,T_\alpha]}$ for queries from previous epoch to $S$

        Perform $\textsc{A-Exp}$ (\cref{alg:exponentially_weighted}) on $(T_\alpha,T_{\alpha+1/2}]$ with experts $\set{\textsc{R-Cover}_{T_\alpha,T_{\alpha+1/2}}^{(P-1)}(S_{r'}),\;r'\in[r] } \cup\{f_S\}$
    }
    
}

\hrule height\algoheightrule\kern3pt\relax
\end{algorithm}
}

\begin{algorithm}[t]

\caption{Recursive construction of $\textsc{R-Cover}_{T_0,T_1}^{(P)}(S)$}\label{alg:recursive_construction}

\LinesNumbered
\everypar={\nl}

\hrule height\algoheightrule\kern3pt\relax
\KwIn{depth $P\geq 0$, times $T_0\leq T_1$ satisfying $T_1-T_0\geq 2^P$, realizable dataset $S\subset \Xcal\times\{0,1\}$}

\vspace{3mm}

Fix $f_S\in\Fcal$ realizing dataset $S$, i.e., $f_S(x)=y$ for all $(x,y)\in S$. Let $T_{1/2}:=\floor{\frac{T_0+T_1}{2}}$

\uIf{$P=0$}{
    \lFor{$t \in (T_0,T_1]$}{
        Predict $\hat y_t = f_S(x_t)$}
}
\Else{
    \For{$t\in(T_0,T_{1/2}]$}{
        Follow predictions of $\textsc{A-Exp}$ (\cref{alg:exponentially_weighted}) run on $(T_0,T_{1/2}]$ (initialized at $t=T_0+1$) and with experts $\set{\textsc{R-Cover}_{T_0,T_{1/2}}^{(P-1)}(S),f_S }$
        
    }

    Construct all distinct realizable datasets $S_1,\ldots,S_r\subset \Xcal\times\{0,1\}$ obtained by adding labeled points $(x_t,\tilde y_t)_{t\in(T_0,T_{1/2}]}$ for queries from the previous epoch to $S$, as defined in \cref{eq:def_construct_datasets}
    
    \For{$t\in(T_{1/2}, T_1]$}{
        Follow predictions of $\textsc{A-Exp}$ (\cref{alg:exponentially_weighted}) run on $(T_{1/2},T_1]$ (initialized at $t=T_{1/2}+1$) and with experts $\set{\textsc{R-Cover}_{T_{1/2},T_1}^{(P-1)}(S_{r'}),\;r'\in[r] } \cup\{f_S\}$
    }
}

\hrule height\algoheightrule\kern3pt\relax
\end{algorithm}

For the base depth $P=0$, given start and end times $T_0<T_1$ and a dataset $S$, the algorithm simply selects any arbitrary function $f_S\in\Fcal$ that agrees on the query set, that is $f_S(x)=y$ for all $(x,y)\in S$, and uses it as prediction at all times in $[T]$. We recall that $S$ intuitively encodes prior information gathered on the prediction problem, hence, the algorithm for $P=0$ simply corresponds to following this prior information irrespective of the information obtained during the learning process on $(T_0,T_1]$. For larger depths, the algorithm will make use of this information proceeding by epochs.

Suppose that we have defined all algorithms for depth $P-1$. Fix $T_0\leq T_1$ with $T_1-T_0\geq 2^P$, and a labeled dataset $S$. We also fix a function $f_S\in\Fcal$ realizing $S$ which will serve as base prediction function. 
First define $T_{1/2}:=\floor{(T_0+T_1)/2}$. This time divides the interval $(T_0,T_1]$ in two epochs $(T_0,T_{1/2}]$ and $(T_{1/2},T_1]$ of roughly equal length. Note that by construction, each epoch has length at least $2^{P-1}$. The algorithm proceeds separately on each epoch and more precisely aims to incorporate the information gathered on the first epoch $(T_0,T_{1/2}]$ when making predictions during the second epoch $(T_{1/2},T_1]$. The procedure is detailed below.
\begin{itemize}
    \item On the first epoch $(T_0,T_{1/2}]$, the algorithm performs a learning with expert advice algorithm using as experts the predictions of the previous-depth algorithm $\textsc{R-Cover}_{T_0,T_{1/2}}^{(P-1)}(S)$ as well as the base expert $f_S$. Note that the depth-$(P-1)$ $\textsc{R-Cover}$ algorithm uses exactly the same dataset $S$ and is run on the same epoch $(T_0,T_{1/2}]$.
    \item On the second epoch $(T_{1/2},T_1]$, the algorithm also performs a learning with expert advice algorithm but uses a larger set of experts, which will correspond to depth-$(P-1)$ $\textsc{R-Cover}$ algorithms run with different datasets. Precisely, at the beginning of the epoch (before the prediction at time $t=T_{1/2}+1$), we consider all possible ways to label the queries $(x_t)_{t\in(T_0,T_{1/2}]}$ such that together with $S$ the increased dataset is still realizable by the function class $\Fcal$. Formally, we enumerate all distinct datasets $S_1,\ldots,S_r$ such that
    \begin{multline}\label{eq:def_construct_datasets}
        \set{S_{r'},r'\in[r]} :=\set{S'=S\cup \set{ (x_t,\tilde y_t),t\in(T_0,T_{1/2}] } : \tilde y_t\in\{0,1\},t\in(T_0,T_{1/2}]}  \\
        \cap \set{ S'\subset \Xcal\times\{0,1\} : \exists f\in\Fcal, \forall (x,y)\in S', f(x)=y  }.
    \end{multline}
    This corresponds to all possible ways to extend the prior dataset $S$ on the observed data during the previous epoch. Intuitively, if the set $S$ was a good prior, in the sense that its labeled points coincide with the predictions of some optimal function $f^\star\in\Fcal$, one of the datasets $S_1,\ldots,S_r$ is still a good prior and now contains additional labels from queries on the previous epoch $(T_0,T_{1/2}]$. Because one does not know a priori which is the best dataset, we perform a learning with expert advice algorithm on the epoch $(T_{1/2},T_1]$ using the expert predictions from $\textsc{R-Cover}_{T_{1/2},T_1}^{(P-1)}(S_{r'})$ for all $r'\in[r]$, as well as the base expert $f_S$.
 \end{itemize}

For our purposes, we need a specific learning with expert advice algorithm $\textsc{A-Exp}$ (see \cref{alg:exponentially_weighted}). We defer its presentation to \cref{subsec:learning_with_experts} for readability.
This concludes the construction of the algorithm for depth $P$, horizon $T$, and dataset $S$, which is summarized in \cref{alg:recursive_construction}.

To simplify the analysis, we can unify the description and notations for the algorithm on both epochs $(T_0,T_{1/2}]$ and $(T_{1/2},T_1]$. We can check that for both epochs $(T_\alpha,T_{\alpha+1/2}]$ for $\alpha\in\{0,1/2\}$, the algorithm first constructs all distinct realizable datasets $S_1,\ldots,S_r$ obtained by adding labels for queries $(x_t)_{t\in (T_0,T_\alpha]}$ to $S$, then runs $\textsc{A-Exp}$ on the set of experts $\textsc{R-Cover}_{T_{\alpha},T_{\alpha+1/2}}^{(P-1)}(S_{r'})$ for all $r'\in[r]$, as well as the base expert $f_S$. For the first epoch we in fact have $r=1$ since there are no labeled queries to add to $S$. 
Importantly, because $\Fcal$ has VC dimension $d$, we always have $r\leq 2T^d+1$ from Sauer-Shelah's \cref{lemma:sauer_lemma}. The additional expert comes from the fact that we also added $f_S$ as expert.

\paragraph{Final algorithm.} 
We pose $P:=\floor{\log_2(T)}$. The final algorithm is then simply $\textsc{R-Cover}_{0,T}^{(P)}(\emptyset)$, that is, we initialize the depth-$P$ algorithm with an empty dataset.

\subsection{Learning with expert advice algorithm}
\label{subsec:learning_with_experts}
Instead of using the standard exponential weights algorithm \cite{cesa2006prediction} for learning with expert advice, we introduce a variant.
We briefly recall the setup of prediction with $K$ experts and fixed horizon $T$ that is relevant to our present discussion. At each iteration $t\in[T]$, the environment chooses losses $\ell_{t,i}$ for each experts $i\in[K]$. The learner then selects an expert $\hat i_t\in[K]$ potentially randomly without knowledge of the losses at time $t$. Then, all losses at time $t$ are revealed to the learner and they incur the loss $\ell_{t,\hat i_t}$ of the selected expert. The goal of the learner is to minimize the regret:
\begin{equation*}
    \text{Reg}(T):=\sum_{t=1}^T \ell_{t,\hat i_t} - \min_{i\in[K]}\sum_{t=1}^T \ell_{t,i}.
\end{equation*}

The classical \emph{exponential weights forecaster} or \emph{Hedge} algorithm (see e.g. \cite{cesa2006prediction}) with parameter $\eta>0$ proceeds as follows. At time $t$, it computes the cumulative regret compared to each expert up to time $t$: $R_{t-1,i}:=\sum_{l=1}^{t-1} \ell_{l,\hat i_l} - \ell_{l,i}$ for all $i\in[K]$. It then randomly samples $\hat i_t\sim p_t$ where the distribution $p_t=(p_{t,i})_{i\in[K]}$ is defined via exponential weights
$p_{t,i}:=e^{\eta R_{t-1,i}}/\sum_{j\in[K]}e^{\eta R_{t-1,j}}.$
We denote by $\Fcal_t=(\ell_{l,i},l\leq t,i\in[K], \hat i_l,l<t)$ the history up to time $t$ included.
Hedge enjoys the following classical bound (see e.g. \cite[Corollary 2.2]{cesa2006prediction}):
\begin{equation*}
   \text{PReg}(T):=\sum_{t=1}^T \Ebb_{\hat i_t}[\ell_{t,\hat i_t}\mid\Fcal_t] - \min_{i\in[K]}\sum_{t=1}^T \ell_{t,i} \leq \frac{\ln K}{\eta } + \frac{T\eta}{2}.
\end{equation*}
We will refer to the quantity on the left-hand side as the pseudo-regret $\text{PReg}(T)$. 
Using the standard choice of parameter $\eta=\sqrt{2\ln K/T}$, and assuming that the losses all have values in $[0,1]$, the previous equation directly gives an expected bound on the regret $\Ebb[\text{Reg}(T)]\lesssim \sqrt{T\ln K}$. For our purposes, we need a refinement of this bound. Using \cite[Theorem 2.1]{cesa2006prediction}, we can obtain
\begin{equation}\label{eq:base_regret_bound}
    \text{PReg}(T) \leq \frac{\ln K}{\eta } + \frac{\eta}{2} \sum_{t=1}^T \sum_{i\in[K]} p_{t,i} r_{t,i}^2,
\end{equation}
where $r_{t,i}:=\ell_{t,\hat i_t}-\ell_{t,i}$ is the instantaneous regret of the forecaster compared to expert $i$. Denote
\begin{equation*}
    \Delta_T:= \sum_{t=1}^T \sum_{i\in[K]} p_{t,i}r_{t,i}^2=\sum_{t=1}^T \Ebb_{\hat i_t}[r_{t,\hat i_t}^2 \mid\Fcal_t].
\end{equation*}

\cref{eq:base_regret_bound} yields a tighter bound than the standard regret bound if one selects $\eta\approx \sqrt{\ln K/\Delta_T}$ instead of the standard choice $\eta\approx\sqrt{\ln K/T}$. Achieving the corresponding bound without a prior knowledge of $\Delta_T$ can be easily performed via the standard doubling trick. Precisely, we use the exponential weights forecaster with initial parameter $\eta_1\approx \sqrt{2\ln K}$ until $\Delta_t\geq 1$, then restart the algorithm with a parameter $\eta_2\approx \eta_1/2$ until $\Delta_t\geq 4$. We continue the process by always restarting the algorithm with a quadrupled threshold for $\Delta$ and a corresponding parameter $\eta>0$ (roughly halved). The precise algorithm is given in \cref{alg:exponentially_weighted}, which is the exponential weights forecaster variant that we use for our algorithm \textsc{R-Cover}.
This variant enjoys the following regret bound, whose proof is given in \cref{sec:proof_learning_expert}.

\begin{algorithm}[ht]

\caption{Adaptive exponential weights forecaster \textsc{A-Exp}}\label{alg:exponentially_weighted}

\LinesNumbered
\everypar={\nl}

\hrule height\algoheightrule\kern3pt\relax
\KwIn{number of experts $K$}

\vspace{3mm}

Initialization: $k=1$, $\Delta_{max,1}=1$, $\eta_1=\sqrt{2\ln K/(\Delta_{max,1}+1)}$, $R_{0,i}=0$ for all $i\in[K]$, $\Delta_1=0$

\For{$t\geq 1$}{
    Let $p_{t,i}= e^{\eta_k R_{t-1,i}}/\sum_{j\in[K]}e^{\eta_k  R_{t-1,j}}$ and sample $\hat i_t\sim p_t$ independently from history

    Observe $\ell_{t,i}$ for $i\in[K]$, let $r_{t,i}=\ell_{t,\hat i_t}-\ell_{t,i}$ and $R_{t,i}=R_{t-1,i}+r_{t,i}$ for $i\in[K]$
    
    Update $\Delta_k \gets \Delta_k + \sum_{i\in[K]}p_{t,i}r_{t,i}^2$

    \If{$\Delta_k>\Delta_{max,k}$}{
        Set $\Delta_{max,k+1}=4\Delta_{max,k}$ and $\eta_{k+1} = \sqrt{2\ln K/(\Delta_{max,k+1}+1)}$

        Reset $R_{t,i}=0$ for all $i\in[K]$, $\Delta_{k+1}=0$, and $k\gets k+1$
    }
}

\hrule height\algoheightrule\kern3pt\relax
\end{algorithm}

\begin{lemma}\label{lemma:regret_exponentially_weighted}
Suppose that all losses lie in $[0,1]$.
Then, the pseudo-regret of the adaptive exponential weights forecaster \textsc{A-Exp} satisfies
    \begin{equation*}
    \textnormal{PReg}(T) \leq 8\sqrt{\max(\Delta_T,1) \ln K }, \quad T\geq 1.
\end{equation*}
Further, for $T\geq 1$ and $\delta\in(0,1)$, with probability at least $1-\delta$ we have
\begin{equation*}
    \textnormal{Reg}(T) \leq 12\sqrt{\max(\Delta_T,1) \ln K } + 2\ln\frac{1}{\delta}.
\end{equation*}
\end{lemma}

\section{Technical overview}
\label{sec:technical_overview}

As discussed above, the classification setting will be mostly sufficient to present our main proof ideas. Hence, in this section we mostly focus on this case.

\subsection{A simple algorithm for a weaker regret guarantee}
\label{subsec:simpler_algo}

To motivate the form of \textsc{R-Cover}, we first consider a significantly simpler algorithm which essentially corresponds to \textsc{R-Cover} with depth $1$.
In this simplest form, \textsc{R-Cover} subdivides the horizon $[T]$ into $K$ equal-length epochs and a learning with expert advice algorithm on each epoch on the projection of the function class $\Fcal$ on query points $x_t$ from prior epochs. For instance, we can use the classical exponential weights forecaster algorithm (e.g. see \cite{cesa2006prediction}). This simplified algorithm which we call \textsc{Cover} is summarized in \cref{alg:simplified_algo}.

\begin{algorithm}[ht]

\caption{Construction of the \textsc{Cover} algorithm} \label{alg:simplified_algo}
\LinesNumbered
\everypar={\nl}

\hrule height\algoheightrule\kern3pt\relax
\KwIn{horizon $T$, number of epochs $K\leq T$}

\vspace{3mm}

Let $T_k = \floor{k\frac{T}{K}}$ for $k\in\{0,\ldots,K\}$.

\For{$k\in[K]$}{
    Construct a minimal-size cover $S_k\subset \Fcal$ such that for any $f\in\Fcal$ there exists $g\in S_k$ with $f(x_s)=g(x_s)$ for $s\in[T_{k-1}]$

    For iterations $t\in(T_{k-1},T_k]$, run any learning with expert advice algorithm (e.g. Hedge) with expert set $S_k$
}

\hrule height\algoheightrule\kern3pt\relax
\end{algorithm}

We can show that with a convenient choice of the number of epochs $K\approx T^{1/3}$, \textsc{Cover} already achieves a $\approx T^{2/3}$ regret guarantee without any prior knowledge on the distribution $\mu$. Given the simplicity of \textsc{Cover}, this result may be of independent interest.

\begin{theorem}\label{thm:regret_simple_algo}
    Fix $T\geq 1$. Let $\Fcal:\Xcal\to\{0, 1\}$ be a function class with VC dimension $d$. Suppose that $(x_t)_{t\geq 1}$ is a $\sigma$-smooth sequence on $\Xcal$ with respect to some unknown base measure $\mu$. Then, $\textsc{Cover}$ run with parameter $K=\lfloor\ln  T \cdot (T/d)^{1/3} \sigma^{-2/3} \rfloor$ makes predictions $\hat y_t$ such that
    \begin{equation*}
        \Ebb\sqb{\sum_{t=1}^T \ell_t(\hat y_t) - \inf_{f\in\Fcal} \sum_{t=1}^T \ell_t(f(x_t)) } \leq  C 
        \ln^2 T  \paren{\frac{d T^2}{\sigma}}^{1/3}.
    \end{equation*}
    for some universal constant $C>0$.
\end{theorem}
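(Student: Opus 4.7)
The plan is to split the expected regret of \textsc{Cover} into the within-epoch Hedge regret plus a ``cover bias'' by inserting the best expert in $S_k$: writing $L_k(g):=\sum_{t\in(T_{k-1},T_k]}\ell_t(g(x_t))$,
\begin{align*}
\Ebb\sqb{\text{Reg}(T)}\;\le\;&\sum_{k=1}^K \Ebb\sqb{\sum_{t\in(T_{k-1},T_k]}\ell_t(\hat y_t)-\min_{g\in S_k} L_k(g)}\\
&+\sum_{k=1}^K \Ebb\sqb{\min_{g\in S_k} L_k(g) - L_k(f^\star)}.
\end{align*}
For the first (Hedge) sum, Sauer--Shelah (\cref{lemma:sauer_lemma}) gives $|S_k|\le 2T_{k-1}^d+1\lesssim(eT/d)^d$, so $\ln|S_k|\lesssim d\ln T$, and applying \cref{lemma:regret_exponentially_weighted} to \textsc{A-Exp} on each epoch of length at most $T/K$ yields a per-epoch pseudo-regret of $\Ocal(\sqrt{(T/K)d\ln T})$, summing to $\Ocal(\sqrt{KTd\ln T})$.

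For the second (bias) sum, $S_k$ contains a representative $g_k^\star\in\Fcal$ of the equivalence class of $f^\star$ on the past queries, so $\min_{g\in S_k}L_k(g)\le L_k(g_k^\star)$ and, since losses are $1$-Lipschitz and binary-valued, the per-epoch bias is at most $\sum_{t\in(T_{k-1},T_k]}\1[g_k^\star(x_t)\neq f^\star(x_t)]$. Taking conditional expectations over the epoch samples given the history and using $\mu_t(A)\le\mu(A)/\sigma$ (\cref{def:smooth_sequence}), this is bounded in expectation by $(T/K)\min\{1,\mu(g_k^\star\Delta f^\star)/\sigma\}$. Noting $g_k^\star\Delta f^\star\subseteq D_k:=\bigcup_{g\in\Gcal_k}(g\Delta f^\star)$ where $\Gcal_k=\{g\in\Fcal:g=f^\star\text{ on past samples}\}$ is the version space at the start of epoch $k$, the heart of the argument is to amortize $\sum_k\min\{1,\mu(D_k)/\sigma\}=\tilde{\Ocal}(1/\sigma)$ in expectation.

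This key amortization---also the main obstacle---proceeds via a potential/shrinkage argument. When $\mu(D_k)\ge\sigma$, the adversary can attack at full intensity by placing $\mu_t$ on a $\mu$-measure-$\sigma$ subset of $D_k$, incurring bias $T/K$ for that epoch; but coupling the epoch's $T/K$ smooth samples with $(T/K)\ln(T)/\sigma$ i.i.d.\ draws from $\mu$ (so that the smooth samples are contained in the coupled sample with probability $1-1/\mathrm{poly}(T)$) and invoking the realizable Vapnik--Chervonenkis inequality on the coupled i.i.d.\ sample shows $\mu(D_{k+1})\le\mu(D_k)-\Omega(\sigma)$ once $dK/T$ is small enough, as is the case at the optimal $K$. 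Since $\mu(D_1)\le 1$, there can be at most $\tilde{\Ocal}(1/\sigma)$ such ``attack'' epochs, contributing $\tilde{\Ocal}(T/(K\sigma))$ in total; the remaining epochs have $\mu(D_k)<\sigma$ and contribute a geometric tail of the same order.

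Combining both parts, $\Ebb\sqb{\text{Reg}(T)}=\Ocal(\sqrt{KTd\ln T}+T\ln T/(K\sigma))$. Balancing at $K\asymp \ln T\,(T/d)^{1/3}\sigma^{-2/3}$ recovers the claimed $\Ocal(\ln^2 T\,(dT^2/\sigma)^{1/3})$ bound. The principal technical difficulty is the version-space shrinkage lemma: the direct smoothness bound $\mu_t\le\mu/\sigma$ does not by itself control $\mu(D_k)$ under an adversarial smooth sequence, so one needs a smoothed-VC coupling argument to extract the right per-epoch shrinkage despite the adversary's ability to avoid informative regions.
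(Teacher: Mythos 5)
Your decomposition into per-epoch Hedge regret plus the ``cover bias'' of the representative of $f^\star$, and the final balancing over $K$, coincide with the paper's proof, and the Hedge part is fine. The gap is in the heart of the argument: the amortization of the bias via the potential $\mu(D_k)$. Two steps there do not hold. First, the claimed shrinkage $\mu(D_{k+1})\le\mu(D_k)-\Omega(\sigma)$ cannot be obtained from the coupling of \cref{lemma:coupling} plus a realizable VC bound: the version space $\Gcal_k$ is only constrained to agree with $f^\star$ on the \emph{actual} queries $x_t$, not on the unobserved coupled i.i.d.\ draws $Z_{t,j}$, so the realizable Vapnik--Chervonenkis inequality on the coupled sample says nothing about $\mu(D_{k+1})$. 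The coupling only yields one-sided statements of the form ``a nonnegative function that is small on the i.i.d.\ sample is small on the smooth queries,'' which is how the paper uses it in \cref{subsec:oblivious_to_adaptive}; the reverse direction you need is exactly what a smooth adversary can defeat. Second, the target amortization $\sum_k\min\{1,\mu(D_k)/\sigma\}=\tilde\Ocal(1/\sigma)$ is false in general, and the ``geometric tail'' for epochs with $\mu(D_k)<\sigma$ has no mechanism behind it: take thresholds on $[0,1]$, $\mu$ uniform, $f^\star\equiv 0$, and an adversary whose $\mu_t$ is uniform on $[0,\sigma]$ forever. Then every query lies in $[0,\sigma]$, the disagreement region of the version space always contains $(\sigma,1)$, so $\mu(D_k)\ge 1-\sigma$ for all $k$ and your bound gives $\sum_k\min\{1,\mu(D_k)/\sigma\}=K$, i.e.\ a vacuous (linear) bias bound, even though the true bias in this example is small. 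The static quantity $\mu(D_k)/\sigma$ is simply the wrong potential: what must be amortized is the \emph{realized} conditional disagreement $\gamma_{T_{k-1}}(t)=\sup\Pbb_{x\sim\mu_t}(f(x)\neq g(x))$ of \cref{eq:def_gamma_t_t_0}, which reflects the adversary's actual choices rather than a worst case over $\mu_t$.

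This is precisely what the paper's \cref{prop:simplified} (proved via \cref{lemma:main_bound_modified}) provides: for every threshold $q$, at most $\tilde\Ocal(1/(q\sigma))$ epochs have $\sum_{t}\gamma_{T_{k-1}}(t)\1[\gamma_{T_{k-1}}(t)\ge q]$ exceeding $w\approx d\ln(T/\sigma)$, and its proof is not a measure-shrinkage induction but a tangent-sequence decoupling argument (\cref{lemma:decoupling_stronger}) combined with a Wills-functional/VC self-bounding estimate (\cref{lemma:block_whp}), followed by a dyadic decomposition over $q$ in the regret proof. If you want to repair your argument you would need to replace your shrinkage lemma by a statement of this type; the $\Omega(\sigma)$-per-epoch potential drop is not available. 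A secondary omission: the theorem bounds the \emph{adaptive} regret (the infimum over $f\in\Fcal$ sits inside the expectation), whereas your decomposition fixes $f^\star$ and only controls the oblivious regret; the paper closes this gap with high-probability oblivious bounds, a union bound over a $\mu$-cover of $\Fcal$, and the coupling plus relative VC bounds of \cref{subsec:oblivious_to_adaptive}, a step absent from your proposal.
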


We formally prove this result in \cref{sec:simpler_proofs}. In this section, our goal is mostly to give key intuitions about the underlying strategy for the full algorithm \textsc{R-Cover}. To give some insights into why \textsc{Cover} already achieves sublinear regret, note that if the queries prior to some epoch $(T_{k-1},T_k]$ are ``representative'' of the queries during this epoch, then the cover $S_k$ constructed at the beginning of the epoch (line 3 of \cref{alg:simplified_algo}) is a good representative set of relevant functions. Naturally, this holds if the underlying process $(x_t)_{t\in[T]}$ is i.i.d.---that is $\sigma=0$. The crux of our analysis is to show that when the adversary is $\sigma$-smooth this still holds in an amortized sense:
\comment{Note that it is not true that the queries $(x_t)_{t\leq T_{k-1}}$ observed prior to some epoch $(T_{k-1},T_k]$ are always representative of the queries during that epoch. Indeed, a $\sigma$-smooth adversary can for instance decide to have the sequence of distributions $(\mu_t)_{t\in[T]}$ adaptively switch from one distribution to a completely unrelated one up to $\floor{1/\sigma}$ times. However, we show that the number of epochs for which prior queries $(x_t)_{t\leq T_{k-1}}$ are not representative of the queries on the epoch $(T_{k-1},T_k]$ is bounded.}
we show that the number of epochs for which prior queries $(x_t)_{t\leq T_{k-1}}$ are not representative of the queries on the epoch $(T_{k-1},T_k]$ is bounded.

To quantify the notion of ``representativeness'', we introduce the following quantity, which is essentially the maximum $\ell_1$ discrepancy between queries observed until some time $t_0 <t$ and the query made at time $t$ on the function class $\Fcal$. For any $0\leq t_0 <t\leq T$, we define
\begin{equation}\label{eq:def_gamma_t_t_0}
    \gamma_{t_0}(t) := \sup_{\substack{f,g\in\Fcal \text{ s.t.} \\ f(x_s)=g(x_s),\, s\in[t_0]}} \Pbb\paren{f(x_t) \neq g(x_t) \mid \Hcal_{t-1}}  =\sup_{\substack{f,g\in\Fcal \text{ s.t.} \\ f(x_s)=g(x_s),\, s\in[t_0]}} \Pbb_{x\sim\mu_t} \paren{f(x) \neq g(x)}, 
\end{equation}
where $\Hcal_{t-1}$ denotes all history available until the end of iteration $t-1$.
One of our main contributions for the analysis of smoothed adversaries is the following result which bounds the number of epochs on which prior history is not representative. 

\comment{
Intuitively, if the queries prior to $t_0$ were representative of the query at time $t$, then the empirical projection of $\Fcal$ onto the query set $(x_t)_{t\leq t_0}$ should reasonably cover $x_t\sim\mu_t$ and as a result $\gamma_{t_0}(t)$ would be smaller.
}

\begin{proposition}\label{prop:simplified}
    Let $T\geq 2$ and $\Fcal:\Xcal\to\{0, 1\}$ be a function class with VC dimension $d$. Let $0=T_0<T_1<\ldots <T_K=T$ define epochs.
    Fix any parameters $q,\delta\in (0,1]$ and denote $w(T,\delta):=d\ln\paren{\frac{T}{\sigma}\ln\frac{1}{\delta}} + \ln\frac{T}{\delta}+2$. Then, with probability at least $1-\delta$,
    \begin{equation*}
        \abs{ \set{k\in[K]: \sum_{t=T_{k-1}+1}^{T_k} \gamma_{T_{k-1}}(t) \cdot \1[\gamma_{T_{k-1}}(t) \geq q] \geq w(T,\delta)  }} \leq C \frac{\ln^2 T}{q\sigma},
    \end{equation*}
    for some universal constant $C\geq 1$. For a bound in expectation we can take $w(T):=d\ln\frac{T}{\sigma} +2$.
\end{proposition}
The proof of \cref{prop:simplified} uses some key results from \cite{block2024performance}. At the high-level, we focus on a sub-sequence of $(x_t)_{t\in[T]}$ by only keeping times which have a significant contribution $\gamma_{T_{k-1}}(t)\geq q$. Following arguments from \cite{block2024performance}, we next apply a bound (\cref{lemma:decoupling_stronger}) to decouple the total sum of the terms $\gamma_{T_{k-1}}(t)$ via a tangent sequence to $(x_t)_{t\in[T]}$ which is then bounded by the Wills functional of the function class $\Fcal$ (\cref{lemma:block_whp}). We defer a detailed sketch of proof to \cref{sec:simpler_proofs}.

\cref{prop:simplified} shows that apart from $\tilde \Ocal(1/(q\sigma))$ epochs, we pay at most a price $w(T,\delta)$ during each epoch $(T_{k-1},T_k]$ for the times $t\in (T_{k-1},T_k]$ when the cover constructed from queries prior to this epoch was not representative of query $x_t$ by some threshold $q$. Here, we view $w(T,\delta)$ as a reasonable price to pay on each epoch. Hence, intuitively, apart from $\tilde \Ocal(1/(q\sigma))$ epochs, the cover constructed from queries on prior epochs is always representative up to threshold $q$.

Up to the logarithmic factors, \cref{prop:simplified} is tight in the following sense. For any choice of the online mechanism to construct epochs and threshold $q$, a $\sigma$-smooth adversary can ensure that for $\Ocal(1/(q\sigma))$ epochs $(T_{k-1},T_k]$, queries on prior epochs are not representative up to threshold $q$ from all times in $(T_{k-1},T_k]$. We detail below the scenarios for which \cref{prop:simplified} is tight. 
\comment{We believe that these essentially captures all possible attack behaviors of a smooth adversary.}

\comment{

\begin{proposition}\label{prop:tightness_of_adversary_construction}
    There exists a function class $\Fcal:\Xcal\to\{0, 1\}$ with VC dimension $1$ such that the following holds. For any $T\geq 1$, $\sigma,q\in(0,1]$ and any online mechanism to construct epochs $(T_{k-1},T_k]$ for $k\in[K]$ as defined in \cref{prop:simplified}. Then,
    \begin{equation*}
        \abs{\set{k\in[K]: \forall t\in(T_{k-1},T_k], \gamma_{T_{k-1}}(t) \geq q  }} \geq \min\paren{ \frac{c}{q\sigma} , K},\quad (a.s.),
    \end{equation*}
    for some universal constant $c>0$.
\end{proposition}

While this is not quite needed to prove our main results \cref{thm:main_thm,thm:main_regret_regression}, the proof of \cref{prop:tightness_of_adversary_construction} is quite instructive hence we give below the main ideas. We believe that it essentially captures all possible attack behaviors of a smooth adversary.

\paragraph{Sketch of proof.} 

}

Because of the $\sigma$-smoothness constraint, the adversary cannot query the algorithm on completely different regions of the space $\Xcal$ at each epoch. One possible strategy for the adversary
\comment{, which we discussed as motivation above, 
}is to switch distributions $\floor{1/\sigma}$ times during the learning process, possibly onto a completely new region of the space. This corresponds to $q=1$ in \cref{prop:simplified}: at the start of $\floor{1/\sigma}$ epochs $(T_{k-1},T_k]$, the adversary switches query distributions $\mu_t$ and selects a distribution with support on a new region for which prior queries are irrelevant. This results in $\gamma_{T_{k-1}}(t)=1$ for all $t\in(T_{k-1},T_k]$. 

A more refined strategy for the adversary is to select a parameter $q$ and at the start of a new epoch $(T_{k-1},T_k]$, switch the query distribution as follows. They construct a new mixture distribution $\mu_k:= q\nu_k + (1-q)\mu_0$ where with probability $q$ the learner is queried on a new distribution $\nu_k$ with say completely new support compared to the history, and with probability $1-q$ the learner is queried on a base measure $\mu_0$ that is very similar to previous queries. This results in $\gamma_{T_{k-1}}(t)\geq q$ for all $t\in(T_{k-1},T_k]$. On one hand, during the epoch, the adversary could only test the learner on a fraction $q$ of ``truly adversarial'' queries sampled from $\mu_k$. On the other hand, the smoothness constraint is now easier to satisfy and we can check that the adversary can afford to corrupt $\approx 1/(q\sigma)$ epochs. This precisely corresponds to the bound from \cref{prop:simplified} up to logarithmic factors. As it turns out, this mixture strategy is stronger for the adversary and choosing $q\approx 1/\sqrt T$ is the strategy that yields the lower bound from \cref{thm:lower_bound}. 

\comment{
In all cases, these strategies can be easily instantiated with the simple class of thresholds on $[0,1]$ that is $\Fcal=\{x\in[0,1]\mapsto \1[x\geq x_0]: x_0\in[0,1]\}$, which has VC dimension $1$. This is what we use in the proof of \cref{prop:tightness_of_adversary_construction} for simplicity.
}

\begin{remark} 
The statement from \cref{prop:simplified} is written specifically for classification, for which analyzing the $\ell_1$ diameter as defined in $\gamma_{t_0}(t)$ in \cref{eq:def_gamma_t_t_0} is amenable. The proof of \cref{prop:simplified} requires controlling the complexity of the class $\{\1[f\neq g]:f,g\in\Fcal\}$ which has VC dimension bounded by $2d$ if $\Fcal$ has VC dimension $d$. While the VC dimension behaves nicely with this self-difference operation, this is not the case for the fat-shattering dimension which is known to behave somewhat wildly with the addition \cite{asor2014additive}.\footnote{\cite{asor2014additive} notes that the function class $\Fcal$ of increasing functions on $[0,1]$ always has fat-shattering dimension one at any scale, while $\Fcal-\Fcal=\{f-g:f,g\in\Fcal\}$ has infinite shattering dimension at all scales.} For the regression setting, we need to localize this difference class around an oblivious benchmark function $f^\star$. The localized analog of $\gamma_{t_0}(t)$ that we use in our proofs is defined in \cref{eq:definition_gama}. The corresponding generalization of \cref{prop:simplified} is \cref{lemma:main_bound_modified}. In this general regression setting, the term in $w(T,\delta)$ from \cref{prop:simplified} depending on the VC dimension $d$ is replaced by the Wills functional of $\Fcal$.
\end{remark}

With the main tool \cref{prop:simplified} at hand, we can easily prove a simpler version of \cref{thm:regret_simple_algo} for the expected oblivious regret. Fix some benchmark function $f^\star\in\Fcal$. On each epoch $k\in[K]$, \textsc{Cover} runs a learning with expert advice algorithm on the cover $S_k$, which has size $\Ocal(T^d)$ by Sauer-Shelah's \cref{lemma:sauer_lemma}. Hence, using classical regret bounds (e.g. \cite[Corollary 2.2]{cesa2006prediction}), the total expected regret incurred by these algorithms is bounded by
\begin{equation}\label{eq:first_regret_term}
    C\sum_{k\in[K]} \sqrt{ (T_k-T_{k-1}) \cdot d\ln T} \lesssim \sqrt{KdT\ln T},
\end{equation}
for some constant $C\geq 1$, where we used Jensen's inequality in the last inequality. Next, for each epoch $k\in[K]$, denote by $f_k\in S_k$ the function in the cover that had the correct labeling for $f^\star$, that is, $f_k(x_t) = f^\star(x_t)$ for all $t\in[T_{k-1}]$.
Because $f_k$ is one of the experts considered during epoch $k$, it suffices to bound the remaining term
\comment{
\begin{equation*}
    \sum_{k\in[K]} \sum_{t=T_{k-1}+1}^{T_k} \ell_t(f_k(x_t)) - \ell_t(f^\star(x_t)) \leq \sum_{k\in[K]} \sum_{t=T_{k-1}+1}^{T_k} \1[f_k(x_t) \neq f^\star(x_t)].
\end{equation*}
Taking the expectation of each term for $x_t$ conditionally on the history $\Hcal_{t-1}$, we obtain
}
\begin{equation*}
    \Ebb\sqb{\sum_{k\in[K]} \sum_{t=T_{k-1}+1}^{T_k} \ell_t(f_k(x_t)) - \ell_t(f^\star(x_t))} \leq \Ebb\sqb{\sum_{k\in[K]} \sum_{t=T_{k-1}+1}^{T_k}  \gamma_{T_{k-1}}(t)}, 
\end{equation*}
since the functions $f_k$ and $f^\star$ agreed on all queries of previous epochs. We can then use \cref{prop:simplified} which bounds the sum for each epoch $k\in[K]$. Applying \cref{prop:simplified} for $q\geq q_0 :=\ln^2 (T)/(K\sigma)$ bounds the number of epochs for which this sum deviates significantly. At the high level, it implies that the quantities $\gamma_{T_{k-1}}(t)$ are roughly of order $q_0$ in average. Precisely, letting $\Delta T:=\max_{k\in[K]} T_k-T_{k-1}=\Ocal(T/k)$ and $l_0:=\floor{\log_2(1/q_0)} = \Ocal(\log T)$, \cref{prop:simplified} implies that with probability at least $1-(l_0+1)\delta$,
\begin{align}\label{eq:second_regret_term_expanded}
    \sum_{k\in[K]} \sum_{t=T_{k-1}+1}^{T_k}  \gamma_{T_{k-1}}(t)
    &\leq q_0 T + \sum_{l=0}^{l_0} \sum_{k\in[K]} \overbrace{\sum_{t=T_{k-1}+1}^{T_k}  \gamma_{T_{k-1}}(t) \cdot \1[\gamma_{T_{k-1}}(t)\in[2^l q_0, 2^{l+1}q_0]]}^{\Gamma_{k,l}} \notag\\
    &\leq q_0 T + (l_0+1)K\cdot w(T,\delta)  + \sum_{l=0}^{l_0}  2^{l+1}q_0 \Delta T\cdot  \abs{\set{k\in[K]: \Gamma_{k,l} \geq w(T,\delta) }}\notag \\
    &\leq q_0 T + (l_0+1)K\cdot w(T,\delta)  + \sum_{l=0}^{l_0}  2^{l+1}q_0 \Delta T \cdot C\frac{\ln^2 T}{2^l q_0 \sigma} \lesssim \frac{\ln^3 T}{K\sigma}\cdot T.
\end{align}
Putting the two regret terms from \cref{eq:first_regret_term,eq:second_regret_term_expanded} together and optimizing over the choice of $K$ gives the same bound as \cref{thm:regret_simple_algo} for the expected oblivious regret of \textsc{Cover}. We explain how this oblivious regret guarantee can be turned into an adaptive regret guarantee in \cref{subsec:proof_sketch_main_thm}.

\subsection{Achieving the optimal regret using recursive covers}
\label{subsec:intuitions_r_cover}

The main obstacle for \textsc{Cover} for achieving the optimal regret dependency $\sqrt T$ in the horizon is that it needs to balance between two competing regret terms: (1) the regret incurred by learning with expert algorithms, which usually increases with the number of epochs; and (2) the discretization error obtained by approximating the optimal function using a net constructed on prior epochs, which decreases with the number of epochs. 

We use a localization strategy to increase the number of effective epochs on which a cover is recomputed. To not incur a large regret term due to the learning with expert algorithms, we introduce the adaptive variant from the classical \emph{Hedge} algorithm, \textsc{A-Exp}, which has a regret bound depending on the actual difficulty of the learning with expert problem instead of a worst-case bound (see \cref{subsec:learning_with_experts}). Going back to an epoch $(T_{k-1},T_k]$ of \textsc{Cover}, \cref{prop:simplified} essentially implies that during most epochs $k\in[K]$ one can bound
\begin{equation}\label{eq:optimistic_bound}
    \sum_{t=T_{k-1}}^{T_k} \gamma_{T_{k-1}}(t) \lesssim q_0(T_{k-1}-T_k)
\end{equation}
where $q_0=\ln^2 (T)/(K\sigma)$. As a result, if we restrict our search space on epoch $k$ to some functions that shared the same values on previous epoch queries $(x_t)_{t\leq T_{k-1}}$, we expect that these would only disagree (have different predictions) for a fraction $\approx q_0$ of the times in epoch $k$. Using the regret guarantee from \textsc{A-Exp} from \cref{lemma:regret_exponentially_weighted} we can then show that on epochs $k\in[K]$ for which \cref{eq:optimistic_bound} holds, performing \textsc{A-Exp} on a set $S\in\Fcal$ of functions that agreed on previous epochs incurs a learning with expert regret
\begin{equation*}
    \sum_{t=T_{k-1}+1}^{T_k} \ell_t(\hat y_t)-\min_{f\in S}\sum_{t=T_{k-1}+1}^{T_k} \ell_t(f(x_t)) \lesssim \sqrt{q_0(T_{k-1}-T_k)\ln|S|},
\end{equation*}
with reasonable probability $1-\delta$, instead of the worst-case bound $\sqrt{(T_{k-1}-T_k)\ln |S|}$ for the Hedge algorithm. Here, $\hat y_t$ denotes the predictions of the learning with expert advice algorithm, and we omitted lower-order terms which may depend on the probability failure $\delta$.

This regret improvement for the regret of \textsc{A-Exp} leads us to the following depth-$2$ algorithm: on each epoch $(T_{k-1},T_k]$ we can run any learning with expert advice algorithm (say Hedge) using as experts the predictions of all \textsc{Cover} algorithms that are run with horizon $T_k-T_{k-1}$, use a fixed number of epochs, use \textsc{A-Exp} as expert advice algorithm (line 4 of \cref{alg:simplified_algo}) and restrict their search space to functions in $\Fcal$ that agreed on previous epoch queries $(x_t)_{t\leq T_{k-1}}$. By Sauer-Shelah's \cref{lemma:sauer_lemma}, there are at most $2T^d$ such experts. Optimizing the choice of number of epochs for each of the two layers yields an improved dependency in $T^\alpha$ for the final regret bound compared to the $1$-depth \textsc{Cover} algorithm in \cref{thm:regret_simple_algo}, for some $\alpha\in(1/2,2/3)$. 

To achieve the optimal regret, we run this strategy recursively over $\floor{\log_2(T)/2}$ depths, which is \textsc{R-Cover}. This strategy is akin to some form of chaining at the algorithmic level. The smallest sub-epochs on the last layer have length of order $\sqrt T$. Note that the labeled dataset $S$ that is used as parameter in the recursive construction of \textsc{R-Cover} in \cref{alg:recursive_construction} now corresponds to the possible labelings of queries in prior epochs. In practice, the optimal depth to achieve the best regret regret bound depends depends on the smoothness parameter $\sigma$. To avoid requiring this information when implementing \textsc{R-Cover}, at each depth, in addition to the experts corresponding to the predictions of the next layer algorithm, we also add an expert that uses a single function as prediction ($f_S$ in lines 6 and 10 of \cref{alg:recursive_construction}). This hedges the final algorithm for all choices of depths at once. Within the proof, we may then focus on the algorithm up to a fixed $\sigma$-dependent depth.

\subsection{Proof sketch for \cref{thm:main_thm}}
\label{subsec:proof_sketch_main_thm}

\comment{
Now that we have introduced the main conceptual ingredients of the proof, we give a brief sketch of the regret bound. 
}
We start by focusing on the oblivious regret compared to some fixed benchmark function $f^\star$. \textsc{R-Cover} is composed of $P$ layers. Each layer $p\in[P]$ corresponds to epochs $(T_{k-1}^{(p)},T_k^{(p)}]$ for $k\in[N_p]$. For instance, initially there is a single epoch for $p=P$ and at the last layer $p=0$ there are $\approx \sqrt T$ epochs. For each depth $p\in[P]$ and epoch $k\in[N_p]$ we consider the depth-$p$ \textsc{R-Cover} algorithm that was instantiated with the ``correct'' labeling according to $f^\star$. That is, we focus on the algorithm that used the dataset
$S_k^{(p)} = \set{(x_t,f^\star(x_t)),t\in[T_{k-1}^{(p)}]}$.

\paragraph{Regret decomposition.} The point of the recursive procedure is that it allows to localize the error by focusing only on the runs of \textsc{R-Cover} that used these correct labeled datasets. The first step of the proof (\cref{subsec:regret_decomposition}) is to show that we can decompose the regret of the algorithm compared to $f^\star$ in the following way, where $\hat y_t$ denotes the predictions of the final algorithm. With probability $1-\delta$,
\begin{multline}\label{eq:regret_decomposition_simplified}
    \sum_{t=1}^T \ell_t(\hat y_t) - \ell_t(f^\star(x_t)) \\
    \lesssim \sum_{p=p_0}^{P} \sum_{k\in[N_p]} \underbrace{\sqrt{\max\paren{\Delta_k^{(p)},2}  d\ln T}}_{Reg_k^{(p)}} 
    + \sum_{k\in[N_{p_0}]} \underbrace{\sum_{t=T_{k-1}^{(p_0)}}^{T_k^{(p_0)}} \ell_t\paren{f_{k,S}^{(p_0)}(x_t)} - \ell_t(f^\star(x_t)) }_{\Lambda_k^{(p_0)}} + N_{p_0} \ln\frac{1}{\delta}.
\end{multline}
The first term of \cref{eq:regret_decomposition_simplified} is the regret accumulated along the localization trajectory for running the learning with expert advice algorithm \textsc{A-Exp}. Up to minor details, here $Reg_k^{(p)}$ corresponds to the bound on the regret incurred by \textsc{A-Exp} for the depth-$p$ algorithm run during epoch $k\in[N_p]$, which is guaranteed by \cref{lemma:regret_exponentially_weighted}. The quantity $\Delta_k^{(p)}$ is the same as that which appears in \cref{lemma:regret_exponentially_weighted} and measures the difficulty of the learning with expert problem on epoch $k$ at depth $p$. Technically, the bound from \cref{lemma:regret_exponentially_weighted} also includes a failure probability term which accumulated over the complete trajectory corresponds to the term $N_{p_0}\ln\frac{1}{\delta}$. This can be viewed as a lower order term.

The second term of \cref{eq:regret_decomposition_simplified} intuitively corresponds to the excess error of a learner that at the beginning of each depth-$p_0$ epoch $k\in[N_{p_0}]$ has access to an oracle which reveals the values of the optimal function $f^\star$ on all prior epoch queries $(x_t)_{t\leq T_{k-1}^{(p_0)}}$. Here, we use the notation $f_{k,S}^{(p_0)}$ to denote the base function $f_S$ that was constructed at line 1 of \cref{alg:recursive_construction} during the run of the depth-$p_0$ algorithm \textsc{R-Cover} on epoch $k$ using the correct labeling $S_k^{(p_0)}$. The quantity $\Lambda_k^{(p_0)}$ corresponds to the excess error of this base function $f_{k,S}^{(p_0)}$ compared to $f^\star$ on the epoch $k$.

\paragraph{Bounding each regret term via smoothness.} We next bound each regret term and more precisely bound the terms $\Delta_k^{(p)}$ and $\Lambda_k^{(p)}$. Using the same arguments as described in \cref{subsec:simpler_algo} when bounding the excess error of \textsc{Cover} on each epoch, we can show that the quantity
\begin{equation*}
    \Gamma_k^{(p)} := \sum_{t\in (T_{k-1}^{(p)},T_k^{(p)}]}\gamma_{T_{k-1}^{(p)}}(t)
\end{equation*}
bounds both $\Delta_k^{(p)}$ and $\Lambda_k^{(p)}$ up to constant factors. Within the general proof for regression, these need to be bounded by different quantities $\Gamma_k^{(p,2)}$ and $\Gamma_k^{(p,1)}$ respectively in which $\gamma(t)$ is replaced by the $\ell_2$ and $\ell_1$ deviations from $f^\star$ (see \cref{lemma:bounding_deltas} for the precise bound). 
We then bound the terms $\Gamma_k^{(p)}$ using \cref{prop:simplified}, which is the only step so far that required the smoothness assumption. For regression, the generalization of this result is \cref{lemma:main_bound_modified}. In this full form, the guarantee obtained on $\Gamma_k^{(p,1)}$ and $\Gamma_k^{(p,2)}$ depends on the scale $\epsilon$ of the cover constructed at the beginning of each epoch (in classification we used $\epsilon=0$ in \cref{alg:recursive_construction}). Naturally, the guarantee degrades as $\epsilon$ grows.
\comment{In classification, because values lie in $\{0,1\}$ both norms are identical hence we can use a single quantity $\Gamma_k^{(p)}$.}

Putting everything together gives a high-probability bound on the regret of the algorithm compared to $f^\star$ of the same order as the desired adaptive bound from \cref{thm:main_thm} (see \cref{prop:oblivious_vc_class}). The corresponding bound in the general regression case is \cref{thm:oblivious_adversary}.

\paragraph{From oblivious to adaptive regret guarantees}

The last step of the proof of \cref{thm:main_thm} is to obtain adaptive regret bounds from high-probability oblivious regret bounds. This uses tools from prior works on smoothed online learning, and in particular \cite{haghtalab2024smoothed}. 

We first construct a cover of the function class $\Fcal$ with respect to the base measure $\mu$ and aim to have low regret compared to functions in this cover. Precisely, we construct a subset of $\Fcal$ such that for all $f\in\Fcal$ there exists $h\in\Hcal$ with
$\Pbb_{x\sim\mu}(f(x)\neq h(x))\leq \epsilon$.
Since $\Fcal$ has VC dimension $d$, we can ensure $\ln |\Hcal| \leq 2d\ln(e^2/\epsilon)$ (see \cite{haussler1995sphere} or \cite[Lemma 13.6]{boucheron2013concentration}). Using the union bound over the high-probability oblivious regret bounds from the previous section, we can ensure that \textsc{R-Cover} has low regret compared to all functions within the cover. We note that contrary to \cite{haghtalab2024smoothed}, this covering construction is only for proof purposes and is not performed within the algorithm \textsc{R-Cover}. In fact, since $\mu$ is unknown in our setting, computing such a cover is impossible, as exemplified by the lower bound \cref{thm:lower_bound}. 

It then only remains to show that $\Hcal$ is also a good cover on the queries made by the smooth adversary $(x_t)_{t\in[T]}$. This would be immediate if these queries are i.i.d.\ sampled from $\mu$ by VC uniform convergence. To reduce to the i.i.d.\ case, \cite{haghtalab2024smoothed} show the following coupling lemma.

\begin{lemma}[\cite{haghtalab2024smoothed,block2022smoothed}]
\label{lemma:coupling}
    Let $(X_t)_{t\in[T]}$ be $\sigma$-smooth with respect to $\mu$. Then for all $k\geq 1$, there is a coupling of $(X_t)_{t\in[T]}$ with random variables $\{Z_{t,j},t\in[T],j\in[k]\}$ such that the $Z_{t,j}\overset{iid}{\sim}\mu$ and on an event $\Ecal_k$ of probability at least $1-Te^{-\sigma k}$, we have $X_t\in\{Z_{t,j},j\in[k]\}$ for all $t\in[T]$. 
\end{lemma}

In particular, it suffices for the cover to perform well on all queries $Z_{t,j}$ for $t\in[T]$ and $j\in[k]$ for some $k\approx \sigma^{-1}\ln T$, which are i.i.d. This gives the desired adaptive regret bound by using VC uniform convergence bounds on the i.i.d.\ variables $Z_{t,j}$.

\section{Regret analysis}
\label{sec:regret_bound}

In this section, we first describe our full algorithm \textsc{R-Cover} for regression in \cref{subsec:description_algo_regression} then prove our main results \cref{thm:main_thm,thm:main_regret_regression} in the rest of the section.

\comment{
The first step of the proof is to derive high-probability regret bounds in the simpler case of \emph{oblivious} benchmark functions. Precisely, we first fix a function $f^\star\in\Fcal$ and aim to bound the regret of \textsc{R-Cover} compared to $f$ (see \cref{thm:oblivious_adversary}). To do so, in \cref{subsec:regret_decomposition} we first decompose the total regret along each epoch. At the high level, we only incur the regret of algorithms $\textsc{R-Cover}_{T_0,T_1}^{(p)}(S)$ that used the ``correct'' labeling $S$ corresponding to the benchmark prediction function $f^\star$, that is $S=\{(x_t,f^\star(x_t),t\leq T_0\}$. We next bound each of these regret terms in  \cref{subsec:bounding_each_term} and conclude the oblivious regret analysis, deferring the proof of a main lemma in \cref{subsec:proof_main_lemma}. As a remark, the initial regret decomposition does not require smoothness of the data, this assumption is only used in \cref{subsec:proof_main_lemma}.

Last, in \cref{subsec:oblivious_to_adaptive} we use the high-probability oblivious regret bounds to obtain regret bounds for adaptive benchmark functions in the classification case. That is, we bound the regret compared to the best function $f^\star\in\Fcal$ in hindsight (depending on the learning sequence $(x_t,y_t)_{t\in[T]}$. This uses relatively standard arguments from the literature on smooth adversaries.
}

\subsection{General recursive procedure for regression}
\label{subsec:description_algo_regression}

In this section, we generalize the algorithm given for classification in \cref{alg:recursive_construction} to handle general regression function classes $\Fcal$. Note that at the beginning of each new epoch, \textsc{R-Cover} effectively computes a $0$-cover of the previously observed dataset. In the regression setting, we instead compute an $\epsilon$-cover for some $\epsilon>0$ of the functions within the class $\Fcal$ centered around a reference function $f_0\in\Fcal$. This effectively restricts the search space of the algorithm to the neighborhood of $f_0$, and replaces the labeled dataset $S$ from \cref{alg:recursive_construction} in the classification case (these will be equivalent in this case). Instead of using a single reference function $f_0$, we use a sequence of reference functions which will correspond to reference functions from previous depths. This is used to ensure that the search space within $\Fcal$ of sub-algorithms (akin to lines 6 and 10 of \cref{alg:recursive_construction}) are consistent with the search space of algorithm calls from previous depths. These reference functions $f_i:\Xcal\to[0,1]$ are stored together with the start time of their corresponding algorithm call $t_i\in[T]$, within a set $S$.

To summarize, the recursive algorithms uses as parameters a start time $T_0$, an end time $T_1$, the depth $P$, a finite set of reference functions $S = \set{ (f_i,t_i),i}$, as well as the scale parameter $\epsilon$. By abuse of notation, we still refer to the corresponding algorithm as $\textsc{R-Cover}_{T_0,T_1}^{(P,\epsilon)}(S)$.
The algorithm only aims to achieve low regret compared to functions within $\Fcal$ that had similar predictions to the reference functions within $S$ on the history. Precisely, for any $f_0:\Xcal\to[0,1]$ and $0\leq T_0\leq T$, we define the set
\begin{equation}\label{eq:def_B_f_0_ball}
    B_{f_0}(\Fcal;\epsilon,T_0) := \set{f\in\Fcal: \max_{t\in[T_0]} |f(x_t)-f_0(x_t)| \leq \epsilon }.
\end{equation}

For the base depth $P=0$, the algorithm simply follows the predictions of any function within 
\begin{equation}\label{eq:def_Fcal_S}
    \Fcal(S):= \bigcap_{(f_0,t_0)\in S} B_{f_0}(\Fcal;\epsilon,t_0),
\end{equation}
which corresponds to the search space of the algorithm. For $P\geq 1$, the algorithm defines two sub-epochs using $T_{1/2}$ exactly as in \cref{alg:recursive_construction}. At the beginning of each epoch at time $T_\alpha$ for $\alpha\in\{0,1/2\}$, the algorithm constructs a minimum covering $\epsilon$-cover of the search space on the previously queried points $(x_t)_{t\in[T_\alpha]}$ as defined in \cref{def:covering_numbers}. That is, we construct a set $\Ccal \subset\Fcal(S)$ such that for all $f\in  \Fcal(S)$ there exists $g\in \Ccal$ such that
\begin{equation*}
    \max_{t\in[T_\alpha]} |f(x_t)-g(x_t)| \leq \epsilon,
\end{equation*}
and that has minimal cardinality. The algorithm then perform the learning with expert advice algorithm \textsc{A-Exp} using the expert predictions from $\textsc{R-Cover}_{T_\alpha,T_{\alpha+1/2}}^{(P-1,\epsilon)}(S\cup\{(f,T_\alpha)\})$ for all $f\in \Ccal$, as well an expert corresponding to any fixed function $ f_S\in\Fcal(S)$. The recursive algorithm is summarized in \cref{alg:recursive_construction_regression}.\footnote{Compared to the description of \cref{alg:recursive_construction} we merged the description of the algorithm on each epoch $(T_0,T_{1/2}]$ and $(T_{1/2},T_1]$ for conciseness.}

\begin{algorithm}[ht]

\caption{Recursive construction of $\textsc{R-Cover}_{T_0,T_1}^{(P,\epsilon)}(S)$}\label{alg:recursive_construction_regression}

\LinesNumbered
\everypar={\nl}

\hrule height\algoheightrule\kern3pt\relax
\KwIn{depth $P\geq 0$, start and end times $T_0\leq T_1$ satisfying $T_1-T_0\geq 2^P$, set of reference functions $S$, scale $\epsilon$}

\vspace{3mm}

\eIf{$P=0$}{
     Fix any $f_S\in \Fcal(S)$ (see \cref{eq:def_Fcal_S}) and predict $\hat y_t = f_S (x_t)$ for all $t\in(T_0,T_1]$
}{
    Fix any $f_S\in \Fcal(S)$ and let $T_{1/2}:=\floor{\frac{T_0+T_1}{2}}$

    \For{$\alpha\in\{0,1/2\}$ (epoch $(T_\alpha,T_{\alpha+1/2}])$}{
        After iteration $T_\alpha$, construct a $\epsilon$-cover $\Ccal$ of $\Fcal(S)$ on the queries $(x_t)_{t\in[T_\alpha]}$

        Perform $\textsc{A-Exp}$ (see \cref{alg:exponentially_weighted}) on $(T_\alpha,T_{\alpha+1/2}]$ with experts $\set{\textsc{R-Cover}_{T_\alpha,T_{\alpha+1/2}}^{(P-1,\epsilon)}\paren{S\cup\{(f,T_\alpha)\} },\; f\in\Ccal } \cup\{ f_S \}$
    }
    
}

\hrule height\algoheightrule\kern3pt\relax
\end{algorithm}

Similar to the classification case, we use the depth $P=\floor{\log_2(T)}$ and run $\textsc{R-Cover}_{0,T}^{(P)}(\emptyset)$ as our final algorithm. Note that the search space for the final algorithm is the complete function class $\Fcal$.
We can also still give a bound on the number of experts considered at each step. It is at most $\Ncal(\Fcal(S),\epsilon,T)+1$ where $\Ncal(\Fcal(S),\epsilon,T)$ denotes the size of the minimal $\epsilon$-covering of $\Fcal(S)$ on the queries $(x_t)_{t\in[T]}$. Using \cref{thm:fat_shattering_bound} this can be further bounded as follows,
\begin{equation}\label{eq:bound_covering_numbers_abuse}
    \ln \Ncal(\Fcal(S);\epsilon,T) \lesssim \mathsf{fat}_\Fcal(c\alpha\epsilon) \ln^{1+\alpha}\paren{\frac{CT}{\epsilon}}.
\end{equation}
In the last inequality, we used the fact that $\mathsf{fat}_{\Fcal(S)}(r) \leq \mathsf{fat}_\Fcal(r)$ for all $r\geq 0$ since $\Fcal(S)\subset \Fcal$.
Given that the covering numbers of all function classes $\Fcal(S)$ are upper bounded by this quantity, in the rest of the paper, we may safely omit the dependency in $S$ to lighten the notations.

\subsection{Regret decomposition}
\label{subsec:regret_decomposition}

Before decomposing the regret of the final algorithm, we define a few notations. Fix a depth $p\in \{0,\ldots,P\}$. Note that the final algorithm $\textsc{R-Cover}_{0,T}^{(P,\epsilon)}(\emptyset)$ calls depth-$p$ algorithms $\textsc{R-Cover}^{(p,\epsilon)}_{T_0,T_1}$ on fixed depth-$p$ epochs $(T_0,T_1]$. Precisely, there are $N_p:=2^{P-p}$ such depth-$p$ epochs and we define $T_0^{(p)}=0 < T_1^{(p)} <\ldots < T_{N_p}^{(p)}=T$ the start and end times of these epochs. We then use the notation $E_k^{(p)}:=(T_{k-1}^{(p)},T_k^{(p)}]$ for the $k$-th depth-$p$ epoch. For instance, by construction one has $T_{N_p/2}^{(p)} = \floor{T/2}$ for all $p<P$(see line 4 of \cref{alg:recursive_construction_regression}). More generally, these epochs all have roughly the same length. In fact, we note that
\begin{equation}\label{eq:def_length_epoch}
    T_k^{(p)} - T_{k-1}^{(p)} \in\set{\floor{\frac{T}{N_p}}, \floor{\frac{T}{N_p}} +1  },\quad k\in[N_p].
\end{equation}

Next, we fix a function $f^\star \in\Fcal$ that will serve as benchmark for the algorithm's predictions. Importantly, we suppose for now that $f^\star$ is fixed and non-adaptive: it does not depend on the realizations of $(x_t,y_t)_{t\in[T]}$. We will later extend the regret bound to potentially adaptive benchmark functions $f^\star\in\Fcal$ in the classification setting.

We next construct by induction some benchmark functions $f_k^{(p)}$ together with reference function sets $S_k^{(p)}$ for all depths $p\in\{0,\ldots,P\}$ and epochs $k\in[N_p]$. At the high-level, we follow the ``trajectory'' of the function $f^\star$ within the covers constructed within the recursive algorithms starting with the final depth-$P$ algorithm $\textsc{R-Cover}_{0,T}^{(P,\epsilon)}(\emptyset)$.

We start at depth $p=P$, for which there is a single epoch $k=1$. We then simply pose $f_1^{(P)}\in\Fcal$ arbitrarily, and let $S_1^{(P)}:=\emptyset$, which is the reference function set used for the final algorithm. In particular we have $f^\star\in \Fcal(S_1^{(P)})=\Fcal$ (see the definition of $\Fcal(S)$ in \cref{eq:def_Fcal_S}).
Now suppose that we have constructed the reference functions $f_k^{(p)}$ and the set $S_k^{(p)}$ for some $p\in[P]$ and all $k\in[N_{p}]$ such that
\begin{equation*}
    f^\star\in \Fcal(S_k^{(p)}),\quad k\in[N_p].
\end{equation*}
We now focus on a given epoch $E_k^{(p)}$, which is composed of two sub-epochs $E_{2k-1}^{(p-1)}$ and $E_{2k}^{(p-1)}$. Fix any $l\in[2]$. At the beginning of epoch $E_{2(k-1)+l}^{(p-1)}$, the algorithm $\textsc{R-Cover}_{T_{k-1}^{(p)},T_k^{(p)}}^{(p,\epsilon)}(S_k^{(p)})$ first constructs a (strict) $\epsilon$-cover of $\Fcal(S_k^{(p)})$ for queries $x_t$ for $t\leq T_{2(k-1)+l-1}^{(p-1)}$, which we denote $\Hcal_{2(k-1)+l}^{(p-1)}$. By construction, we have $\Hcal_{2(k-1)+l}^{(p-1)}\subset \Fcal(S_k^{(p)})$ and $\Fcal(S_k^{(p)})$ contains $f^\star$ by induction hypothesis. Hence, we can select $f_{2(k-1)+l}^{(p-1)}\in \Hcal_{2(k-1)+l}^{(p-1)}$ such that
\begin{equation}\label{eq:consruct_function_next_level}
    \max_{t\in[T_{2(k-1)+l-1}^{(p-1)}]} \abs{f^\star(x_t) - f_{2(k-1)+l}^{(p-1)}(x_t) } \leq \epsilon.
\end{equation}
Additionally, we construct the increased reference set
\begin{equation*}
    S_{2(k-1)+l}^{(p-1)} := S_k^{(p)} \cup \set{ \paren{ f_{2(k-1)+l}^{(p-1)} , T_{2(k-1)+l-1}^{(p-1)} }}.
\end{equation*}
This ends the construction of the reference functions at depth $p-1$. Note that \cref{eq:consruct_function_next_level} exactly implies that the induction hypothesis holds at depth $p-1$.

Each reference function set $S_k^{(p)}$ for $p\in\{0,\ldots,P\}$ and $k\in[N_p]$ corresponds to a run of the depth-$p$ algorithm. Intuitively, this is the depth-$p$ algorithm that uses the ``correct'' reference function set on epoch $E_k^{(p)}$ in the sense that this is the run that always kept $f^\star$ within its search space. To simplify the notations, we will refer to this depth-$p$ algorithm as $\textsc{R-Cover}_k^{(p)}$ (instead of using the full notation $\textsc{R-Cover}_{T_{k-1}^{(p)},T_k^{(p)}}^{(p,\epsilon)}(S_k^{(p)})$). For convenience, we denote by $f_{k,S}^{(p)}\in\Fcal(S_k^{(p)})$ the function that $\textsc{R-Cover}_k^{(p)}$ fixed at the beginning of its run (see lines 2 and 4 of \cref{alg:recursive_construction_regression}). We will also use the notation $\textsc{R-Cover}^{(p)}_k(t)$ to denote the prediction of this algorithm at some time $t\in E_k^{(p)}$. Finally, we denote by $\hat y_t$ the predictions of the final algorithm; note that these are the same as $\textsc{R-Cover}^{(0)}_1(t)$.

Let us now focus on a single depth-$p$ epoch $k\in[N_p]$ for $p>0$. This is composed of $2$ sub-epochs $E_{2(k-1)+l}^{(p-1)}$ for $l\in[2]$.
On each sub-epoch $l\in[2]$, the algorithm performs the exponential weights algorithm using experts that we denote $\Acal_{l,r'}^{(p-1)}$ for $r'\in[r_l^{(p-1)}]$. We also denote by $\Acal_{l,r'}^{(p-1)}(t)$ their prediction for some time during the corresponding epoch $E^{(p-1)}_{2(k-1)+l}$. Last, we use the following notation to denote the magnitude of the expert problem at epoch $l$, where $\hat p_t$ denotes the distribution over the experts $\Acal_{l,r'}^{(p-1)}(t)$ that was used by \textsc{A-Exp} at time $t$:
\begin{align*}
    \tilde \Delta_{k,l}^{(p)} &:= \sum_{t\in E_{2(k-1)+l}^{(p-1)}} \Ebb_{r\sim \hat p_t}\sqb{(\ell_t(\hat y_t ) - \ell_t(\Acal_{l,r}(t) ))^2}\\
    &=\sum_{t\in E_{2(k-1)+l}^{(p-1)}}   \frac{\sum_{r\in[r_l^{(p-1)}]}e^{\eta_t R_{r',t-1}} (\ell_t(\hat y_t ) - \ell_t(\Acal_{l,r'}(t) ))^2}{\sum_{r'\in[r_l^{(p-1)}]}e^{\eta_t R_{r',t-1}}},
\end{align*}
where by abuse of notation we kept $R_{r',t}$ to denote the cumulative regret compared to algorithm $r'$ up to time $t$ during epoch $E_{2(k-1)+l}^{(p-1)}$. For convenience, let us denote by $Reg_{k,l}^{(p)}$ the regret incurred by the exponential weights algorithm $\textsc{A-Exp}$ on each epoch $E_{2(k-1)+l}^{(p-1)}$ for $l\in[2]$ (see line 7 of \cref{alg:recursive_construction_regression}). Then,
\begin{multline}\label{eq:decomposition_one_step}
    \sum_{t\in E_k^{(p)}} \ell_t(\textsc{R-Cover}_k^{(p)}(t) ) =\sum_{l\in[2]} \sum_{t\in E_{2(k-1)+l}^{(p-1)}}  \ell_t(\textsc{R-Cover}_k^{(p)}(t) )\\
    \leq \sum_{l\in[2]} \min\set{ \sum_{t\in E_{2(k-1)+l}^{(p-1)}} \ell_t\paren{f_{k,S}^{(p)}(x_t) } , \sum_{t\in E_{2(k-1)+l}^{(p-1)}}  \ell_t\paren{\textsc{R-Cover}_{2(k-1)+l}^{(p-1)}(t) } } + \sum_{l\in[2]} Reg_{k,l}^{(p)}.    
\end{multline}
In the last inequality we used the fact that the algorithm $\textsc{R-Cover}^{(p-1)}_{2(k-1)+l}$ that uses the reference set $S_{2(k-1)+l}^{(p-1)}$ is one of the experts $\Acal_{l,r'}$ for $r'\in[r_l^{(p-1)}]$, as well as the expert that uses $f_{k,S}^{(p)}$ as reference function (see line 7 or \cref{alg:recursive_construction_regression}). We next use \cref{lemma:regret_exponentially_weighted} to bound the regret terms $Reg_{k,l}^{(p)}$. 
First, recall that we always have $r_l^{(p-1)}\leq \Ncal(\Fcal;\epsilon,T)+1$ where by abuse of notation $\Ncal(\Fcal;\epsilon,T)$ is the $\epsilon$-covering number of $\Fcal(S_k^{(p)})$ on $(x_t)_{t\in[T]}$ (this abuse of notation is mild from the discussion around \cref{eq:bound_covering_numbers_abuse}). Taking the union bound, we obtain that with probability at least $1-\delta$,
\begin{align*}
    \sum_{l\in[2]} Reg_{k,l}^{(p)} &\leq \sum_{l\in[2]} 12\sqrt{ \max\paren{\tilde\Delta_{k,l}^{(p-1)},1} \ln\paren{\Ncal(\Fcal;\epsilon,T)+1 } } + 4\ln\frac{2}{\delta}.
\end{align*}
Instead of working with the quantities $\tilde \Delta_{k,l}^{(p)}$, we instead define
\begin{equation*}
    \Delta_k^{(p)}:= \Delta_{k,1}^{(p)} + \Delta_{k,2}^{(p)},\quad p\in[P],k\in[N_p].
\end{equation*}
Applying Jensen's inequality gives
\begin{equation}\label{eq:regret_a_exp_1_step}
    \sum_{l\in[2]} Reg_{k,l}^{(p)}   \leq  12\sqrt{2\max\paren{\Delta_k^{(p)},2} \ln\paren{\Ncal(\Fcal;\epsilon,T)+1 } }  + 4\ln\frac{2}{\delta}.
\end{equation}

We are now ready to decompose the regret of the algorithm along the learning trajectory using the previous bound recursively. We start from the level $P$ and go down to some fixed depth $p_0\in\{0,\ldots,P\}$. Using \cref{eq:decomposition_one_step} gives
\begin{equation*}
    \sum_{t\in E_1^{(P)}} \ell_t(\hat y_t )\leq \sum_{p=\max(p_0,1)}^P \sum_{k\in[N_p], l\in[2]} Reg_{k,l}^{(p)}
    +\sum_{k\in[N_{p_0}]}\sum_{t\in E_k^{(p_0)}} \ell_t\paren{ f_{k,S}^{(p_0)}(x_t) } 
\end{equation*}
Next, using \cref{eq:regret_a_exp_1_step}, with probability at least $1-\delta\sum_{p=p_0}^{P}N_p \leq  1-2N_{p_0}\delta T \leq 1-2\delta T$ (recall that $T\geq 2^P$) we have for any choice of $p_0\in\{0,\ldots,P\}$,
\begin{multline}\label{eq:decomposed_regret}
     \sum_{t=1}^T \ell_t(\hat y_t ) - \ell_t(f^\star(x_t) ) 
    \leq 12\sum_{p=\max(p_0,1)}^{P} \sum_{k\in[N_p]} \sqrt{2\max\paren{\Delta_k^{(p)},2} \ln\paren{\Ncal(\Fcal;\epsilon,T)+1 }}\\
      + 8N_{p_0}\ln\frac{2}{\delta} +\sum_{k\in[N_{p_0}]}\sum_{t\in E_k^{(p_0)}} \ell_t\paren{f_{k,S}^{(p_0)}(x_t) } - \ell_t(f^\star(x_t) ). 
\end{multline}

Up to the last layer for $p=p_0$, the previous inequality shows that the regret of the algorithm essentially only corresponds to the regret accumulated by the learning with expert algorithms along the trajectory for the benchmark function $f^\star$.
For convenience, we introduce for all $p\in\{0,\ldots,P\}$ and $k\in[N_p]$ the quantity
\begin{equation*}
    \Lambda_k^{(p)}:= \sum_{t\in E_k^{(p)}} \ell_t\paren{f_{k,S}^{(p)}(x_t) } - \ell_t(f^\star(x_t) ).
\end{equation*}
We used a similar notation for $\Delta_k^{(p)}$ and $\Lambda_k^{(p)}$ for $p\in[P]$ because these terms will be bounded with the same techniques.

\subsection{Bounding the regret term for each depth}
\label{subsec:bounding_each_term}

We next bound each term of the right-hand side of \cref{eq:decomposed_regret} separately for each layer $p\in\{p_0,\ldots,P\}$. That is, we need to bound the error terms $\Delta_k^{(p)}$ and $\Lambda_k^{(p)}$ for $k\in[N_p]$.
Fix $p\in\{0,\ldots,P\}$ and $k\in[N_p]$ and let
\begin{equation}\label{eq:def_Pcal_close_functions}
    \Pcal_k^{(p)}:=\set{ f\in\Fcal: \max_{t\in[T_{k-1}^{(p)}]} \abs{f(x_t)-f^\star(x_t)} \leq 2\epsilon}.
\end{equation}
We then define for any $r\geq 1$,
\begin{align}
     \Gamma_k^{(p,r)} &:= \sum_{t\in E_k^{(p)}} \gamma^{(p,r)}(t) \quad\text{where}\quad \gamma^{(p,r)}(t) := \sup_{f\in\Pcal_k^{(p)}}\Ebb\sqb{ |f(x_t)-f^\star(x_t)|^r\mid\Hcal_{t-1}}, \quad t\in E_k^{(p)},\label{eq:definition_gama}
\end{align}
Intuitively, $\Gamma_k^{(p,r)}$ quantifies the $\ell_r$ discrepancy between the queries on epoch $E_k^{(p)}$ and queries prior to this epoch. This measures the level of non-stationarity of the smooth process $(x_t)_{t\in[T]}$ on each epoch. The following results shows that it suffices to bound $\Gamma_k^{(p,r)}$ to bound $\Delta_k^{(p)}$ and $\Lambda_k^{(p)}$.

\begin{lemma}\label{lemma:bounding_deltas}
    With probability at least $1-\delta$,
    \begin{equation*}
        \Delta_k^{(p)} \leq 5\Gamma_k^{(p,2)} + 16\ln\frac{T}{\delta},\quad p\in[P],k\in[N_p].
    \end{equation*}
    Similarly, for any $p\in \{0,\ldots,P\}$, with probability at least $1-\delta$,
    \begin{equation*}
        \Lambda_k^{(p)} \leq 2\Gamma_k^{(p,1)} + 3\ln\frac{T}{\delta},\quad k\in[N_p].
    \end{equation*}
\end{lemma}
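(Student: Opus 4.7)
Proof proposal. The plan is to reduce both bounds to concentration of martingale sums after establishing the structural fact that every function effectively used within $\textsc{R-Cover}_k^{(p)}$ lies in the localized class $\Pcal_k^{(p)}$. Unrolling the recursive construction of $S_k^{(p)}$, the most recently appended reference pair in $S_k^{(p)}$ (for $p<P$) has the form $(f_0, T_{k-1}^{(p)})$ with $\max_{t\le T_{k-1}^{(p)}}|f_0(x_t)-f^\star(x_t)|\le\epsilon$ by \eqref{eq:consruct_function_next_level}, while any $f\in\Fcal(S_k^{(p)})$ satisfies $|f(x_t)-f_0(x_t)|\le\epsilon$ on the same range. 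The triangle inequality then gives $\Fcal(S_k^{(p)})\subseteq\Pcal_k^{(p)}$, and in particular $f_{k,S}^{(p)}\in\Pcal_k^{(p)}$ (the case $p=P$ is vacuous since $T_0^{(P)}=0$). Because every sub-algorithm spawned within $\textsc{R-Cover}_k^{(p)}$ only \emph{extends} the reference set $S$, each of its predictions at any time $t$ is the evaluation on $x_t$ of some (random) function lying in $\Pcal_k^{(p)}$, measurable with respect to $\Hcal_{t-1}$ and internal randomness independent of $x_t$.

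\textbf{Bounding $\Lambda_k^{(p)}$.} Let $W_t:=\ell_t(f_{k,S}^{(p)}(x_t))-\ell_t(f^\star(x_t))$ for $t\in E_k^{(p)}$. Then $|W_t|\le 1$, and by $1$-Lipschitzness of $\ell_t$ together with $f_{k,S}^{(p)}\in\Pcal_k^{(p)}$,
\begin{equation*}
    \Ebb[W_t\mid\Hcal_{t-1}]\le \Ebb[\,|f_{k,S}^{(p)}(x_t)-f^\star(x_t)|\,\mid\Hcal_{t-1}]\le\gamma^{(p,1)}(t),
\end{equation*}
and $\Ebb[W_t^2\mid\Hcal_{t-1}]\le\gamma^{(p,1)}(t)$ as well since $W_t^2\le|W_t|$. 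A Freedman-type inequality applied to the martingale $W_t-\Ebb[W_t\mid\Hcal_{t-1}]$ then yields, with probability at least $1-\delta/T$,
\begin{equation*}
    \Lambda_k^{(p)}\le \Gamma_k^{(p,1)}+\sqrt{2\,\Gamma_k^{(p,1)}\ln(T/\delta)}+O(\ln(T/\delta))\le 2\Gamma_k^{(p,1)}+3\ln(T/\delta),
\end{equation*}
where AM-GM absorbs the square root into $\Gamma_k^{(p,1)}$ plus a logarithmic remainder. A union bound over $k\in[N_p]\le T$ concludes.

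\textbf{Bounding $\Delta_k^{(p)}$.} Set $\Omega_t:=\Ebb_{r\sim\hat p_t}[(\ell_t(\hat y_t)-\ell_t(\Acal_{l,r}(t)))^2]\in[0,1]$, so that $\Delta_k^{(p)}=\sum_{t\in E_k^{(p)}}\Omega_t$. By the structural fact, both $\hat y_t$ and $\Acal_{l,r}(t)$ are of the form $h(x_t)$ with $h\in\Pcal_k^{(p)}$ measurable in the history and internal randomness independent of $x_t$. Using $1$-Lipschitzness of $\ell_t$ and $(a-b)^2\le 2(a-f^\star(x_t))^2+2(b-f^\star(x_t))^2$, then integrating over $x_t$, we obtain $\Ebb[\Omega_t\mid\Hcal_{t-1}]\le 4\gamma^{(p,2)}(t)$ (pointwise in the internal randomness, then averaged). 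Freedman's inequality applied to $\Omega_t-\Ebb[\Omega_t\mid\Hcal_{t-1}]$ with $|\Omega_t|\le 1$ and predictable quadratic variation at most $4\Gamma_k^{(p,2)}$ (using $\Omega_t^2\le\Omega_t$) gives
\begin{equation*}
    \Delta_k^{(p)}\le 4\Gamma_k^{(p,2)}+\sqrt{8\,\Gamma_k^{(p,2)}\ln(T/\delta)}+O(\ln(T/\delta))\le 5\Gamma_k^{(p,2)}+16\ln(T/\delta),
\end{equation*}
again via AM-GM; a union bound over $k\in[N_p]$ finishes the argument.

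\textbf{Main obstacle.} The nontrivial step is the containment $\Fcal(S_k^{(p)})\subseteq\Pcal_k^{(p)}$ and its propagation to every expert and sub-expert used inside $\textsc{R-Cover}_k^{(p)}$. This requires an induction on depth tracking that the latest timestamp appearing in $S_k^{(p)}$ is exactly $T_{k-1}^{(p)}$, so that the triangle inequality against $f^\star$ indeed covers the full prior window $[T_{k-1}^{(p)}]$. Once this is in place, the rest reduces to routine martingale concentration with AM-GM bookkeeping and a union bound over $k\in[N_p]\le T$.
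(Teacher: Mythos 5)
Your proposal is correct and follows essentially the same route as the paper: the containment $\Fcal(S_k^{(p)})\subseteq \Pcal_k^{(p)}$ via the triangle inequality with the reference function from \eqref{eq:consruct_function_next_level}, properness of all spawned sub-algorithms so that every expert's prediction is $h(x_t)$ for a committed $h\in\Pcal_k^{(p)}$, the per-step conditional mean bounds by $\gamma^{(p,2)}(t)$ and $\gamma^{(p,1)}(t)$ using Lipschitzness and $(a-b)^2\le 2(a-c)^2+2(b-c)^2$, Freedman's inequality with the second moment controlled by the first (boundedness), and a union bound over epochs. The only (cosmetic) difference is that you apply Freedman directly to the per-step dispersion $\Omega_t$ rather than routing it through the explicit tangent draw $\hat r_t'$ and intermediate variables $X_t^{(p)},Y_t^{(p)}$ as the paper does.
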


\begin{proof}
    Fix $p\in[P]$ and $k\in[N_p]$. During its run on epoch $E_k^{(p)}$, the learning with expert prediction algorithm $\textsc{A-Exp}$ uses predictions from depth-$(p-1)$ algorithms. In practice, all considered sub-algorithms---that is, for epochs $E_{k'}^{p'}$ with $p'<p$ and such that $E_{k'}^{(p')}\subset E_k^{(p)}$---are \emph{proper} in the sense that they proceed by first selecting some predictor function $\hat f_t\in\Fcal$ then implementing its prediction $\hat f_t(x_t)$. The choice of the function $\hat f_t$ is randomized, but is made before observing the value $x_t$. As an important remark, all these potentially-selected functions belong to $\Fcal(S_k^{(p)})$ since for sub-algorithms we append reference functions $(f_i,t_i)$ to the reference set $S_k^{(p)}$. Next, note that by construction, we have $(f_k^{(p)},T_{k-1}^{(p)})\in S_k^{(p)}$ (see the recursion line 7 of \cref{alg:recursive_construction_regression}). In particular, all these functions belong to $\Fcal(S_k^{(p)}) \subset B_{f_k^{(p)}}(\Fcal;\epsilon,T_{k-1}^{(p)}):= B_k^{(p)}$, where we introduced the last notation for simplicity. For $p=P$, we simply have $B_1^{(P)}=\Fcal$.

    We use the same notations as in \cref{subsec:regret_decomposition}: for all $l\in[2]$, during epoch $E_{2(p-1)+l}^{(p-1)}$ the algorithm $\textsc{R-Cover}_k^{(p)}$ performs the exponential weights algorithm using as experts the predictions of the lower-level algorithms, which we denote $\Acal_{l,r'}$ for $r'\in[r_l^{(p-1)}]$.
    As a summary of the previous discussion, for any $t\in E_{2(k-1)+l}^{(p-1)}$, we can define a (random) function $f_{l,r',t}\in B_k^{(p)}$ that the algorithm $\Acal_{l,r'}$ has committed to use for its prediction at time $t$. We also note that $B_k^{(p)}$ only depends on the history up to time $T_{k-1}^{(p)}$. Altogether, $x_t\mid \sigma(\Hcal_{t-1};f_{l,r',t},r'\in[r_l^{(p-1)}],B_k^{(p)})$ still has the same distribution as $x_t\mid \Hcal_{t-1}$. On top of these predictions, $\textsc{R-Cover}_k^{(p)}$ performs the exponential weights algorithm: for iteration $t\in E_{2(k-1)+l}^{(p-1)}$ it first samples $\hat r_t\sim q_{2(k-1)+l}^{(p)}(t)$ for some $\Hcal_{t-1}$-measurable distribution $q_{2(k-1)+l}^{(p)}(t)$ on $[r_l^{(p-1)}]$ then commits to using the prediction of $\Acal_{l,\hat r_t}$, that is using the function $f_{l,\hat r_t,t}\in B_k^{(p)}$. Now construct a tangent sequence $(\hat r_t')_{t\in E_k^{(p)}}$. That is, conditionally on $\Hcal_{t-1}$ we sample $\hat r_t'$ independently from $r_t$ with the same distribution $q_{2(k-1)+l}^{(p)}(t)$. We have
    \begin{align*}
        &\Delta_k^{(p)} = \sum_{l\in[2]} \tilde \Delta_{k,l}^{(p)}\\
        &=\sum_{l\in[2]} \sum_{t\in E_{C(k-1)+l}^{(p-1)}}  \Ebb_{\hat r_t'}\sqb{\paren{\ell_t(f_{l,\hat r_t,t}(x_t) ) - \ell_t(f_{l,\hat r_t',t}(x_t) )}^2 \mid \Hcal_t,\, \hat r_t,\,f_{r,l',t},l'\in[r_l^{(p-1)}]}\\
        &\leq\sum_{l\in[2]} \sum_{t\in E_{C(k-1)+l}^{(p-1)}} \Ebb_{\hat r_t'}\sqb{\paren{ f_{l,\hat r_t,t}(x_t)  -  f_{l,\hat r_t',t}(x_t) }^2 \mid \Hcal_t,\, \hat r_t,\,f_{r,l',t},l'\in[r_l^{(p-1)}]}\\
        &\leq 2\sum_{l\in[2]} \sum_{t\in E_{C(k-1)+l}^{(p-1)}} \underbrace{\Ebb_{\hat r_t'}\sqb{\paren{ f_{l,\hat r_t,t}(x_t)  -  f^\star(x_t) }^2 + \paren{ f_{l,\hat r_t',t}(x_t)  -  f^\star(x_t) }^2 \mid \Hcal_t,\, \hat r_t,\,f_{r,l',t},l'\in[r_l^{(p-1)}]}}_{X_t^{(p)}},
    \end{align*}
    where we used the identity $(a+b)^2\leq 2(a^2+b^2)$ for $a,b\geq 1$. Next, note that
    \begin{align*}
        Y_t^{(p)} &:=\Ebb_{x_t,\hat r_t} \sqb{X_t^{(p)} \mid \Hcal_{t-1},\,f_{r,l',t},l'\in[r_l^{(p-1)}]} \\
        &=\Ebb_{\hat r_t,\hat r_t'}\left[ \Ebb_{x_t\mid\Hcal_{t-1}}\sqb{\paren{ f_{l,\hat r_t,t}(x_t)  -  f^\star(x_t) }^2 + \paren{ f_{l,\hat r_t',t}(x_t)  -  f^\star(x_t) }^2 \mid \Hcal_{t-1},\, f_{l,\hat r_t,t},\,f_{l,\hat r_t',t}} \right.\\
        &\qquad\qquad\qquad\qquad\qquad\qquad\qquad\qquad\qquad\qquad\qquad\qquad \left.\mid \Hcal_{t-1}, \,f_{r,l',t},l'\in[r_l^{(p-1)}] \right]\\
        &\leq 2\sup_{f\in B_k^{(p)}} \Ebb_{x_t\mid\Hcal_{t-1}}\sqb{ \paren{f(x_t) - f^\star(x_t)}^2\mid\Hcal_{t-1}}\\
        &\leq 2\gamma^{(p,2)}(t).
    \end{align*}
    In the last inequality, we used the definition of $B_k^{(p)} = B_{f_k^{(p)}}(\Fcal;\epsilon,T_{k-1}^{(p)})$ together with the fact that by construction $f^\star\in\Fcal(S_k^{(p)})\subset B_k^{(p)}$. Hence, the triangle inequality implies that $B_k^{(p)} \subset \Pcal_k^{(p)}$.

    The previous equation shows that $\Delta_k^{(p)}-4\Gamma_k^{(p,2)} \leq 2\sum_{t\in E_p^{(k)}} (X_t^{(p)}-2\gamma^{(p,2)}(t))$ where the right-hand side is a sum of super-martingale differences. Further, these differences are bounded in absolute value by $|X_t^{(p)}-2\gamma^{(p,2)}(t)|\leq 4$.
    To bound $\Delta_k^{(p)}$ in terms of $\Gamma_k^{(p,2)}$, we can directly use Azuma-Hoeffding's inequality which would give an extra term of the form $\sqrt{(T_k^{(p)}-T_{k-1}^{(p)})\ln\frac{1}{\delta}}$ for a bound with probability $1-\delta$. Because we will consider cases for which $\Gamma_k^{(p)}$ is significantly smaller than $\sqrt{T_k^{(p)}-T_{k-1}^{(p)}}$, we instead use Freedman's inequality stated in \cref{lemma:freedman_inequality}
    to the sum $\sum_{t\in E_p^{(k)}} X_t^{(p)}-Y_t^{(p)}$ using the filtration $\Fcal_t=\sigma(X_{t'}^{(p)},t'\leq t; Y_{t'}^{(p)},t'\leq t+1)$ (note that $\Ebb[X_t^{(p)}\mid \Fcal_{t-1}]=Y_t^{(p)}$). To do so, we compute
    \begin{align*}
        \sum_{t\in E_k^{(p)}} \Ebb\sqb{(X_t^{(p)}-Y_t^{(p)})^2 \mid \Fcal_{t-1}} &\leq \sum_{t\in E_k^{(p)}} \Ebb\sqb{(X_t^{(p)})^2 \mid \Fcal_{t-1}}\\
        &\overset{(i)}{\leq} 2 \sum_{t\in E_k^{(p)}} \Ebb\sqb{X_t^{(p)} \mid \Fcal_{t-1}}  = 2 \sum_{t\in E_k^{(p)}}Y_t^{(p)} \\
        &\leq 4\sum_{t\in E_k^{(p)}}\gamma^{(p,2)}(t) =4\Gamma_k^{(p,2)}.
    \end{align*} 
    In $(i)$ we used the fact that $|X_t^{(p)}|\leq 2$ since functions in $\Fcal$ have value in $[0,1]$. Last, we always have $|X_t^{(p)} - Y_t^{(p)}|\leq 4$.
    Then, \cref{lemma:freedman_inequality} with the union bound over all $p\in[P]$ and $k\in[N_p]$ implies that with probability at least $1-\delta$, using $\eta=1/8$,
    \begin{equation}\label{eq:final_concentration_Delta_Gamma}
        \Delta_k^{(p)} \leq 2\sum_{t\in E_k^{(p)}} (X_t^{(p)}-Y_t^{(p)}) + 4\Gamma_k^{(p,2)} \leq 5\Gamma_k^{(p,2)} + 16\ln\frac{T}{\delta},\quad p\in[P],k\in[N_p].
    \end{equation}
    Here we used the fact that $\sum_{p=1}^P N_p = \sum_{p=1}^P 2^{P-p}\leq T$.

    We next bound the terms $\Lambda_k^{(p)}$ in a similar fashion for a fixed $p\in\{0,\ldots,P\}$. First, note that by construction, for any $k\in[N_p]$, we still have
    \begin{equation*}
        f_{k,S}^{(p)},f^\star \in \Fcal(S_k^{(p)}) \subset B_k^{(p)}:=B_{f_k^{(p)}}(\Fcal;\epsilon,T_{k-1}^{(p)}).
    \end{equation*}
    As a result, as discussed above $f_{k,S}^{(p)}\in \Pcal_k^{(p)}$. Further, $f_{k,S}^{(p)}$ is fixed at the beginning of epoch $E_k^{(p)}$ hence can be made $\Hcal_{T_{k-1}^{(p)}}$-measurable without loss of generality. Then, for any $k\in[N_p]$ using the fact that the losses are $1$-Lipschitz,
    \begin{equation*}
        \Lambda_k^{(p)} \leq  \sum_{t\in E_k^{(p)}} \underbrace{ \abs{  f_{k,S}^{(p)}(x_t)  -  f^\star(x_t)} }_{\tilde X_t^{(p)}}
    \end{equation*}
    and similarly as before,
    \begin{align*}
        \tilde Y_t^{(p)} &:= \Ebb_{x_t,\hat r_t} \sqb{X_t^{(p)} \mid \Hcal_{t-1},\,f_{k,S}^{(p)}}\\
        &=\Ebb_{\hat r_t,\hat r_t'}\sqb{ \Ebb_{x_t\mid\Hcal_{t-1}}\sqb{\abs{f_{k,S}^{(p)}(x_t) - f^\star(x_t) }\mid \Hcal_{t-1},\, f_{k,S}^{(p)}} \mid \Hcal_{t-1},\,f_{k,S}^{(p)}} \leq \gamma^{(p,1)}(t).
    \end{align*}
    We can also bound $|\tilde X_t^{(p)}-\tilde Y_t^{(p)}| \leq 2$. Again, we use Freedman's inequality  to $\sum_{t\in E_k^{(p)}} \tilde X_t^{(p)}-\tilde Y_t^{(p)}$ noting that
    \begin{align*}
        \sum_{t\in E_p^{(k)}} \Ebb\sqb{(\tilde X_t^{(p)}-\tilde Y_t^{(p)})^2 \mid \Fcal_{t-1}} \leq \sum_{t\in E_p^{(k)}} \Ebb\sqb{(\tilde X_t^{(p)})^2 \mid \Fcal_{t-1}} \leq 2 \sum_{t\in E_p^{(k)}} \Ebb\sqb{\tilde X_t^{(p)} \mid \Fcal_{t-1}}  \leq 2\Gamma_k^{(p,1)}.
    \end{align*} 
    Similarly as before, \cref{lemma:freedman_inequality} with $\eta=1/3$ with the union bound on all $k\in[N_p]$ then implies that with probability at least $1-\delta$,
    \begin{equation*}
        \Lambda_k^{(p)} \leq \sum_{t=1}^T(\tilde X_t^{(p)}-\tilde Y_t^{(p)}) + \Gamma_k^{(p,1)} \leq 2\Gamma_k^{(p,1)} + 3\ln\frac{T}{\delta},\quad k\in[N_p].
    \end{equation*}
    Here we used $N_p\leq T$. This ends the proof of the lemma.
\end{proof}

We denote by $L_p:=\floor{T/N_p}+1$ the maximum length of each depth-$p$ epoch. We recall that from \cref{eq:def_length_epoch} the depth-$p$ epochs all have length $L_p$ of $L_p-1$. Note that by construction because $p\geq p_0\geq 1$, we have $L_p\geq 2$ and hence $L_p-1\geq 1$.
In the worst case, each term $\Gamma_k^{(p)}$ for $k\in[N_p]$ could be as large as $L_p$. We show, however, that smoothness ensures that such epochs are very few.

\begin{lemma}\label{lemma:main_bound_modified}
    Fix $r\geq 1$, $p\in\{0,\ldots,P\}$ and suppose that $(x_t)_{t\in[T]}$ is a $\sigma$-smooth stochastic process with respect to some measure $\mu$, where $T\geq 2$. For any parameters $w\geq 2$ and $q\in(0,1]$ satisfying
    \begin{equation}\label{eq:assumption_q}
        q\geq 12\sqrt{(2\epsilon)^r \frac{2\ln (eT)}{\sigma}},
    \end{equation}
    with probability at least $1-\delta$,
    \begin{multline*}
        \abs{\set{k\in[N_p]:\sum_{t\in E_k^{(p)}} \gamma^{(p,r)}(t)\1[\gamma^{(p,r)}(t) \geq q] \geq w}} \\
        \leq \frac{c_0 r \ln^2  T}{q \sigma w} \paren{\ln\Ebb_\mu\sqb{W_{8\frac{T}{\sigma}\ln (\frac{T}{2\delta})} \paren{ \Fcal}} + \ln\frac{T}{\delta}+w},
    \end{multline*}
    for some universal constant $c_0>0$. In particular, if
    \begin{equation}\label{eq:stronger_assumption_q}
        q \geq \max\paren{24\sqrt{(2\epsilon)^r\frac{2\ln (eT)}{\sigma}} , \frac{2w}{L_p-1}},
    \end{equation}
    then with probability at least $1-\delta$,
    \begin{equation*}
        \abs{\set{k\in[N_p]: \Gamma_k^{(p,r)} \geq q(T_k^{(p)}-T_{k-1}^{(p)})}} \leq  \frac{2c_0 r \ln^2 T}{q \sigma w} \paren{\ln\Ebb_\mu\sqb{W_{8\frac{T}{\sigma}\ln (\frac{T}{2\delta})} \paren{\Fcal}} + \ln\frac{T}{\delta}+w}.
    \end{equation*}
\end{lemma}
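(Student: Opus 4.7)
The plan mirrors the classification analog in Proposition \ref{prop:simplified} but is adapted to handle the localized, continuous-valued setting. I would reduce the smooth adversary to an i.i.d.\ sample from $\mu$ via the coupling of Lemma \ref{lemma:coupling}, extract an adaptive ``witness'' function on each bad epoch, and then bound how many witnesses can be sustained by the complexity of $\Fcal$ measured by the Wills functional. Concretely, for each depth-$p$ epoch $k$ declared bad (i.e., the truncated sum $\sum_t \gamma^{(p,r)}(t)\1[\gamma^{(p,r)}(t)\geq q]$ exceeds $w$), the definition of $\gamma^{(p,r)}$ allows me to pick, at each time $t$ contributing to the sum, an $\Hcal_{t-1}$-measurable function $f_{k,t}\in\Pcal_k^{(p)}$ with $\Ebb[|f_{k,t}(x_t)-f^\star(x_t)|^r \mid \Hcal_{t-1}] \geq q/2$. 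I would then decouple with a tangent sequence $(x_t')$: by Freedman's inequality on the martingale $\sum (|f_{k,t}(x_t')-f^\star(x_t')|^r - \gamma^{(p,r)}(t))$, the conditional sums $\Gamma_k^{(p,r)}$ are, up to $O(\sqrt{\Gamma_k^{(p,r)}\ln(T/\delta)})$ fluctuations, bounded above by the empirical sums $\sum_t |f_{k,t}(x_t')-f^\star(x_t')|^r$.

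\textbf{Transfer to an i.i.d.\ sample and Wills-functional bound.} Applying Lemma \ref{lemma:coupling} with $k \asymp \sigma^{-1}\log(T/\delta)$, the tangent sequence $(x_t')$ is coupled to a block of $N \asymp T\log(T/\delta)/\sigma$ i.i.d.\ draws $(Z_{t,j})$ from $\mu$ on an event of probability $\geq 1-\delta/2$. On this good event, each bad epoch $k$ produces an empirical sum $\sum_{j\in J_k} |f_{k,t(j)}(Z_j)-f^\star(Z_j)|^r$ of size at least (a constant fraction of) $w$, where $J_k$ is the coupled image of the bad times. Crucially, each $f_{k,t}$ lies in $\Pcal_k^{(p)}$, meaning it satisfies $|f_{k,t}(x_s)-f^\star(x_s)|\leq 2\epsilon$ on all queries prior to the epoch — this is the localization that must be exploited. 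I would then invoke a uniform concentration bound of the empirical process over the localized shifts $\{(f-f^\star)^r : f\in\Fcal, \|f-f^\star\|_\infty^{\text{past}}\leq 2\epsilon\}$, controlled via the Wills-functional complexity $\ln \Ebb_\mu[W_N(\Fcal)]$ (this is the natural capacity measure that already absorbs the Gaussian complexity of sub-classes of $\Fcal$). The hypothesis $q\geq 12\sqrt{(2\epsilon)^r\cdot 2\log(eT)/\sigma}$ is exactly calibrated so that $N\cdot(2\epsilon)^r \ll q\sigma \cdot w$, ensuring the $\epsilon$-localization slack cannot by itself produce a bad epoch.

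\textbf{Counting and final inequality.} Each bad epoch consumes at least $w$ units of an overall ``budget,'' and the uniform concentration yields a total budget of order $\ln\Ebb_\mu[W_N(\Fcal)] + \ln(T/\delta) + w$ (the extra $+w$ comes from bounding the discretization at the threshold $q$ and the $\sqrt{\cdot\log(1/\delta)}$ fluctuation from the decoupling step). After inflating by the coupling multiplier $k\asymp\sigma^{-1}\log(T/\delta)$ and the factor $1/q$ that converts ``number of bad times with $\gamma\geq q$'' into a bound using the lower threshold $q$, the number of bad epochs is bounded by $\tilde O\bigl((\ln W + \ln(T/\delta) + w)/(qw\sigma)\bigr)$ with a mild polynomial-in-$\log T$ overhead, matching the claim. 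The factor $r$ comes from Lipschitz contraction when passing from $|f-f^\star|$ to $|f-f^\star|^r$ in the uniform bound. For the second bullet, if $q\geq 2w/(L_p-1)$ and $\Gamma_k^{(p,r)}\geq q(T_k^{(p)}-T_{k-1}^{(p)})$, then since $\gamma^{(p,r)}(t)\in[0,1]$, discarding the contribution from the below-threshold times leaves at least $w$; applying the first part with threshold $q/2$ (still satisfying the assumption after doubling) finishes the argument.

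\textbf{Main obstacle.} The hardest piece will be the localized Wills-functional concentration in Step 2: we need a uniform-in-$k$ bound over a family of classes $\Pcal_k^{(p)}$ that themselves are random (they depend on the smooth adversary's history). I would handle this by lifting the concentration to the larger, data-independent class $\Fcal-f^\star$ with the Wills functional as capacity, then recovering localization through the $\epsilon$-slack assumption. A secondary delicacy is that the witnesses $f_{k,t}$ are time-varying within an epoch, so the empirical-process argument must hold uniformly over \emph{sequences} of functions in $\Pcal_k^{(p)}$ rather than a single one; the Wills functional is well suited for this since it controls an exponential-moment sequential supremum exactly of this type. Modulo these technicalities, the chain decoupling $\to$ coupling $\to$ Wills bound follows the template set in \cite{block2024performance} and Proposition \ref{prop:simplified}.
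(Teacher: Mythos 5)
Your high-level ingredients (per-time witness functions in $\Pcal_k^{(p)}$, a tangent sequence, the Wills functional as capacity, counting bad epochs against a budget of $w$, and the second-bullet reduction with threshold $q/2$) match the paper, but the central step of your argument does not go through. You propose to transfer the witnesses' mass to an i.i.d.\ sample $Z_{t,j}\sim\mu$ via \cref{lemma:coupling} and then bound it by "uniform concentration of the empirical process over the localized shifts, controlled by the Wills functional." The localization defining $\Pcal_k^{(p)}$ is closeness to $f^\star$ \emph{on the adversary's past queries}, not in $L_r(\mu)$: a function that agrees with $f^\star$ up to $2\epsilon$ on all prior $x_s$ can still have $\Ebb_\mu|f-f^\star|^r$ of constant order (this is exactly the freedom a smooth adversary exploits), so its mass on the i.i.d.\ sample is of order $N\asymp T\ln(T/\delta)/\sigma$, and no uniform deviation bound over the "localized" class can produce a total budget of size $\ln\Ebb_\mu[W_N(\Fcal)]+\ln(T/\delta)+w$. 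Your own "main obstacle" paragraph flags this, but "lifting to $\Fcal-f^\star$ and recovering localization through the $\epsilon$-slack" does not resolve it, because the slack lives only on past queries. Relatedly, your calibration claim that \cref{eq:assumption_q} ensures $N(2\epsilon)^r\ll q\sigma w$ is arithmetically false in general ($q^2\gtrsim(2\epsilon)^r\ln T/\sigma$ only gives $N(2\epsilon)^r\lesssim q^2T\ln(T/\delta)/\ln T$); in the paper the $\epsilon$-term is absorbed self-referentially against $\tfrac12\sum_a\gamma_a$ using $A\le\frac1q\sum_a\gamma_a$, not against $q\sigma w$.

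What makes the paper's proof work, and what is missing from yours, are two structural devices. First, one builds an \emph{online sub-sampled process} $(z_a)$ that keeps only the times with $\gamma^{(p,r)}(t)\ge q$, capped so that each epoch contributes at most about $w$ to $\sum\gamma_a$ (the $c_k$ truncation); this sub-process is still $\sigma$-smooth, it converts the epoch count into $\frac1w\sum_a\gamma_a$, and it supplies the $1/q$ via $A\le\frac1q\sum_a\gamma_a$. Second, the witnesses' tangent mass is controlled not by absolute concentration on an i.i.d.\ sample but by a \emph{self-normalized comparison against the realized sub-sequence}: the high-probability version of \cite[Theorem 2]{block2024performance} (\cref{lemma:block_whp}) bounds $\sup_{f}\sum_{s}|f(z_s')-f^\star(z_s')|^r-2|f(z_s)-f^\star(z_s)|^r$ by a Wills-functional budget, and this is useful precisely because the realized mass of each witness on the past sub-sequence is small — at most $(2\epsilon)^r a_{k-1}$ before the epoch (localization) plus at most $w$ inside it (the cap). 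Feeding this into the strengthened decoupling inequality for smooth processes with stopping times (\cref{lemma:decoupling_stronger}), whose square-root structure produces the $\ln^2 T/(q\sigma)$ amplification after the self-bounding step, yields $\sum_a\gamma_a\lesssim\frac{r\ln^2T}{q\sigma}(\ln\Ebb_\mu[W]+\ln\frac T\delta+w)$ and hence the claim. Without the capped sub-process and the comparison-form bound, your budget accounting has no mechanism behind it, so the proof as proposed has a genuine gap; your reduction of the second claim to the first is, however, correct and identical to the paper's.
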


We defer the proof of this result to \cref{subsec:proof_main_lemma}.
We now select the parameter
\begin{equation*}
    w=w(T,\delta):=\ln\Ebb_\mu\sqb{W_{8T\ln (\frac{T}{2\delta})/\sigma} \paren{\Fcal}} + 10\ln\frac{T}{\delta} +2
\end{equation*}
which satisfies $w\geq 2$. We combine \cref{lemma:bounding_deltas,lemma:main_bound_modified} both for the probability tolerance $\delta$, which implies that for any $w\geq 2$ and $q\in(0,1]$ satisfying \cref{eq:stronger_assumption_q} for $r=2$, with probability at least $1-2\delta$,
\begin{align}
&\abs{\set{k\in[N_p]: \Delta_k^{(p)} \geq 6q(T_k^{(p)}-T_{k-1}^{(p)})  }} \notag \\
    &\overset{(i)}{\leq} \abs{\set{k\in[N_p]: \Delta_k^{(p)} \geq 5q(T_k^{(p)}-T_{k-1}^{(p)}) + 20\ln\frac{T}{\delta}}} \notag \\
    &\overset{(ii)}{\leq} \abs{\set{k\in[N_p]: \Gamma_k^{(p,2)} \geq q(T_k^{(p)}-T_{k-1}^{(p)}) }} \notag \\
    &\overset{(iii)}{\leq}  \frac{c_0\ln^2  T}{q \sigma w(T,\delta)} \paren{\ln\Ebb_\mu\sqb{W_{8T\ln (\frac{T}{2\delta})/\sigma} \paren{\Fcal}} + \ln\frac{T}{\delta}+w(T,\delta)} \leq \frac{2c_0\ln^2  T}{q \sigma},\label{eq:bound_for_Delta}
\end{align}
for some constant $c_0\geq 1$. In $(i)$ we used the fact that $q\geq \frac{2w}{L_p-1} \geq \frac{20\ln\frac{T}{\delta}}{T_k^{(p)}-T_{k-1}^{(p)}}$. In $(ii)$ we used \cref{lemma:bounding_deltas} and in $(iii)$ we used \cref{lemma:main_bound_modified}.
Similarly, for any $w\geq 2$ and $q\in(0,1]$ satisfying \cref{eq:stronger_assumption_q} for $r=1$, with probability at least $1-2\delta$,
\begin{equation}\label{eq:bound_for_Lambda}
    \abs{\set{k\in[N_p]: \Lambda_k^{(p)} \geq 3q(T_k^{(p)}-T_{k-1}^{(p)}) }} 
    \leq  \frac{2c_0\ln^2 (2 T)}{q \sigma } .
\end{equation}

\paragraph{Bounding the regret term involving $\Delta_k^{(p)}$.}Using \cref{eq:bound_for_Delta,eq:bound_for_Lambda}, we can now bound the regret terms from the decomposition in \cref{eq:decomposed_regret}. We start with the term involving the quantities $\Delta_k^{(p)}$. For $p\in\{p_0,\ldots,P\}$ we let
\begin{equation*}
    q_0^{(p)} := 300 \cdot \max\paren{2\epsilon\sqrt{\frac{\ln T}{\sigma}}, \frac{w(T,\delta)}{L_p-1},\frac{ c_0\ln^2 T}{\sigma N_p}}.
\end{equation*}
If $q_0^{(p)} \geq 1$, we can simply bound
\begin{equation}\label{eq:trivial_bound_Delta}
    \sum_{k\in[N_p]}\sqrt{\max\paren{\Delta_k^{(p)},2}} \overset{(i)}{\leq} \sum_{k\in[N_p]}2\sqrt{T_k^{(p)}-T_{k-1}^{(p)}}  \overset{(ii)}{\leq} 2\sqrt{T N_p} \leq 2\sqrt{q_0^{(p)} T N_p}.
\end{equation}
In $(i)$ we used the fact that $\Delta_k^{(p)}$ is a sum of terms bounded by $1$ since the loss is $1$-Lipschitz and the functions $f\in\Fcal$ have value within $[0,1]$. In $(ii)$ we used Jensen's inequality.

Otherwise, if $q_0^{(p)} \leq 1$, we introduce the parameters $q_s^{(p)} = 4^s q_0^{(p)}$ for $s\geq 0$ and let $s_p$ be the last index for which $q_s^{(p)} \leq 4$. We then define the sets
\begin{equation*}
    \Tcal^{(p)}(s):= \set{ k\in[N_p]: q_s^{(p)} (T_k^{(p)}-T_{k-1}^{(p)}) \leq \Delta_k^{(p)} < q_{s+1}^{(p)}  (T_k^{(p)}-T_{k-1}^{(p)}) }, \quad s\in \{0,\ldots,s_p\}.
\end{equation*}
By construction of $s_p$, any epoch $k\in[N_p]$ either belongs to one of the sets above or satisfies $\Delta_k^{(p)} \leq q_0^{(p)}(T_k^{(p)} - T_{k-1}^{(p)})$.
Also, note that there exists a constant $c>0$ such that $s_p \leq c\ln T$ since $L_p,N_p\leq T$. 
We also recall that $P\leq \log_2(T)$. 
Hence, up to changing the constant $c>0$, \cref{eq:bound_for_Delta} implies that on some event $\Ecal_\delta$ with probability at least $1-c\delta  \ln^2 T$, for all $p\in\{p_0,\ldots,P\}$ such that $q_0^{(p)}\leq 1$,
\begin{equation*}
    \abs{ \Tcal^{(p)}(s) } \leq \frac{12c_0\ln^2 T}{q_s^{(p)} \sigma} ,\quad s\in\{0,\ldots,s_p\}.
\end{equation*}

Hence, on $\Ecal$, for any $p\in\{p_0,\ldots,P\}$ such that $q_0^{(p)}\leq 1$, we have
\begin{align}
    \sum_{k\in[N_p]} \sqrt{\max\paren{\Delta_k^{(p)},2}} &\overset{(i)}{\leq} \sum_{k\in[N_p]} \sqrt{q_0^{(p)}(T_k^{(p)}-T_{k-1}^{(p)})}  + \sum_{s=0}^{s_p} 
 |\Tcal^{(p)}(s)| \cdot \sqrt{q_{s+1}^{(p)}L_p} \notag\\
  &\overset{(ii)}{\leq} \sqrt{q_0^{(p)} TN_p }  + \frac{12c_0\ln^2 T }{\sigma} \sum_{s=0}^{s_p} \frac{\sqrt{q_{s+1}^{(p)}L_p} }{ q_s^{(p)}}  \notag\\
    &= \sqrt{q_0^{(p)} TN_p }  + \frac{24c_0\ln^2 T}{\sigma}\sum_{s=0}^{s_p} \sqrt{\frac{L_p}{q_s^{(p)}} } \notag\\
    &\overset{(iii)}{\leq} \sqrt{q_0^{(p)} TN_p }  + \frac{48c_0\ln^2 T}{\sigma} \sqrt{\frac{2T}{q_0^{(p)}N_p} } \notag\\
    &\overset{(iv)}{\leq} 2\sqrt{q_0^{(p)} TN_p } . \label{eq:bound_expected_sum_level_p}
\end{align}
In $(i)$ we used the fact that $q_0^{(p)}(T_k^{(p)}-T_{k-1}^{(p)}) \geq 2w \geq 2$ to delete the maximum with $2$, and in $(ii)$ we used Jensen's inequality. In $(iii)$ we use the fact that $L_pN_p \leq T+N_p\leq 2T$ and in $(iv)$ we used the fact that $q_0^{(p)} \geq 48\sqrt 2\frac{c_0\ln^2 T}{\sigma N_p}$.

As a result, on the event $\Ecal_\delta$, we can combine \cref{eq:trivial_bound_Delta,eq:bound_expected_sum_level_p} to obtain
\begin{align}
    \sum_{p=p_0}^P \sum_{k\in[N_p]} \sqrt{\max\paren{\Delta_k^{(p)},2}} &\lesssim \sqrt{\epsilon}\paren{\frac{\ln T}{\sigma}}^{1/4} \sum_{p=p_0}^P \sqrt{TN_p} + \sqrt{w(T,\delta)}\sum_{p=p_0}^P \sqrt{\frac{TN_p}{L_p-1}}\notag  \\
    &\qquad\qquad+ (P-p_0+1) \sqrt{\frac{\ln^2  T}{\sigma} \cdot T} \notag \\
    &\lesssim \sqrt {\epsilon TN_{p_0}} + N_{p_0} \sqrt{w(T,\delta)} + \ln^2  T \sqrt{\frac{T}{\sigma}}, \label{eq:bound_regret_Delta_terms}
\end{align}
where the $\lesssim$ symbol only hides universal constants.

\paragraph{Bounding the regret term involving $\Lambda_k^{(p)}$.} We next turn to last regret term from the decomposition in \cref{eq:decomposed_regret}. We let
\begin{equation*}
    \tilde q_0^{(p)}  := 300 \cdot \max\paren{\sqrt{2\epsilon\frac{\ln T}{\sigma}}, \frac{w(T,\delta)}{L_p-1},\frac{ c_0\ln^3  T}{\sigma N_p}}.
\end{equation*}
If $\tilde q_0^{(p_0)} \geq 2$, the resulting regret bound is vacuous. Hence, we focus on the case when $\tilde q_0^{(p_0)} \geq 2$. As before, we let $\tilde q_s^{(p_0)} = 2^s \tilde q_0^{(p_0)}$ for $s\geq 0$ and let $\tilde s_{p_0}$ be the last index such that $\tilde q_s^{(p_0)} \leq 2$. As above, we have $\tilde s_{p_0} \leq c\ln T$ for some constant $c>0$. Then,  \cref{eq:bound_for_Lambda} implies that on an event $\Fcal_\delta$ of probability at least $1-c\delta \ln T$, for all $s\in\{0,\ldots,\tilde s_{p_0}\}$, we have
\begin{equation*}
    \abs{\set{ k\in[N_p]: \tilde q_s^{(p_0)} (T_k^{(p_0)}-T_{k-1}^{(p_0)}) \leq \Lambda_k^{(p_0)} <\tilde q_{s+1}^{(p_0)} (T_k^{(p_0)}-T_{k-1}^{(p_0)} )  }} \leq \frac{6c_0 \ln^2 T}{\tilde q_s^{(p_0)} \sigma}.
\end{equation*}
Using the same arguments as above shows that on $\Fcal_\delta$,
\begin{align}
    \sum_{k\in[N_{p_0}]} \Lambda_k^{(p_0)} &\leq \tilde q_0^{(p_0)} T + \frac{6c_0 \ln^2 T}{\sigma} \sum_{s=0}^{\tilde s_{p_0}} \frac{\tilde q_{s+1}^{(p_0)} L_{p_0}}{\tilde q_s^{(p_0)}} \notag \\
    &\leq \tilde q_0^{(p_0)} T + \frac{12c_0 \ln^2 T}{\sigma} (\tilde s_{p_0} +1) L_{p_0} \notag \\
    &\leq \tilde q_0^{(p_0)} T + \frac{24c c_0 \ln^3  T}{\sigma}  L_{p_0} \notag \\
    &\leq (1+c)\tilde q_0^{(p_0)} T.\label{eq:bound_Lambda_final}
\end{align}
In the last inequality, we used $N_{p_0}L_{p_0} \leq 2T$ and the definition of $\tilde q_0^{(p_0)}$. Note that \cref{eq:bound_Lambda_final} also trivially holds if $\tilde q_0^{(p_0)}\geq 2$.

\paragraph{Final regret bound}

We now combine the bounds from \cref{eq:bound_regret_Delta_terms,eq:bound_Lambda_final} within the regret decomposition from \cref{eq:decomposed_regret} which shows that on $\Ecal_\delta\cap\Fcal_\delta$ of probability at least $1-2c\delta \ln^2 T$, 
\begin{align}
     \sum_{t=1}^T \ell_t(\hat y_t ) - \ell_t(f^\star(x_t) ) &\lesssim \paren{ \sqrt {\epsilon TN_{p_0}} + N_{p_0} \sqrt{w(T,\delta)} + \ln^2 T \sqrt{\frac{T}{\sigma}} } \sqrt{\ln\paren{\Ncal(\Fcal;\epsilon,T)+1 }} \notag \\
     &\qquad\qquad  + \sqrt{\frac{\epsilon \ln T}{\sigma}}\cdot T + w(T,\delta) N_{p_0} + \frac{\ln^3 T}{\sigma N_{p_0}}T. \label{eq:regret_bound_fixed_p_0}
\end{align}
Here we used the fact that $w(T,\delta)\geq \ln\frac{1}{\delta}$ to delete the term $N_{p_0}\ln\frac{1}{\delta}$. This holds for all $p_0\in[P]$. Hence, we obtain the following result which implies in particular \cref{thm:main_regret_regression}.

\begin{theorem}\label{thm:oblivious_adversary}
    Let $\Fcal:\Xcal\to\{0,1\}$ be a function class with VC dimension $d$. Suppose that $(x_t)_{t\geq 1}$ is a $\sigma$-smooth sequence on $\Xcal$ with respect to some unknown base measure $\mu$. Then, $\textsc{R-Cover}$ (Recursive Covering) with the parameter $\epsilon\in[0,1]$ makes predictions $\hat y_t$ such that for any function $f^\star\in\Fcal$, with probability at least $1-\delta$,
    \begin{align}
        \sum_{t=1}^T  \ell_t(\hat y_t ) - & \sum_{t=1}^T \ell_t(f^\star(x_t) ) \notag\\
        &\lesssim \min_{N_0} \left\{  \paren{ \sqrt {\epsilon TN_0} + N_0 \sqrt{w(T,\delta)} + \ln^2 T \sqrt{\frac{T}{\sigma}} } \sqrt{\ln\paren{\Ncal(\Fcal;\epsilon,T)+1 }} \right. \notag\\
     &\qquad\qquad \qquad\qquad\qquad\qquad \qquad\qquad   \left. + \sqrt{\frac{\epsilon \ln T}{\sigma}}\cdot T + N_0 w(T,\delta) + \frac{\ln^3 T}{\sigma N_0}T  \right\}, \label{eq:main_regret_bound_oblivious}
    \end{align}
    where
    \begin{equation*}
        w(T,\delta) = \ln\Ebb_\mu\sqb{W_{8T\ln (\frac{cT\ln^2 T}{\delta})/\sigma} \paren{\Fcal}} + \ln\frac{T}{\delta},
    \end{equation*}
    for some universal constant $c>0$. we recall that the covering numbers can be bounded in terms of the fat-shattering dimension via \cref{thm:fat_shattering_bound}.

    For instance, if there exists $d\geq 1$ such that $\mathsf{fat}_\Fcal(r)\leq d\ln\frac{1}{r}$ for all $r>0$, the regret bound for $\textsc{R-Cover}$ with parameter $\epsilon=1/T$ becomes
    \begin{equation*}
        \sum_{t=1}^T \ell_t(\hat y_t ) - \sum_{t=1}^T \ell_t(f^\star(x_t) ) \leq C  \sqrt{\frac{\paren{d\ln^3 \paren{T\ln\frac{1}{\delta}}  +   \ln\frac{T}{\delta}} \ln^3 T}{\sigma} \cdot T},
    \end{equation*}
    for some constant $C>0$.

    If $\mathsf{fat}_\Fcal(r)\lesssim r^{-p}$ for $p>0$, the regret bound for $\textsc{R-Cover}$ with parameter $\epsilon = \paren{\frac{\ln T}{T}}^{\frac{1}{p+1}}$ becomes
    \begin{equation*}
        \sum_{t=1}^T \ell_t(\hat y_t ) - \sum_{t=1}^T \ell_t(f^\star(x_t) ) \lesssim_p \frac{\ln^3 T}{\sqrt \sigma} \cdot 
        T^{1-\frac{1}{2(p+1)}} + \frac{\ln^4 T \ln^3\frac{T}{\delta}}{\sigma} \cdot T^{1-\frac{1}{2(p+1)} - \frac{\min(p,1)}{2(p+1)(p+2)}}.
    \end{equation*}
    where $\lesssim_p$ only hides factors depending (potentially exponentially) in $p$.
\end{theorem}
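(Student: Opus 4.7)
The plan is to observe that the displayed bound \eqref{eq:regret_bound_fixed_p_0} from Section~\ref{subsec:bounding_each_term}, which holds on the event $\Ecal_\delta \cap \Fcal_\delta$ of probability at least $1 - 2c\delta \ln^2 T$ \emph{simultaneously for every} $p_0 \in \{0, \ldots, P\}$, is essentially the statement \eqref{eq:main_regret_bound_oblivious}. Since $N_{p_0} = 2^{P-p_0}$ ranges over all powers of two in $\{1, 2, \ldots, 2^P\}$ with $2^P \in [T/2, T]$, minimizing over $p_0$ is equivalent to minimizing over $N_0$ in this dyadic grid. Because the right-hand side of \eqref{eq:regret_bound_fixed_p_0} is a sum of terms each of which is either increasing or decreasing in $N_{p_0}$ as a power (the terms $\sqrt{N_{p_0}}$, $N_{p_0}$, $1/N_{p_0}$), rounding any real-valued optimizer $N_0$ to the nearest power of two costs at most a universal constant, so the dyadic minimum matches the continuous minimum up to constants. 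Rescaling $\delta$ by the $c \ln^2 T$ factor absorbed in the tolerance gives the high-probability bound as stated.

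For the two fat-shattering instantiations, I would first specialize the covering-number bound from Theorem~\ref{thm:fat_shattering_bound} and the Wills-functional bound in terms of the fat-shattering dimension (reviewed in the Wills functional appendix referenced in Section~\ref{subsec:complexity_notion}) to each growth regime. In the logarithmic case $\mathsf{fat}_\Fcal(r) \leq d \ln(1/r)$, setting $\epsilon = 1/T$ makes $\ln \Ncal(\Fcal; \epsilon, T) \lesssim d \ln^{2+o(1)} T$ and $\ln \Ebb_\mu W_{m}(\Fcal) \lesssim d \polylog(m)$ with $m = \tilde\Ocal(T/\sigma)$, yielding $w(T,\delta) \lesssim d \ln^3(T \ln \tfrac{1}{\delta}) + \ln \tfrac{T}{\delta}$. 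Balancing the $\sqrt{\epsilon T N_0}$ and $\sqrt{T/\sigma}$ dominant terms in \eqref{eq:main_regret_bound_oblivious} against $N_0 \sqrt{w}$ and $\ln^3(T) T/(\sigma N_0)$, the choice $N_0 \approx \sqrt{T/(\sigma w)}$ yields the claimed $\sqrt{(d \polylog T) T/\sigma}$ rate.

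In the polynomial case $\mathsf{fat}_\Fcal(r) \lesssim r^{-p}$, Theorem~\ref{thm:fat_shattering_bound} gives $\ln \Ncal(\Fcal; \epsilon, T) \lesssim_p \epsilon^{-p} \polylog(T/\epsilon)$, and the Wills functional bound yields $w(T,\delta) \lesssim_p \epsilon'^{-p}$ for a suitably chosen scale $\epsilon'$ which one can take proportional to $\epsilon$ up to log factors. The choice $\epsilon = (\ln T/T)^{1/(p+1)}$ is what balances the discretization term $\epsilon T$ (from $\sqrt{\epsilon T/\sigma}\cdot \sqrt{T}$) against the learning-with-experts term $\sqrt{\epsilon^{-p} T/\sigma}$; this common value is of order $T^{1 - 1/(2(p+1))}/\sqrt{\sigma}$. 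The residual $N_0$-dependent terms then give the additional correction $T^{1 - 1/(2(p+1)) - \min(p,1)/(2(p+1)(p+2))}/\sigma$ after plugging in the optimal $N_0$.

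The main obstacle is the bookkeeping: several terms in \eqref{eq:main_regret_bound_oblivious} simultaneously depend on $\epsilon$, $N_0$, $T$, $\sigma$, and $\delta$, and one must verify that the claimed choices of $\epsilon$ and $N_0$ indeed balance the dominant terms while keeping the correction $\tilde\Ocal(\sigma^{-1/2} T^{-\min(p,1)/(2(p+1)(p+2))})$ genuinely of lower order. The polynomial case in particular requires carefully tracing how $\epsilon$ enters through both $\sqrt{\epsilon T N_0 \ln \Ncal}$ and $\sqrt{\epsilon T \ln T/\sigma}$, and checking that the admissibility condition \eqref{eq:stronger_assumption_q} on the threshold $q$ used inside Lemma~\ref{lemma:main_bound_modified} is compatible with the chosen $q_0^{(p)}$ and $\tilde q_0^{(p)}$ for every depth $p \geq p_0$; this compatibility is what justifies applying \eqref{eq:bound_for_Delta} and \eqref{eq:bound_for_Lambda} simultaneously over all depths in the union bound.
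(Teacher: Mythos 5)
Your treatment of the general bound \eqref{eq:main_regret_bound_oblivious} and of the logarithmic fat-shattering case is essentially the paper's own argument: read \eqref{eq:regret_bound_fixed_p_0} simultaneously over all $p_0$, pass from the dyadic grid $N_{p_0}=2^{P-p_0}$ to arbitrary $N_0$ up to constants (noting the bound is vacuous for $N_0\lesssim 1$ or $N_0\gtrsim T$), rescale $\delta$ to absorb the $\ln^2 T$ factor in the failure probability, and then instantiate $\epsilon=1/T$ with a balanced $N_0$. One small slip there: to recover the stated $\ln^3 T$ factor you need $N_0\asymp\sqrt{\ln^3 T\cdot T/(\sigma w)}$ rather than $\sqrt{T/(\sigma w)}$; with your choice the term $\ln^3 T\cdot T/(\sigma N_0)$ costs an extra $\ln^{3/2}T$, which is harmless only if one is content with unspecified polylog factors.

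The polynomial case, however, has a genuine gap. The quantity $w(T,\delta)$ is (up to the $\ln\frac{T}{\delta}$ term) the log Wills functional of $\Fcal$ at sample size $m\asymp \frac{T}{\sigma}\ln\frac{T}{\delta}$; it has no free scale parameter tied to the algorithm's $\epsilon$, so the claim $w(T,\delta)\lesssim_p \epsilon'^{-p}$ with $\epsilon'$ proportional to $\epsilon=(\ln T/T)^{1/(p+1)}$ is not what \cref{prop:bounding_Will_functional} provides and is not justified (and for small $\sigma$ it is simply false). The correct bound is $w(T,\delta)\lesssim_p (T/\sigma)^{\alpha(p)}\ln^3\frac{T}{\delta}$ with $\alpha(p)=\frac{p}{p+2}$ for $p\le 2$ and $\alpha(p)=1-\frac1p$ for $p\ge 2$: the scale inside \cref{thm:bound_mourtada} is optimized to roughly $m^{-1/(p+2)}$ (resp.\ $m^{-1/p}$), which is determined by $m\approx T/\sigma$, not by $\epsilon$, and satisfies $\alpha(p)<\frac{p}{p+1}$. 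This matters for the statement you are proving: the claimed lower-order correction $T^{1-\frac{1}{2(p+1)}-\frac{\min(p,1)}{2(p+1)(p+2)}}/\sigma$ comes precisely from the inequality $\alpha(p)\le 1-\frac{1}{p+1}-\frac{\min(p,1)}{(p+1)(p+2)}$ after plugging $N_0\asymp \frac{\ln^3 T}{\sqrt\sigma}T^{\frac{1}{2(p+1)}}$ into the terms $N_0 w$ and $N_0\sqrt{w\ln\Ncal}$. With your weaker bound $w\lesssim_p \epsilon^{-p}\approx T^{p/(p+1)}$, these terms have $T$-exponent exactly $1-\frac{1}{2(p+1)}$ (since $\frac{1}{2(p+1)}+\frac{p}{p+1}=1-\frac{1}{2(p+1)}$) with the wrong $\sigma$- and $\delta$-dependence, so the exponent reduction $\frac{\min(p,1)}{2(p+1)(p+2)}$ you assert "then" appears has no source in your argument, and the second instantiation of the theorem does not follow. (Separately, your worry about re-checking \eqref{eq:stronger_assumption_q} at this stage is unnecessary: that compatibility is already built into the definitions of $q_0^{(p)}$ and $\tilde q_0^{(p)}$ used to derive \eqref{eq:regret_bound_fixed_p_0}.)
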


\begin{proof}
    \cref{eq:regret_bound_fixed_p_0} that \cref{eq:main_regret_bound_oblivious} directly holds if the minimum is taken over $N_0\in\{N_{p_0}=2^{P-p_0},p_0\in[P]\}$. Hence, up to a factor of two, the regret bound holds if the minimum is taken over $N_0\in[T]$. We next observe that for $N_0\gtrsim T$ or $N_0\lesssim 1$, the bound exceeds $2T$, hence trivially holds.

    We now turn to the next two claims. Observe that in both cases, if $\sigma\lesssim \frac{1}{T}$, the bound trivially holds. Without loss of generality, we therefore suppose that $\sigma\gtrsim \frac{1}{T}$ from now. 
    
    When $\mathsf{fat}_\Fcal(r)\leq d\ln\frac{1}{r}$ for $r>0$, \cref{thm:fat_shattering_bound} with $\alpha\lesssim \frac{1}{\ln\ln T}$ then implies that for all $\epsilon\in[\frac{1}{T^2},\frac{1}{\sqrt T}]$,
    \begin{equation*}
        \ln(\Ncal(\Fcal;\epsilon,T)+1) \lesssim \mathsf{fat}_\Fcal(c\alpha\epsilon) \ln^{1+\alpha} \frac{T}{\epsilon} \lesssim d\ln^2 T.
    \end{equation*}
    We recall that we assumed $\sigma\gtrsim 1/T$. Similarly, given the target bound, if $d\gtrsim T$ the result is immediate. We also suppose that $d\lesssim T$ from now.
    Next, by \cref{prop:bounding_Will_functional}, we have
    \begin{equation*}
        w(T,\delta)\lesssim d\ln^3\paren{\frac{T}{\sigma}\ln\frac{T}{\delta}} + \ln\frac{T}{\delta}\lesssim d\ln^3\paren{T\ln\frac{1}{\delta}} + \ln\frac{T}{\delta}.
    \end{equation*}
    We then choose the parameter $\epsilon=1/T$ and the value $N_0=\sqrt{\frac{\ln ^3 T}{\sigma\paren{ d\ln^3\paren{T\ln\frac{1}{\delta}} + \ln\frac{T}{\delta}}} \cdot T}$, which gives the desired bound.

    We next turn to the case when $\mathsf{fat}_\Fcal(r)\lesssim r^{-p}$. In the rest of the proof, the symbols $\lesssim_p$ may hide factors in $p$ of the form $c^p$ for universal constants. In this case, \cref{thm:fat_shattering_bound} implies that for $\epsilon\in[\frac{1}{T^2},1]$,
    \begin{equation*}
        \ln(\Ncal(\Fcal;\epsilon,T)+1) \lesssim \frac{\ln^2 T}{\epsilon^p} .
    \end{equation*}
    
    Hence, by \cref{prop:bounding_Will_functional},
    \begin{equation*}
        w(T;\delta)\lesssim_p \paren{\frac{T}{\sigma}}^{\alpha(p)} \ln^3\frac{T}{\delta} ,\quad \text{where}\quad \alpha(p):=\begin{cases}
            \frac{p}{2+p} &0<p\leq 2\\
            1-\frac{1}{p} &p\geq 2.
        \end{cases}
    \end{equation*}
    Again, here we used $\sigma\gtrsim 1/T$.
    For intuition, the two main terms in the regret bound for \cref{eq:main_regret_bound_oblivious} are $\ln^2 T\sqrt{T\ln(\Ncal(\Fcal;\epsilon,T)+1)/\sigma}$ and $T\sqrt{\epsilon \ln T /\sigma}$. To minimize these, we then choose the parameter $\epsilon = \paren{\frac{\ln T}{T}}^{\frac{1}{p+1}}$. With $N_0=\frac{\ln^3 T}{\sqrt \sigma}\cdot T^{\frac{1}{2(p+1)}}$ we obtain the following regret bound,
    \begin{align*}
        \sum_{t=1}^T \ell_t(\hat y_t ) - \sum_{t=1}^T \ell_t(f^\star(x_t) ) &\lesssim_p  \frac{\ln^3 T}{\sqrt \sigma} 
        T^{1-\frac{1}{2(p+1)}}  + \frac{\ln^3 T \ln^{3}\frac{T}{\delta}}{\sigma^{\frac 12+\alpha(p)}} T^{\frac{1}{2(p+1)}+\alpha(p)}    + \frac{\ln^4 T \ln^{\frac{3}{2}}\frac{T}{\delta} }{\sigma^{\frac{\alpha(p)+1}{2}}} T^{\frac 12 + \frac{\alpha(p)}{2}}.
    \end{align*}
    Together with $\alpha(p)\leq 1-\frac{1}{p+1} -\frac{\min(p,1)}{(p+1)(p+2)}$, this implies the desired bound.
\end{proof}

As a remark, for function classes with finite VC dimension $d$, we can use the tighter bound on the Wills functional from \cref{prop:bounding_Will_functional} which gives $\ln W_m(\Fcal)\lesssim d\ln m$. Further, for VC classes, we can simply use $\epsilon=0$ since Sauer-Shela's lemma (\cref{lemma:sauer_lemma}) guarantees $\ln \Ncal(\Fcal;0,T) \lesssim d\ln T$. Altogether, this gives the following slightly improved bound.

\begin{proposition}\label{prop:oblivious_vc_class}
    Let $\Fcal:\Xcal\to \{0,1\}$ be a function class with VC dimension $d$. Suppose that $(x_t)_{t\geq 1}$ is a $\sigma$-smooth sequence on $\Xcal$ with respect to some unknown base measure $\mu$. Then, \textsc{R-Cover} with $\epsilon=0$ makes predictions $\hat y_t$ such that for any $f^\star\in\Fcal$, with probability at least $1-\delta$,
    \begin{equation*}
         \sum_{t=1}^T \ell_t(\hat y_t ) - \sum_{t=1}^T \ell_t(f^\star(x_t) ) \leq C \sqrt{\frac{(d\ln^2 T + d\ln\ln\frac{1}{\delta} + \ln\frac{1}{\delta})\ln^3 T}{\sigma}\cdot T}.
    \end{equation*}
    for some universal constant $C>0$.
\end{proposition}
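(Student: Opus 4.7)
\textbf{Proof proposal for \cref{prop:oblivious_vc_class}.} The plan is to specialize the general oblivious regret bound \cref{eq:main_regret_bound_oblivious} of \cref{thm:oblivious_adversary} to the VC setting by exploiting two sharper ingredients available in classification: Sauer--Shelah in place of the fat-shattering covering bound, and the VC bound on the Wills functional $\ln W_m(\Fcal) \lesssim d\ln m$ stated in \cref{subsec:complexity_notion} (and proved in \cref{sec:wills_functional}). The argument is essentially just a careful instantiation of the minimization over $N_0$ in \cref{eq:main_regret_bound_oblivious}, with $\epsilon = 0$.

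First I would dispose of the easy regime: if $\sigma \leq 1/T$ then the target bound already exceeds $T$ and holds trivially since losses are bounded by $1$. So assume $\sigma \geq 1/T$ throughout. Setting $\epsilon = 0$ in \cref{eq:main_regret_bound_oblivious} kills the two terms $\sqrt{\epsilon T N_0}$ and $T\sqrt{\epsilon \ln T/\sigma}$. For the covering number, Sauer--Shelah (\cref{lemma:sauer_lemma}) applied to the $T$ queries gives $|\{(f(x_t))_{t\in[T]}: f\in\Fcal\}| \leq (2eT/d)^d$ whenever $T\geq d$, so $\ln(\Ncal(\Fcal;0,T)+1) \lesssim d\ln T$ (the case $T<d$ gives a trivially smaller bound). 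For the Wills functional, the VC bound combined with $\sigma \geq 1/T$ yields
\begin{equation*}
    w(T,\delta) \lesssim d\ln\!\bigl(\tfrac{T}{\sigma}\ln\tfrac{T}{\delta}\bigr) + \ln\tfrac{T}{\delta} \lesssim d\ln T + d\ln\ln\tfrac{1}{\delta} + \ln\tfrac{T}{\delta}.
\end{equation*}

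Substituting these into \cref{eq:main_regret_bound_oblivious} reduces the regret bound (up to constants) to
\begin{equation*}
    \min_{N_0}\left\{ \bigl(N_0\sqrt{w(T,\delta)} + \ln^2 T\sqrt{T/\sigma}\bigr)\sqrt{d\ln T} \;+\; N_0\, w(T,\delta) \;+\; \frac{\ln^3 T}{\sigma N_0}\, T \right\}.
\end{equation*}
The natural choice is to balance the last two terms, i.e.\ take $N_0 \asymp \sqrt{T\ln^3 T/(\sigma\, w(T,\delta))}$, which under $\sigma \geq 1/T$ lies in $[1,T]$. This makes those two terms of the common order $\sqrt{T\ln^3 T\, w(T,\delta)/\sigma}$. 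It then remains to check that the cross-term $N_0\sqrt{w(T,\delta)}\sqrt{d\ln T}$ equals $\sqrt{dT\ln^4 T/\sigma}$, which is dominated by the other contribution $\ln^2 T\sqrt{T/\sigma}\cdot\sqrt{d\ln T} = \sqrt{dT\ln^5 T/\sigma}$ coming from the fixed term. Collecting everything,
\begin{equation*}
    \text{Reg} \;\lesssim\; \sqrt{\frac{dT\ln^5 T}{\sigma}} \;+\; \sqrt{\frac{T\ln^3 T\,(d\ln T + d\ln\ln(1/\delta) + \ln(T/\delta))}{\sigma}},
\end{equation*}
and absorbing $\ln T$ and $\ln(T/\delta) = \ln T + \ln(1/\delta)$ into $d\ln^2 T$ and $\ln(1/\delta)$ respectively gives exactly the stated form.

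The proof is therefore almost entirely bookkeeping on top of \cref{thm:oblivious_adversary}. The only substantive step is the instantiation of $w(T,\delta)$ via the VC bound on the Wills functional, and the main obstacle (really just a sanity check) is verifying that the optimizer $N_0$ lies in the admissible range and that the cross-term is absorbed; this is where the assumption $\sigma \geq 1/T$ is needed.
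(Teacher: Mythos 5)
Your proposal is correct and matches the paper's own argument: the paper proves \cref{prop:oblivious_vc_class} exactly by instantiating \cref{eq:main_regret_bound_oblivious} of \cref{thm:oblivious_adversary} with $\epsilon=0$, using Sauer--Shelah to get $\ln\Ncal(\Fcal;0,T)\lesssim d\ln T$ and the VC bound $\ln W_m(\Fcal)\lesssim d\ln m$ from \cref{prop:bounding_Will_functional} for $w(T,\delta)$, then balancing $N_0 w(T,\delta)$ against $\frac{\ln^3 T}{\sigma N_0}T$ just as you do. Your range check on $N_0$ is slightly loose in the corner regime where $\sigma$ is close to $1/T$, but there the stated bound already exceeds $T$ and holds trivially (the same observation the paper uses when reducing the minimum to $N_0\in[T]$), so this is not a gap.
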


\subsection{From oblivious to adaptive benchmarks for classification}
\label{subsec:oblivious_to_adaptive}

\cref{thm:oblivious_adversary} gives high-probability bounds for the oblivious regret of \textsc{R-Cover}. In the specific case of classification, we can further extend these bounds to the adaptive regret of the algorithm. In this section, we therefore focus on the case where $\Fcal:\Xcal\to\{0,1\}$ is a function class with finite VC dimension $d$, using ideas inspired from \cite{haghtalab2024smoothed}.

First construct an $\epsilon$-cover $\Hcal$ of the function class $\Fcal$ for the base measure $\mu$. Formally, an $\epsilon$-cover is a subset of $\Fcal$ such that for all $f\in\Fcal$ there exists $h\in\Hcal$ with
\begin{equation*}
    \Pbb_{x\sim\mu}(f(x)\neq h(x))\leq \epsilon.
\end{equation*}
Since $\Fcal$ has VC dimension $d$, we can ensure $\ln |\Hcal| \leq 2d\ln(e^2/\epsilon)$ (see \cite{haussler1995sphere} or \cite[Lemma 13.6]{boucheron2013concentration}). Taking the union bound for all (non-adaptive) benchmark functions in $\Hcal$, \cref{prop:oblivious_vc_class} implies that with probability at least $1-\delta$,
\begin{align}\label{eq:regret_cover}
    \sum_{t=1}^T \ell_t(\hat y_t ) - \inf_{f\in\Hcal} \sum_{t=1}^T \ell_t(f(x_t) ) & \leq C  \sqrt {\frac{(d\ln^2 T + d\ln\ln\frac{|\Hcal|}{\delta} + \ln\frac{|\Hcal|}{\delta}) \ln^3 T}{\sigma} \cdot T} \notag\\
    &\leq C'   \sqrt {\frac{(d\ln^2 T + d\ln\ln\frac{1}{\epsilon\delta} + d\ln\frac{1}{\epsilon} + \ln\frac{1}{\delta} ) \ln^3 T}{\sigma} \cdot T} ,
\end{align}
for some constant $C'\geq 1.$ In the last inequality, we used the fact that without loss of generality, $d\lesssim T$, otherwise the regret bound from \cref{thm:main_thm} is immediate.
Next, for any function $f\in\Fcal$, denote by $h_f\in\Hcal$ a function such that $\Pbb_\mu(f\neq h)\leq \epsilon$. Then,
\begin{align*}
    \sum_{t=1}^T \ell_t(f(x_t) ) \geq \sum_{t=1}^T \ell_t(h_f(x_t) ) - \sum_{t=1}^T \1(f(x_t)\neq h_f(x_t)).
\end{align*}
As a result, denoting by $\Gcal:=\{\1[f\neq h_f],f\in\Fcal\}$, we can decompose the adaptive regret via
\begin{equation}\label{eq:decomposition_adaptive_oblivious}
    \sum_{t=1}^T \ell_t(\hat y_t ) - \inf_{f\in\Fcal} \sum_{t=1}^T \ell_t(f(x_t) ) \leq \sum_{t=1}^T \ell_t(\hat y_t ) - \inf_{f\in\Hcal} \sum_{t=1}^T \ell_t(f(x_t) ) +
    \sup_{g\in\Gcal}\sum_{t=1}^T g(x_t).
\end{equation}
 \cite[Lemma 3.3]{haghtalab2024smoothed} directly bounds the expected value of $\sup_{g\in\Gcal}\sum_{t=1}^Tg(x_t)$. Combined with \cref{eq:regret_cover}, this already gives a bound for the expected adaptive regret. To give useful intuitions and get high-probability bounds, we detail the steps of the proof below. 

Importantly, by construction of the $\epsilon$-cover, we have $\Ebb_{x\sim\mu}[g(x)]\leq \epsilon$ for all $g\in\Gcal$. Also, $\Gcal$ has VC dimension at most $2d$. \cite{haghtalab2024smoothed} then use a coupling argument to reduce to the i.i.d. case for which VC theory yields uniform convergence bounds using \cref{lemma:coupling}.
On the event $\Ecal_k$ from \cref{lemma:coupling}, we have
\begin{equation*}
    \sup_{g\in\Gcal}\sum_{t=1}^Tg(x_t) \leq \sup_{g\in\Gcal}\sum_{t=1}^T\sum_{j=1}^k g(Z_{t,j}).
\end{equation*}
Because the variables $Z_{t,j}$ are i.i.d. and $\Gcal$ has VC dimension at most $2d$, the Vapnik-Chervonenkis inequality \cite[Theorem 2]{vapnik1971uniform} gives Heoffding-type high probability uniform deviation bounds. Recalling that for all $g\in\Gcal$ we have $\Ebb_\mu[g]\leq \epsilon$, we can use relative VC bounds to better control the tail deviations. For instance, \cite[Corollary 2]{cortes2019relative} implies that there is a constant $C$ such that with probability at least $1-\delta$,
\begin{equation*}
    \sup_{g\in\Gcal}\sum_{t=1}^T\sum_{j=1}^k g(Z_{t,j}) \leq \epsilon Tk + C\sqrt{\epsilon Tk\paren{d\ln\frac{Tk}{d} + \ln\frac{1}{\delta}}} + C\ln\frac{Tk}{d} + C\ln\frac{1}{\delta}.
\end{equation*}
We now put the two previous estimate with \cref{eq:regret_cover,eq:decomposition_adaptive_oblivious}, for $k=\frac{1}{\sigma}\ln\frac{T}{\delta}$ and $\epsilon=1/(Tk)$. We note that the bound from \cref{eq:regret_cover} is vacuous if $\frac{1}{\sigma}\ln\frac{T}{\delta}=k\gtrsim T$. Hence, without loss of generality, we can suppose $k\lesssim T$. Similarly, we can suppose that $\ln\frac{T}{\delta}\lesssim \sigma T$. Altogether, this shows that with probability at least $1-2\delta$, we still have
\begin{equation}\label{eq:high_probability_adaptive_bound_vc}
    \sum_{t=1}^T \ell_t(\hat y_t ) - \inf_{f\in\Fcal} \sum_{t=1}^T \ell_t(f(x_t) ) \lesssim  \sqrt {\frac{(d\ln^2 T + d\ln\ln\frac{1}{\delta} + \ln\frac{1}{\delta}) \ln^3 T}{\sigma} \cdot T} .
\end{equation}
This ends the proof of \cref{thm:main_thm}.

\subsection{Proof of \cref{lemma:main_bound_modified}}
\label{subsec:proof_main_lemma}

    Fix $p$ and $r$. To prove the desired bound, we first construct a subsequence $(z_a)_a$ of the process $(x_t)_{t\in[T]}$ that essentially only keeps times for which $\gamma^{(p,r)}(t) \geq q$. For readability, we omit all exponents $(p)$ and $(p,r)$ within this proof from now.

    \paragraph{Construction of the alternative smooth process.}
    Fix a value $q\in[0,1]$ satisfying \cref{eq:assumption_q},
    and fix the parameter $w\geq 1$. 
    We denote by $(\Hcal_t)_t$ the filtration corresponding to the smooth process $(x_t)_t$. We construct a random subsequence $(z_a)_a$ inductively for $k\in[N_p]$.     
    Let $a_0=0$. Suppose that for $k\in[N_p]$ we have constructed a non-decreasing sequence of indices $a_0,\ldots,a_{k-1}$ together with elements $z_1,\ldots,z_{a_{k-1}}$ on $\Xcal$ and values $\gamma_1,\ldots,\gamma_{a_{k-1}}\in[q,1]$ such that all these random variables are all $\Hcal_{T_{k-1}}$-measurable.
    We focus on the epoch $E_k$ and recall the notation $\Pcal_k$ from \cref{eq:def_Pcal_close_functions} for the set of pairs of functions $f,g\in\Fcal$ which had the same values prior $E_k$ up to $\epsilon$, as well as the notation $\gamma(t)$ for $t\in E_k$ from \cref{eq:definition_gama}.
    We then enumerate
    \begin{equation*}
        \{t\in E_k:\gamma(t)\geq   q\}=: \{t_1^{(k)}<\ldots <t_{b_k}^{(k)}\}.
    \end{equation*}
    For convenience, for all $l\in[b_k]$, we denote $\gamma_l^{(k)}:= \gamma(t_l^{(k)})$. We then let
    \begin{equation}\label{eq:def_c_k}
        c_k:= \min\{b_k\}\cup \set{l\in[b_k]: \sum_{l'\leq l}\gamma_{l'}^{(k)} \geq w}.
    \end{equation}
    We next pose $a_k=a_{k-1}+c_k$ and augment the sequences $z_1,\ldots,z_{a_{k-1}}$ and $\gamma_1,\ldots,\gamma_{a_{k-1}}$ as follows
    \begin{equation*}
        \paren{z_{a_{k-1}+l},\gamma_{a_{k-1}+l}} := \paren{x_{t_l^{(k)}}, \gamma_l^{(k)} },\quad l\in[c_k].
    \end{equation*}
    
    This concludes the construction of the sequence on epoch $k$. We can easily check that all these added random variables are $\Hcal_{T_k}$-measurable, which ends the construction of the sequences $(a_k)_{k\in[N_p]}$, $(z_a)_{a\in[a_{N_p}]}$, and $(\gamma_a)_{a\in[a_{N_p}]}$. For convenience, let us denote $A:=a_{N_P}$ the random length of these sequences. Note that all constructed quantities $(\gamma_a)_a$ are at least $q $ by construction. Also, for any $a_{k-1}<a\leq a_k$, since we added the element $z_a=x_{t_{a-a_{k-1}}^{(k)}}$, by definition of $c_k$ in \cref{eq:def_c_k}, we have
    \begin{equation}\label{eq:sum_proba_small_eq0}
        \sum_{s=a_{k-1}+1}^{a-1} \gamma_s <w.
    \end{equation}
    The next step is to bound the sum of the quantities $\gamma_a$ accumulated on this sequence.

    \paragraph{Construction of functions $g_a$ for $a\in[T]$}  
    Importantly, we can check that the stochastic process $z_1,\ldots,z_A$ can be constructed online. More precisely, this is a sub-sequence of the smoothed process $x_1,\ldots,x_T$ and is adapted to the filtration $(\Hcal_t)_t$ in the following sense. Knowing whether to add $x_t$ in the sequence $z_1,\ldots ,z_A$ is $\Hcal_{t-1}$-measurable because this only requires constructing $\gamma(l)$ for all $l\leq t$, which is $\Hcal_{t-1}$-measurable. As a result, $z_1,\ldots,z_A$ is also a $\sigma$-smooth stochastic process for the unknown base measure $\mu$, with the only subtlety being that its horizon is also stochastic. Note that because $(z_a)_{a\in[A]}$ is a subsequence of $(x_t)_{t\in[T]}$, we always have $A\leq T$. For convenience, we complete the sequence $z_1,\ldots,z_{T}$ arbitrarily for $t>A$, for instance with independent samples from $\mu$, as long as the complete process $(z_a)_{a\in[T]}$ remains $\sigma$-smooth with respect to $\mu$.

    For any $a\in[T]$, we define a random function $g_a$ as follows. If $a>A$, we can simply pose $g_a=0$. Note that knowing whether $a\leq A$ can be done in an online process adapted to the filtration $(\Hcal_t)_{t\in[T]}$ with the same ideas presented above.
    Provided $a\leq A$, we denote by $k\in[N_p]$ the index such that $a_{k-1}<a\leq a_k$ and let $l\in[b_k]$ such that we used the time $t_l^{(k)}$ to construct $z_a=x_{t_l^{(k)}}$. We recall that knowing whether we are using $t_l^{(k)}$ to construct $z_a$ is $\Hcal_{t_l^{(k)}-1}$-measurable since we only need to know the past history as well as $\gamma(t_l^{(k)})$. By construction, we had $\gamma(t_l^{(k)})=\gamma_l^{(k)}\geq  q>0$.
    Hence we can fix $f_l^{(k)} \in\Pcal_k$ such that
    \begin{align}
        \Ebb\sqb{\abs{f_l^{(k)}(x_{t_l^{(k)}}) - f^\star(x_{t_l^{(k)}}) }^r\mid\Hcal_{t_l^{(k)}-1} } &\geq (1-\zeta) \sup_{f \in\Pcal_k} \Ebb \sqb{\abs{f(x_{t_l^{(k)}}) -f^\star (x_{t_l^{(k)}}) }^r\mid\Hcal_{t_l^{(k)}-1} }\notag \\
        &= (1-\zeta)\gamma_a, \label{eq:definition_f_k_l_and_h_k_l}
    \end{align}
    for a fixed value $\zeta>0$.
    We then pose $g_a:=|f_l^{(k)}-f^\star|^r$. 

    \paragraph{Upper bound on $\sum_{a=1}^{A} \Ebb\sqb{g_a(z_a) \mid \Hcal_{t(a)-1}}$.}
    By construction, we can ensure that for all $a\in [T]$, provided $a\leq A$, $g_a$ is $\Hcal_{t_l^{(k)}-1}$-measurable, where we used the same notations as above for which $z_a$ was constructed via $z_a=x_{t_l^{(k)}}$. To avoid confusions, we denote $t(a):=t_l^{(k)}$. In particular, $z_a=x_{t(a)}\mid \sigma( z_{a'},a'<a,g_a)$ is still $\sigma$-smooth since $\sigma(z_{a'},a'<a,g_a)\subset \Hcal_{t(a)-1}$. Last, $A$ is a stopping time for the filtration given by the sigma-algebras $\Hcal_{t(a)-1}$. We are now in position to use \cref{lemma:decoupling_stronger} to the rescaled functions $g_a/4$ which implies that for a given sequence $z_1',\ldots,z_T'$ tangent to $z_1,\ldots,z_T$, 
    \begin{align}
        \sum_{a=1}^{A} \Ebb\sqb{g_a(z_a) \mid \Hcal_{t(a)-1}} &\leq  12 \sqrt{\frac{2A\ln(eA)}{\sigma}\paren{\ln(eA) + \frac{1}{4}  \sum_{a=1}^{A} \frac{1}{a}\sum_{s=1}^{a-1} \Ebb\sqb{g_a(z_s')\mid\Hcal_{t(a)-1} }}}
        . \label{eq:regret_variant_smoothed_game}
    \end{align}
    We now fix $a\in [T]$ such that $a\leq A$. Using the same notations as before, let $k\in[N_p]$ such that $a_{k-1}<a\leq a_k$ and $l\in[b_k]$ such that we constructed $z_a$ via $z_a=x_{t_l^{(k)}}$. Importantly,
    \begin{equation}\label{eq:constructed_good_labels}
        g_a(z_s) \leq (2\epsilon)^r,\quad  \forall s\leq a_{k-1}.
    \end{equation}
    Indeed, recall that $g_a=(f_l^{(k)}-f^\star)^2$ where $f_l^{(k)}\in\Pcal_k$. By definition of $\Pcal_k$, $f_l^{(k)}$ and $f^\star$ agree on all queries $x_t$ for $t\in[T_{k-1}]$ up to $\epsilon$ in absolute value.
    Next, let $a_1<a$. Assuming that $a\leq A$, we let $k_1\in[N_p]$ and $l_1\in[b_{k_1}]$ such that we constructed $z_{a_1}:=x_{t_{l_1}^{(k_1)}}$. Note that because $a_1<a$, we have $(k_1,l_1)<_{lex}(k,l)$, where $<_{lex}$ denotes the lexicographical order. Then, we have
    \begin{align}
        \Ebb_{z_{a_1}'}\sqb{g_a(z_{a_1}')\mid \Hcal_{t(a)-1},a\leq A} &=  \Ebb_{x\sim x_{t_{l_1}^{(k_1)}}\mid \Hcal_{t_{l_1}^{(k_1)}-1}}\sqb{g_a(x)}  \notag \\
        &= \Ebb_{x\sim x_{t_{l_1}^{(k_1)}}\mid \Hcal_{t_{l_1}^{(k_1)}-1}}\sqb{\abs {f_{l}^{(k)}(x) - f^\star(x) }^r}   \notag \\
        &\leq  \gamma\paren{t_{l_1}^{(k_1)}}  = \gamma_{a_1} . \label{eq:low_proba_success_tangent}
    \end{align}    
    In the last inequality, we used the definition of the function $\gamma(\cdot)$ from \cref{eq:definition_gama} and the fact that by construction $f_l^{(k)}\in \Pcal_k \subset \Pcal_{k_1}$ (note that $\Pcal_k$ only has more constraints on the functions compared to $\Pcal_{k_1}$) .
    Putting these equations together, we obtain
    \begin{align}
        \sum_{s=1}^{a-1} &\Ebb\sqb{g_a(z_s')\mid\Hcal_{t(a)-1} ,a\leq A} \overset{(i)}{\leq} \sum_{s=a_{k-1}+1}^{a-1}  \gamma_s +  \sum_{s=1}^{a_{k-1}}  \Ebb\sqb{g_a(z_s')\mid\Hcal_{t(a)-1} ,a\leq A} \notag \\
        &\overset{(ii)}{\leq} w + \underbrace{\Ebb_{z_1',\ldots,z_{a_{k-1}}'}\sqb{ \sum_{s=1}^{a_{k-1}}   g_a(z_s')   - 2g_a(z_s) \mid \Hcal_{t(a)-1},a\leq A}}_{E(a)} + (2\epsilon)^r a_{k-1} . \label{eq:upper_bound_sum_ga(z_s')}
    \end{align}
    In $(i)$ we used \cref{eq:low_proba_success_tangent} and in $(ii)$ we used \cref{eq:constructed_good_labels,eq:sum_proba_small_eq0}. Now note that conditionally on $a\leq A$ and $\Hcal_{t(a)-1}$, we have
    \begin{align*}
        \sum_{s=1}^{a_{k-1}}   g_a(z_s')   - 2g_a(z_s)&=\sum_{s=1}^{a_{k-1}} \abs{f_l^{(k)}(z_s') - f^\star(z_s') }^r - 2\abs{f_l^{(k)}(z_s) - f^\star(z_s)}^r  \\
        &\leq \sup_{f\in\Fcal}\sum_{s=1}^{a_{k-1}} |f(z_s') - f^\star(z_s') |^r - 2|f(z_s) - f^\star(z_s)|^r \\
        &\overset{(i)}{\leq} \sup_{\tilde a\leq T}\sup_{f\in\Fcal}\sum_{s=1}^{\tilde a} |f(z_s') - f^\star(z_s') |^r - 2|f(z_s) - f^\star(z_s)|^r.
    \end{align*}
    Note that we perform step $(i)$ because $a_{k-1}$ is not a fixed horizon a priori: it may depend on the elements of the smooth sequence $z_{b}$ for $b>a_{k-1}$ (precisely, the elements $a_{k-1}<b\leq a$).
    We now bound the right-hand side using a high-probability variant of \cite[Theorem 2]{block2024performance}, given in \cref{lemma:block_whp}. Precisely, we apply \cref{lemma:block_whp} to the function class $\Fcal_p := \{\frac{1}{r}|f-f^\star|^r:f\in\Fcal\}$ with the parameter $c=1/2$. Together with the union bound this implies that with probability at least $1-\delta^2$,
    \begin{equation}\label{eq:bound_supremum_will_functional}
        \sup_{\tilde a\leq T}\sup_{f,h\in\Fcal}\sum_{s=1}^{\tilde a} |f(z_s') - h(z_s') |^r - 2|f(z_s) - h(z_s)|^r \leq r C_1 \paren{\ln\Ebb_\mu\sqb{W_{4T\ln(\frac{T}{\delta})/\sigma} \paren{\frac{1}{3}\Fcal_p}}  + \ln\frac{T}{\delta}},
    \end{equation}
    for some universal constant $C_1\geq 1$. Here we used the fact that the Wills functional $W_m(\Fcal_p)$ is non-decreasing in $m$ (e.g. see \cite[Lemma 10]{block2024performance}) and that $a\leq T$. Now note that the function $\psi:z\in [-1,1]\mapsto \frac{1}{r}|x|^r$ is $1$-Lipschitz. Hence \cite[Theorem 4.1]{mourtada2023universal} implies that $W_m(\frac{1}{3}\Fcal_p) \leq W_m(\tilde \Fcal)$, where $\tilde \Fcal=\{f-f^\star:f\in\Fcal\}$. Next, because the Wills functional is invariant under translation from \cite[Proposition 3.1.5]{mourtada2023universal}, we finally obtain
    \begin{equation*}
        W_m\paren{\frac{1}{3}\Fcal_p}\leq W_m(\Fcal),\quad m\geq 1.
    \end{equation*}
    Denote by $\Ecal_{\delta^2}$ the event when \cref{eq:bound_supremum_will_functional} holds.
     Then,
     \begin{equation*}
         E(a) \leq r  C_1 \paren{\ln\Ebb_\mu\sqb{W_{4T\ln(\frac{T}{\delta})/\sigma} \paren{ \Fcal}}  + 2\ln\frac{T}{\delta}} + T \Pbb\paren{\Ecal_{\delta^2}^c\mid \Hcal_{t(a)-1},a\leq A}.
     \end{equation*}
     where the probability on the last term is taken over $z_1',\ldots,z_{T}'$.
     Now by Markov's inequality, we have
     \begin{equation*}
         \Pbb_{z_1,\ldots,z_T}\sqb{\Pbb_{z_1',\ldots,z_T'}\paren{\Ecal_{\delta^2}^c\mid \Hcal_{t(a)-1},a\leq A} \geq \delta} \leq \frac{\Pbb(\Ecal_{\delta^2}^c)}{\delta}\leq \delta,
     \end{equation*}
     
     Denote by $\Fcal_\delta(a)$ the complementary event, which has probability at least $1-\delta$. On this event, the previous bound from \cref{eq:upper_bound_sum_ga(z_s')} implies that
     \begin{equation*}
         \sum_{s=1}^{a-1} \Ebb\sqb{g_a(z_s')\mid\Hcal_{t(a)-1} ,a\leq A} \leq w +r C_1 \paren{\ln\Ebb_\mu\sqb{W_{4T\ln(\frac{T}{\delta})/\sigma} \paren{ \Fcal}}  + \ln\frac{T}{\delta}} +\delta T + (2\epsilon)^r a.
     \end{equation*}
    Plugging this bound into \cref{eq:regret_variant_smoothed_game} shows that on $\bigcap_{a\leq T} \Fcal_\delta(a)$ which has probability at least $1-\delta T$,
    \begin{align}
        &\sum_{a=1}^{A}  \Ebb \sqb{g_a(z_a) \mid \Hcal_{t(a)-1}} \notag\\
        &\leq 12 \left( \frac{2A\ln(eA)}{\sigma} \sqb{\ln(eA)+\paren{r  C_1\ln\Ebb_\mu\sqb{W_{4T\ln (\frac{T}{\delta})/\sigma} \paren{ \Fcal}} + r  C_1\ln \frac{T}{\delta}+w+\delta T} \sum_{a=1}^{T} \frac{1}{4a}} \right. \notag\\
        &\qquad\qquad \qquad\qquad \qquad\qquad\qquad\qquad \qquad\qquad \qquad\qquad\qquad\qquad\qquad\qquad \left. + \frac{(2\epsilon)^r A^2\ln(eA)}{2\sigma}  \right)^{1/2} \notag\\
        &\leq C  \ln  T \sqrt{\frac{rA}{\sigma}\paren{\ln\Ebb_\mu\sqb{W_{4T\ln (\frac{T}{\delta})/\sigma} \paren{\Fcal}} + \ln \frac{T}{\delta}+w+\delta T} } + 6A\sqrt{ (2\epsilon)^r \frac{ 2\ln (eT)}{\sigma}}, \label{eq:upper_bound_sum_ga}
    \end{align}
    for some universal constant $C\geq 1$.
    In the last inequality, we used $A\leq T$ and the inequality $\sqrt{a+b}\leq \sqrt a+\sqrt b$ for all $a,b\geq 0$.
For convenience, we introduce the notation
\begin{equation*}
    C_{w,\delta}(T):= C \ln  T \sqrt{\frac{r}{\sigma}\paren{\ln\Ebb_\mu\sqb{W_{4T\ln (\frac{T}{\delta})/\sigma} \paren{ \Fcal}} + \ln \frac{T}{\delta}+w+\delta T}} .
\end{equation*}

    \paragraph{Lower bound on $\sum_{a=1}^{A} \Ebb\sqb{g_a(z_a) \mid \Hcal_{t(a)-1}}$.}
    We now turn to the lower bound. Note that for any $a\in[T]$ provided that $a\leq A$, using the same notations as above we have
    \begin{equation}\label{eq:lower_bound_proba1}
         \Ebb\sqb{g_a(z_a) \mid \Hcal_{t(a)-1},a\leq A} = \Ebb\sqb{\abs{f_l^{(k)}(x_{t_l^{(k)}}) - h_l^{(k)}(x_{t_l^{(k)}}) }^r\mid \Hcal_{t(a)-1},a\leq A } \overset{(i)}{\geq} (1-\zeta)\gamma_a,
    \end{equation}
    where in $(i)$ we used \cref{eq:definition_f_k_l_and_h_k_l}.
    As a result,
    \begin{align*}
        \sum_{a=1}^{A} \Ebb\sqb{g_a(z_a) \mid \Hcal_{t(a)-1}} 
        \geq(1-\zeta)  \sum_{a=1}^{A} \gamma_a .
    \end{align*}

    \paragraph{Putting the two bounds together.}
    Putting together this lower bound with the upper bound from \cref{eq:upper_bound_sum_ga}, we obtain that with probability at least $1-\delta$
    \begin{align*}
         (1-\zeta)  \sum_{a=1}^{A} \gamma_a &\leq C_{w,\delta/T}(T)\sqrt{ A} + 6A\sqrt{(2\epsilon)^r\frac{2\ln (eT)}{\sigma}}\\
         &\overset{(i)}{\leq} C_{w,\delta/T}(T)\sqrt{\frac{1}{q} \sum_{a=1}^{A} \gamma_a} +\frac{6}{q}\sqrt{(2\epsilon)^r\frac{2 \ln (eT)}{\sigma}} \sum_{a=1}^A \gamma_a \\
         &\overset{(ii)}{\leq} C_{w,\delta/T}(T)\sqrt{\frac{1}{q} \sum_{a=1}^{A} \gamma_a} +\frac{1}{2}\sum_{a=1}^A \gamma_a
    \end{align*}
    where in $(i)$ we recalled that for all $a\leq A$, we have $p_a\geq q$ and in $(ii)$ we used the assumption on $q$ from \cref{eq:assumption_q}. This holds for any $\zeta>0$, which implies that there exists a universal constant $C_1\geq 1$ such that for any $\delta\in(0,1/2]$, with probability at least $1-\delta$,
        \begin{equation*}
            \sum_{a=1}^A \gamma_a \leq \frac{2 C^2_{w,\delta/T}(T)}{q} \leq \frac{C_1 r\ln^2  T}{q \sigma} \paren{\ln\Ebb_\mu\sqb{W_{8T\ln (\frac{T}{\delta})/\sigma} \paren{ \Fcal}} + \ln \frac{T}{\delta}+w}.
        \end{equation*}

    Going back to the construction of the sequence $z_1,\ldots,z_A$, for any epoch $E_k^{(p)}$, in its construction we always try to include as many times $t_1^{(k)},t_2^{(k)},\ldots$ as possible until the threshold $w$ for the sum of their probabilities $\gamma_l^{(k)}$ is passed. Denote by $\Kcal\subset[N_p]$ the set of all epochs $k$ for which not all elements $t_l^{(k)}$ for $l\in[b_k]$ have been used, that is $\Kcal=\{k\in[N_p]:c_k<b_k\}$. On one hand, if $k\notin \Kcal$, \cref{eq:sum_proba_small_eq0} implies that
    \begin{equation*}
        \sum_{t\in E_k}\gamma(t)\1_{\gamma(t)\geq q} = \sum_{l\in[b_k]}\gamma_l^{(k)} \leq w+1 \leq 2w.
    \end{equation*}
    In the last inequality we used $\gamma_{b_k}^{(k)} \leq 1$. On the other hand, for any $k\in\Kcal$,
    \begin{equation*}
        \sum_{a=a_{k-1}+1}^{a_k} \gamma_a >w.
    \end{equation*}
    Therefore, with probability at least $1-\delta$,
    \begin{align*}
        |\Kcal| <\frac{1}{w}\sum_{k\in[N_p]}\sum_{a=a_{k-1}+1}^{a_k} \gamma_a &= \frac{1}{w}\sum_{a=1}^A \gamma_a\\
        &\leq \frac{C_1 r\ln^2  T}{q \sigma w} \paren{\ln\Ebb_\mu\sqb{W_{8T\ln (\frac{T}{\delta})/\sigma} \paren{ \Fcal}} + \ln \frac{T}{\delta}+w}.
    \end{align*}
    Considering $2w$ instead of $w$ and up to changing the constant $C_1$, this ends the proof of the first claim.

    To prove the second claim, let $w\geq 2$ and $q\in(0,1]$ satisfying \cref{eq:stronger_assumption_q}. For any $k\in[N_p]$, we have
    \begin{equation*}
        \Gamma_k =\sum_{t\in E_k}\gamma(t) \leq \frac{q}{2}(T_k-T_{k-1}) + \sum_{t\in E_k}\gamma(t)\1_{\gamma(t)\geq q/2}.
    \end{equation*}
    Applying the bound proved above for $q/2$ together with the fact that $w\leq \frac{q}{2}(L_p-1)\leq \frac{q}{2}(T_k-T_{k-1})$ ends the proof of the second claim.

\comment{

\subsection{From oblivious to adaptive benchmarks}
\label{subsec:oblivious_to_adaptive}

In this last step of the proof, we use a covering argument to turn the regret guarantee from oblivious benchmark functions from \cref{thm:oblivious_adversary} to adaptive benchmark functions. This can be done using some tools developed in \cite{haghtalab2024smoothed}. At the high level, the idea is to construct a cover of the function class $\Fcal$ with respect to the base measure $\mu$ and aim to have low regret compared to functions in this cover. We note that contrary to \cite{haghtalab2024smoothed}, this covering construction is only for proof purposes and is not performed within the algorithm \textsc{R-Cover}. In fact, since $\mu$ is unknown in our setting, computing such a cover is impossible, we refer to \cref{sec:lower_bounds} for additional details.

First construct an $\ell_1$ $\eta$-cover $\Hcal_\eta$ of the function class $\Fcal$ for the base measure $\mu$ for some parameter $\eta>0$. Formally, we construct a subset of $\Fcal$ such that for all $f\in\Fcal$ there exists $h\in\Hcal_\eta$ with
\begin{equation*}
    \Ebb_{x\sim\mu}\sqb{ |f(x)-h(x)| }\leq \eta.
\end{equation*}
For simplicity, let us denote by $\Psi(\delta)$ the right-hand side of the oblivious regret bound from \cref{thm:oblivious_adversary} in \cref{eq:main_regret_bound_oblivious}. 
We can take the union bound over all (non-adaptive) benchmark functions in $\Hcal_\eta$ within \cref{thm:oblivious_adversary}, to obtain with probability $1-\delta$
\begin{equation*}
    \sum_{t=1}^T \ell_t(\hat y_t ) - \inf_{f\in\Hcal_\eta} \sum_{t=1}^T \ell_t(f(x_t) ) \leq \Psi\paren{\frac{\delta}{|\Hcal_\eta|}}.
\end{equation*}
The size of this net can classically be bounded as follows.

\begin{lemma}
    For classification, if $\Fcal$ is a function class with VC dimension $d$, then we can ensure $\ln|\Hcal_\eta| \leq 2d\ln(e^2/\eta)$.

    For regression, 
\end{lemma}
\begin{proof}
    The result for VC classes is directly taken from \cite{haussler1995sphere} or \cite[Lemma 13.6]{boucheron2013concentration}. The result for regression is likely known, we give here the proof for completeness. From \cite[Theorem 1]{colomboni2023improved} (a tightened bound of \cite[Theorem 9]{bartlett1995more}), there exists constants $c,C>0$ such that for 
    $T\geq \frac{4C}{\eta^2} (\mathsf{fat}_\Fcal(c\eta)+\ln\frac{1}{\delta}$, with probability at least $1-\delta$, for $X_1,\ldots,X_T\overset{iid}{\sim}\mu$ one has
    \begin{equation*}
        \sup_{f\in\Fcal}
    \end{equation*}
\end{proof}

It only remains to bound the empirical discrepancy between taking the infimum over the $\eta$-net instead of the complete class $\Fcal$. For any function $f\in\Fcal$, denote by $h_f\in\Hcal$ a function such that $\Pbb_\mu(f\neq h)\leq \eta$. Next,
\begin{align*}
    \sum_{t=1}^T \ell_t(f(x_t) ) \geq \sum_{t=1}^T \ell_t(h_f(x_t) ) - \sum_{t=1}^T |f(x_t)- h_f(x_t)|,
\end{align*}
since the losses are $1$-Lipschitz.
As a result, denoting by $\Gcal:=\{|f- h_f|,f\in\Fcal\}$, we can decompose the adaptive regret via
\begin{equation}\label{eq:decomposition_adaptive_oblivious}
    \sum_{t=1}^T \ell_t(\hat y_t ) - \inf_{f\in\Fcal} \sum_{t=1}^T \ell_t(f(x_t) ) \leq \sum_{t=1}^T \ell_t(\hat y_t ) - \inf_{f\in\Hcal} \sum_{t=1}^T \ell_t(f(x_t) ) +
    \sup_{g\in\Gcal}\sum_{t=1}^T g(x_t).
\end{equation}
 \cite[Lemma 3.3]{haghtalab2024smoothed} directly bounds the expected value of $\sup_{g\in\Gcal}\sum_{t=1}^Tg(x_t)$ in the particular case of classification. These can easily be generalized to regression settings as detailed below. 

Importantly, by construction of the $\eta$-cover, we have $\Ebb_{x\sim\mu}[g(x)]\leq \eta$ for all $g\in\Gcal$. \cite{haghtalab2024smoothed} then use a coupling argument to reduce to the i.i.d. case for which we can use classical symmetrization techniques.

\begin{lemma}[\cite{haghtalab2024smoothed,block2022smoothed}]
\label{lemma:coupling}
    Let $(X_t)_{t\in[T]}$ be $\sigma$-smooth with respect to $\mu$. Then for all $k\geq 1$, there exists a coupling of $(X_t)_{t\in[T]}$ with random variables $\{Z_{t,j},t\in[T],j\in[k]\}$ such that the $Z_{t,j}\overset{iid}{\sim}\mu$ and on an event $\Ecal_k$ of probability at least $1-Te^{-\sigma k}$, we have $X_t\in\{Z_{t,j},j\in[k]\}$ for all $t\in[T]$. 
\end{lemma}

On the event $\Ecal_k$ from \cref{lemma:coupling}, we have
\begin{equation*}
    \sup_{g\in\Gcal}\sum_{t=1}^Tg(x_t) \leq \sup_{g\in\Gcal}\sum_{t=1}^T\sum_{j=1}^k g(Z_{t,j}).
\end{equation*}
Because the variables $Z_{t,j}$ are i.i.d., we can now symmetrize the error terms (e.g. see \cite[Lemma 11.4]{boucheron2013concentration}) to obtain
\begin{align*}
    \Ebb\sqb{\sup_{g\in\Gcal}\sum_{t=1}^T\sum_{j=1}^k g(Z_{t,j})} &\leq \eta Tk + \Ebb\sqb{\sup_{g\in\Gcal}\sum_{t=1}^T\sum_{j=1}^k g(Z_{t,j}) - \Ebb_{x\sim\mu}[g(x)]} \\
    &\leq \eta Tk + 2\Ebb\sqb{ \sup_{g\in\Gcal} \sum_{t=1}^T\sum_{j=1}^k \epsilon_t (g(Z_{t,j}) - \Ebb_{x\sim\mu}[g(x)])}\\
    &\leq 3\eta Tk + 2\Rcal_{Tk}(\Gcal) ,
\end{align*}
where $(\epsilon_t)_{t\in[T]}$ is an i.i.d.\ sequence of Rademacher variables and $\Rcal_m(\Gcal)$ is the Rademacher complexity of $\Gcal$. Because the absolute value is $1$-Lipschitz, Talagrand's contraction lemma (e.g. citation?) implies that $\Rcal_m(\Gcal)\leq \Rcal_m(\tilde \Fcal)$ where $\tilde \Fcal=\{f-g:f,g\in\Fcal\}$. We can further bound $\Rcal_m(\tilde \Fcal)\leq 2\Rcal_m(\Fcal)$ by symmetry. Using McDiarmid's inequality since all functions have value in $[0,1]$, we obtain that with probability at least $1-\delta$,
\begin{align*}
    \sup_{g\in\Gcal}\sum_{t=1}^T\sum_{j=1}^k g(Z_{t,j})
    &\leq \Ebb\sqb{\sup_{g\in\Gcal}\sum_{t=1}^T\sum_{j=1}^k g(Z_{t,j})} + \sqrt{2Tk\ln\frac{1}{\delta}}\\
    &\leq 3\eta Tk + 4\Rcal_{Tk}(\Fcal) + \sqrt{2Tk\ln\frac{1}{\delta}}.
\end{align*}

the Vapnik-Chervonenkis inequality \cite[Theorem 2]{vapnik1971uniform} gives Heoffding-type high probability uniform deviation bounds. Recalling that for all $g\in\Gcal$ we have $\Ebb_\mu[g]\leq \epsilon$, we can use relative VC bounds to better control the tail deviations. For instance, \cite[Corollary 2]{cortes2019relative} implies that there is a constant $C$ such that with probability at least $1-\delta$,
\begin{equation*}
    \sup_{g\in\Gcal}\sum_{t=1}^T\sum_{j=1}^k g(Z_{t,j}) \leq \epsilon Tk + C\sqrt{\epsilon Tk\paren{d\ln\frac{Tk}{d} + \ln\frac{1}{\delta}}} + C\ln\frac{Tk}{d} + C\ln\frac{1}{\delta}.
\end{equation*}
We now put the two previous estimate with \cref{eq:regret_cover,eq:decomposition_adaptive_oblivious}, for $k=\frac{1}{\sigma}\ln\frac{T}{\delta}$ and $\epsilon=1/(Tk)$. We note that the bound from \cref{eq:regret_cover} is vacuous if $\frac{1}{\sigma}\ln\frac{T}{\delta}=k\gtrsim T$. Hence, without loss of generality, we can suppose $k\lesssim T$. Similarly, we can suppose that $\ln\frac{T}{\delta}\lesssim \sigma T$. Altogether, this shows that with probability at least $1-2\delta$,
\begin{equation*}
    \sum_{t=1}^T \ell_t(\hat y_t ) - \inf_{f\in\Fcal} \sum_{t=1}^T \ell_t(f(x_t) ) \lesssim \ln^2 T \sqrt {\frac{d\ln T + \ln\frac{T}{\delta}}{\sigma}\cdot T}.
\end{equation*}
This ends the proof of \cref{thm:main_thm}.

}

\acks{ The author is very grateful to Abhishek Shetty and Adam Block for bringing this problem to his attention and for insightful discussions. This work was supported by a Columbia Data Science Institute postdoctoral fellowship.}

\bibliographystyle{alpha}
\bibliography{refs}

\newcommand{\etalchar}[1]{$^{#1}$}
\begin{thebibliography}{CBCC{\etalchar{+}}23}

\bibitem[ADK14]{asor2014additive}
Ohad Asor, Hubert~Haoyang Duan, and Aryeh Kontorovich.
\newblock On the additive properties of the fat-shattering dimension.
\newblock {\em IEEE transactions on neural networks and learning systems}, 25(12):2309--2312, 2014.

\bibitem[AHK{\etalchar{+}}14]{agarwal2014taming}
Alekh Agarwal, Daniel Hsu, Satyen Kale, John Langford, Lihong Li, and Robert Schapire.
\newblock Taming the monster: A fast and simple algorithm for contextual bandits.
\newblock In {\em International conference on machine learning}, pages 1638--1646. PMLR, 2014.

\bibitem[BCH22]{blanchard2021universal}
Mo{\"i}se Blanchard, Romain Cosson, and Steve Hanneke.
\newblock Universal online learning with unbounded losses: Memory is all you need.
\newblock In {\em International Conference on Algorithmic Learning Theory}, pages 107--127. PMLR, 2022.

\bibitem[BDGR22]{block2022smoothed}
Adam Block, Yuval Dagan, Noah Golowich, and Alexander Rakhlin.
\newblock Smoothed online learning is as easy as statistical learning.
\newblock In {\em Conference on Learning Theory}, pages 1716--1786. PMLR, 2022.

\bibitem[BDPSS09]{ben2009agnostic}
Shai Ben-David, D{\'a}vid P{\'a}l, and Shai Shalev-Shwartz.
\newblock Agnostic online learning.
\newblock In {\em COLT}, volume~3, page~1, 2009.

\bibitem[BHJ23a]{blanchard2023adversarial}
Mo\"{i}se Blanchard, Steve Hanneke, and Patrick Jaillet.
\newblock Adversarial rewards in universal learning for contextual bandits.
\newblock {\em arXiv preprint arXiv:2302.07186}, 2023.

\bibitem[BHJ23b]{blanchard2023contextual}
Mo\"{i}se Blanchard, Steve Hanneke, and Patrick Jaillet.
\newblock Contextual bandits and optimistically universal learning.
\newblock {\em arXiv preprint arXiv:2301.00241}, 2023.

\bibitem[BHS24]{bhatt2024smoothed}
Alankrita Bhatt, Nika Haghtalab, and Abhishek Shetty.
\newblock Smoothed analysis of sequential probability assignment.
\newblock {\em Advances in Neural Information Processing Systems}, 36, 2024.

\bibitem[BJ23]{blanchard2023universal}
Mo{\"\i}se Blanchard and Patrick Jaillet.
\newblock Universal regression with adversarial responses.
\newblock {\em The Annals of Statistics}, 51(3):1401--1426, 2023.

\bibitem[Bla22]{blanchard2022universal}
Mo\"{i}se Blanchard.
\newblock Universal online learning: An optimistically universal learning rule.
\newblock In {\em Conference on Learning Theory}, pages 1077--1125. PMLR, 2022.

\bibitem[BLL{\etalchar{+}}11]{beygelzimer2011contextual}
Alina Beygelzimer, John Langford, Lihong Li, Lev Reyzin, and Robert Schapire.
\newblock Contextual bandit algorithms with supervised learning guarantees.
\newblock In {\em Proceedings of the Fourteenth International Conference on Artificial Intelligence and Statistics}, pages 19--26. JMLR Workshop and Conference Proceedings, 2011.

\bibitem[BLM13]{boucheron2013concentration}
Stéphane Boucheron, Gábor Lugosi, and Pascal Massart.
\newblock {\em Concentration Inequalities: A Nonasymptotic Theory of Independence}.
\newblock OUP Oxford, 2013.

\bibitem[BLW94]{bartlett1994fat}
Peter~L Bartlett, Philip~M Long, and Robert~C Williamson.
\newblock Fat-shattering and the learnability of real-valued functions.
\newblock In {\em Proceedings of the seventh annual conference on Computational learning theory}, pages 299--310, 1994.

\bibitem[BP23]{block2023sample}
Adam Block and Yury Polyanskiy.
\newblock The sample complexity of approximate rejection sampling with applications to smoothed online learning.
\newblock In {\em The Thirty Sixth Annual Conference on Learning Theory}, pages 228--273. PMLR, 2023.

\bibitem[BRS24]{block2024performance}
Adam Block, Alexander Rakhlin, and Abhishek Shetty.
\newblock On the performance of empirical risk minimization with smoothed data.
\newblock {\em arXiv preprint arXiv:2402.14987}, 2024.

\bibitem[BS22]{block2022efficient}
Adam Block and Max Simchowitz.
\newblock Efficient and near-optimal smoothed online learning for generalized linear functions.
\newblock {\em Advances in neural information processing systems}, 35:7477--7489, 2022.

\bibitem[BSR23]{block2023oracle}
Adam Block, Max Simchowitz, and Alexander Rakhlin.
\newblock Oracle-efficient smoothed online learning for piecewise continuous decision making.
\newblock In {\em The Thirty Sixth Annual Conference on Learning Theory}, pages 1618--1665. PMLR, 2023.

\bibitem[CBCC{\etalchar{+}}23]{cesa2023repeated}
Nicol{\`o} Cesa-Bianchi, Tommaso~R Cesari, Roberto Colomboni, Federico Fusco, and Stefano Leonardi.
\newblock Repeated bilateral trade against a smoothed adversary.
\newblock In {\em The Thirty Sixth Annual Conference on Learning Theory}, pages 1095--1130. PMLR, 2023.

\bibitem[CBL06]{cesa2006prediction}
Nicolo Cesa-Bianchi and G{\'a}bor Lugosi.
\newblock {\em Prediction, learning, and games}.
\newblock Cambridge university press, 2006.

\bibitem[CGM19]{cortes2019relative}
Corinna Cortes, Spencer Greenberg, and Mehryar Mohri.
\newblock Relative deviation learning bounds and generalization with unbounded loss functions.
\newblock {\em Annals of Mathematics and Artificial Intelligence}, 85:45--70, 2019.

\bibitem[DGKL94]{devroye1994strong}
Luc Devroye, Laszlo Gyorfi, Adam Krzyzak, and G{\'a}bor Lugosi.
\newblock On the strong universal consistency of nearest neighbor regression function estimates.
\newblock {\em The Annals of Statistics}, pages 1371--1385, 1994.

\bibitem[DGL13]{devroye2013probabilistic}
Luc Devroye, L{\'a}szl{\'o} Gy{\"o}rfi, and G{\'a}bor Lugosi.
\newblock {\em A probabilistic theory of pattern recognition}, volume~31.
\newblock Springer Science \& Business Media, 2013.

\bibitem[DHZ23]{durvasula2023smoothed}
Naveen Durvasula, Nika Haghtalab, and Manolis Zampetakis.
\newblock Smoothed analysis of online non-parametric auctions.
\newblock In {\em Proceedings of the 24th ACM Conference on Economics and Computation}, pages 540--560, 2023.

\bibitem[DlPG12]{de2012decoupling}
Victor De~la Pena and Evarist Gin{\'e}.
\newblock {\em Decoupling: from dependence to independence}.
\newblock Springer Science \& Business Media, 2012.

\bibitem[Fre75]{freedman1975tail}
David~A Freedman.
\newblock On tail probabilities for martingales.
\newblock {\em the Annals of Probability}, pages 100--118, 1975.

\bibitem[GG09]{gray2009probability}
Robert~M Gray and RM~Gray.
\newblock {\em Probability, random processes, and ergodic properties}, volume~1.
\newblock Springer, 2009.

\bibitem[GKKW02]{gyorfi:02}
L.~Gy\"{o}rfi, M.~Kohler, A.~Krzy\.{z}ak, and H.~Walk.
\newblock {\em A Distribution-Free Theory of Nonparametric Regression}.
\newblock Springer-Verlag New York, 2002.

\bibitem[GLM99]{gyorfi1999simple}
L~Gyorfi, G{\'a}bor Lugosi, and Guszt{\'a}v Morvai.
\newblock A simple randomized algorithm for sequential prediction of ergodic time series.
\newblock {\em IEEE Transactions on Information Theory}, 45(7):2642--2650, 1999.

\bibitem[GW21]{gyorfi2021universal}
L{\'a}szl{\'o} Gy{\"o}rfi and Roi Weiss.
\newblock Universal consistency and rates of convergence of multiclass prototype algorithms in metric spaces.
\newblock {\em Journal of Machine Learning Research}, 22(151):1--25, 2021.

\bibitem[Had75]{hadwiger1975will}
Hugo Hadwiger.
\newblock Das will'sche funktional.
\newblock {\em Monatshefte f{\"u}r Mathematik}, 79(3):213--221, 1975.

\bibitem[Han21]{hanneke2021learning}
Steve Hanneke.
\newblock Learning whenever learning is possible: Universal learning under general stochastic processes.
\newblock {\em Journal of Machine Learning Research}, 22(130):1--116, 2021.

\bibitem[Hau95]{haussler1995sphere}
David Haussler.
\newblock Sphere packing numbers for subsets of the boolean n-cube with bounded vapnik-chervonenkis dimension.
\newblock {\em Journal of Combinatorial Theory, Series A}, 69(2):217--232, 1995.

\bibitem[HHSY22]{haghtalab2022oracle}
Nika Haghtalab, Yanjun Han, Abhishek Shetty, and Kunhe Yang.
\newblock Oracle-efficient online learning for beyond worst-case adversaries.
\newblock {\em arXiv preprint arXiv:2202.08549}, 2022.

\bibitem[HKSW21]{hanneke2021bayes}
Steve Hanneke, Aryeh Kontorovich, Sivan Sabato, and Roi Weiss.
\newblock {Universal Bayes consistency in metric spaces}.
\newblock {\em The Annals of Statistics}, 49(4):2129 -- 2150, 2021.

\bibitem[HRS20]{haghtalab2020smoothed}
Nika Haghtalab, Tim Roughgarden, and Abhishek Shetty.
\newblock Smoothed analysis of online and differentially private learning.
\newblock {\em Advances in Neural Information Processing Systems}, 33:9203--9215, 2020.

\bibitem[HRS24]{haghtalab2024smoothed}
Nika Haghtalab, Tim Roughgarden, and Abhishek Shetty.
\newblock Smoothed analysis with adaptive adversaries.
\newblock {\em Journal of the ACM}, 71(3):1--34, 2024.

\bibitem[KS94]{kearns1994efficient}
Michael~J Kearns and Robert~E Schapire.
\newblock Efficient distribution-free learning of probabilistic concepts.
\newblock {\em Journal of Computer and System Sciences}, 48(3):464--497, 1994.

\bibitem[Lit88]{littlestone1988learning}
Nick Littlestone.
\newblock Learning quickly when irrelevant attributes abound: A new linear-threshold algorithm.
\newblock {\em Machine learning}, 2(4):285--318, 1988.

\bibitem[McM91]{mcmullen1991inequalities}
Peter McMullen.
\newblock Inequalities between intrinsic volumes.
\newblock {\em Monatshefte f{\"u}r Mathematik}, 111:47--53, 1991.

\bibitem[Men02]{mendelson2002rademacher}
Shahar Mendelson.
\newblock Rademacher averages and phase transitions in glivenko-cantelli classes.
\newblock {\em IEEE transactions on Information Theory}, 48(1):251--263, 2002.

\bibitem[Mou23]{mourtada2023universal}
Jaouad Mourtada.
\newblock Universal coding, intrinsic volumes, and metric complexity.
\newblock {\em arXiv preprint arXiv:2303.07279}, 2023.

\bibitem[MYG96]{morvai1996nonparametric}
Guszt{\'a}v Morvai, Sidney Yakowitz, and L{\'a}szl{\'o} Gy{\"o}rfi.
\newblock Nonparametric inference for ergodic, stationary time series.
\newblock {\em The Annals of Statistics}, 24(1):370--379, 1996.

\bibitem[RST11]{rakhlin2011online}
Alexander Rakhlin, Karthik Sridharan, and Ambuj Tewari.
\newblock Online learning: Stochastic, constrained, and smoothed adversaries.
\newblock {\em Advances in neural information processing systems}, 24, 2011.

\bibitem[RV06]{rudelson2006combinatorics}
Mark Rudelson and Roman Vershynin.
\newblock Combinatorics of random processes and sections of convex bodies.
\newblock {\em Annals of Mathematics}, pages 603--648, 2006.

\bibitem[Sau72]{sauer1972density}
Norbert Sauer.
\newblock On the density of families of sets.
\newblock {\em Journal of Combinatorial Theory, Series A}, 13(1):145--147, 1972.

\bibitem[She72]{shelah1972combinatorial}
Saharon Shelah.
\newblock A combinatorial problem; stability and order for models and theories in infinitary languages.
\newblock {\em Pacific Journal of Mathematics}, 41(1):247--261, 1972.

\bibitem[SHS09]{steinwart2009learning}
Ingo Steinwart, Don Hush, and Clint Scovel.
\newblock Learning from dependent observations.
\newblock {\em Journal of Multivariate Analysis}, 100(1):175--194, 2009.

\bibitem[Val84]{valiant1984theory}
Leslie~G Valiant.
\newblock A theory of the learnable.
\newblock {\em Communications of the ACM}, 27(11):1134--1142, 1984.

\bibitem[VC71]{vapnik1971uniform}
V.~N. Vapnik and A.~Ya. Chervonenkis.
\newblock On the uniform convergence of relative frequencies of events to their probabilities.
\newblock {\em Theory of Probability \& Its Applications}, 16(2):264--280, 1971.

\bibitem[VC74]{vapnik1974theory}
Vladimir Vapnik and Alexey Chervonenkis.
\newblock Theory of pattern recognition, 1974.

\bibitem[vdVW93]{van1993bounds}
R~van~de Ven and NC~Weber.
\newblock Bounds for the median of the negative binomial distribution.
\newblock {\em Metrika}, 40:185--189, 1993.

\bibitem[Wai19]{wainwright2019high}
Martin~J Wainwright.
\newblock {\em High-dimensional statistics: A non-asymptotic viewpoint}, volume~48.
\newblock Cambridge university press, 2019.

\bibitem[Wil73]{wills1973gitterpunktanzahl}
J{\"o}rg~M Wills.
\newblock Zur gitterpunktanzahl konvexer mengen.
\newblock {\em Elemente der Mathematik}, 28:57--63, 1973.

\bibitem[XFB{\etalchar{+}}22]{xie2022role}
Tengyang Xie, Dylan~J Foster, Yu~Bai, Nan Jiang, and Sham~M Kakade.
\newblock The role of coverage in online reinforcement learning.
\newblock {\em arXiv preprint arXiv:2210.04157}, 2022.

\end{thebibliography}

\newpage
\appendix

\section{Bounds on the Wills functional}
\label{sec:wills_functional}

We recall the definition of the Wills functional
\begin{equation*}
    W_{m,Z}(\Fcal) := \Ebb_\xi\sqb{\exp\paren{\sup_{f\in\Fcal}\sum_{i=1}^m \xi_i f(Z_i) - \frac{1}{2}f^2(Z_i)}},
\end{equation*}
where $\xi$ is a vector of $m$ i.i.d.\ standard Gaussians. 
A first way to bound the Wills functional is to bound either the Gaussian complexity or the Rademacher complexity of the function class. We recall their definitions below.
\begin{align*}
    \Rcal_{m,Z}(\Fcal) &:=\Ebb_\epsilon\sqb{\sup_{f\in\Fcal} \sum_{i=1}^m \epsilon_i f(Z_i)}\\
    \Gcal_{m,Z}(\Fcal) &:= \Ebb_\xi\sqb{\sup_{f\in\Fcal} \sum_{i=1}^m \xi_i f(Z_i)},
\end{align*}
where $\xi$ is a vector of $m$ i.i.d.\ standard Gaussians and $\epsilon$ is a vector of $m$ i.i.d.\ Rademacher variables. We may omit the dependency in the values $Z=(Z_1,\ldots,Z_m)$ when clear from context. We have the following

\begin{proposition}[Proposition 3.2 of \cite{mourtada2023universal}, Proposition 3 of \cite{block2024performance}, Exercise 5.5 of \cite{wainwright2019high}]
    For any function class $\Fcal$, $m\in\Nbb$, and values $Z_1,\ldots,Z_m\in\Xcal$, we have
    \begin{equation*}
        \ln W_m(\Fcal) \leq \Gcal_m(\Fcal) \lesssim \sqrt{\ln m} \cdot  \Rcal_m(\Fcal).
    \end{equation*}
\end{proposition}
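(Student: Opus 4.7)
The claim chains two standard inequalities from Gaussian process theory; I would handle each by a separate argument.

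\textit{Step 1: $\ln W_m(\Fcal) \leq \Gcal_m(\Fcal)$.} Let $X_f(\xi) = \sum_{i=1}^m \xi_i f(Z_i)$ and $\sigma_f^2 = \sum_{i=1}^m f(Z_i)^2$, so that $X_f \sim \mathcal{N}(0, \sigma_f^2)$ and $W_m = \Ebb[\exp \sup_f (X_f - \sigma_f^2/2)]$. The structural fact driving the bound is that for each fixed $f$, $\exp(X_f - \sigma_f^2/2)$ is the Radon--Nikodym derivative $d\mathcal{N}(f_Z, I)/d\mathcal{N}(0, I)$ (Cameron--Martin), where $f_Z = (f(Z_i))_{i \in [m]}$, and in particular has unit expectation. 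I would prove the bound via a one-parameter interpolation: set $u(t) = \ln \Ebb[\exp \sup_f(t X_f - t^2 \sigma_f^2/2)]$, so that $u(0) = 0$ and $u(1) = \ln W_m$; a computation using the tilted Gibbs measure and Gaussian integration by parts shows that $u'(t) \leq \Ebb[\sup_f X_f] = \Gcal_m(\Fcal)$ uniformly in $t$, and integration from $0$ to $1$ yields the claim. This is essentially Proposition~3.2 of \cite{mourtada2023universal}, which I would invoke directly rather than re-prove.

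\textit{Step 2: $\Gcal_m(\Fcal) \lesssim \sqrt{\ln m}\,\Rcal_m(\Fcal)$.} Decompose each Gaussian coordinate as $\xi_i = \epsilon_i |\xi_i|$, with the signs $\epsilon_i = \operatorname{sgn}(\xi_i)$ forming an i.i.d.\ Rademacher sequence independent of the magnitudes $(|\xi_i|)_i$. Conditioning on $|\xi|$ and setting $M = \max_i |\xi_i|$, the weights $|\xi_i|/M$ lie in $[0,1]$, so the Ledoux--Talagrand contraction principle, applied to the linear $1$-Lipschitz maps $u \mapsto (|\xi_i|/M)\,u$ (which vanish at $0$), gives
\begin{equation*}
\Ebb_\epsilon\!\left[\sup_{f\in\Fcal} \sum_{i=1}^m \epsilon_i |\xi_i| f(Z_i) \,\Big|\, |\xi|\right] \lesssim M \cdot \Rcal_m(\Fcal).
\end{equation*}
Taking the outer expectation over $|\xi|$ and using the standard maximum-of-Gaussians bound $\Ebb[M] \leq \sqrt{2\ln(2m)}$ yields $\Gcal_m(\Fcal) \lesssim \sqrt{\ln m}\,\Rcal_m(\Fcal)$.

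\textit{Main obstacle.} Step~2 is essentially mechanical (contraction plus maximum-of-Gaussians). The nontrivial step is Step~1: a direct Borell--TIS concentration bound on the $L$-Lipschitz convex map $\xi \mapsto \sup_f(X_f - \sigma_f^2/2)$, with $L^2 = \sup_f \sigma_f^2$, would only give $\ln W_m \leq \Gcal_m + L^2/2$, with an unwanted additive variance term. Removing this term cleanly requires exploiting the specific quadratic correction $-\sigma_f^2/2$ built into the Wills functional, which is exactly what the Cameron--Martin/interpolation argument above is designed to do.
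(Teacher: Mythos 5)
Your proposal is fine: the paper does not prove this proposition at all—it simply imports it from the cited references—and your treatment matches that, deferring to Proposition~3.2 of \cite{mourtada2023universal} for $\ln W_m \leq \Gcal_m$ and giving the standard sign--magnitude decomposition with the contraction principle and $\Ebb[\max_i|\xi_i|]\lesssim\sqrt{\ln m}$ for $\Gcal_m \lesssim \sqrt{\ln m}\,\Rcal_m$, which is exactly the argument behind the cited Exercise~5.5 of \cite{wainwright2019high}. The interpolation sketch in Step~1 (the claim $u'(t)\leq \Gcal_m(\Fcal)$ via the tilted Gibbs measure) is left unverified, but since you invoke the cited result directly this does not create a gap.
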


More precisely, \cite{mourtada2023universal} gave a characterization for the Wills functional, in terms of the local Gaussian complexity and covering numbers which we now define. Having fixed $Z_1,\ldots,Z_m$, we introduce the notation $\mu_m=\frac{1}{m}\sum_{i=1}^m\delta_{Z_i}$ for the uniform distribution on the values $Z_1,\ldots,Z_n$ and define the norm $\|f \|_{L_2(\mu_m)} := (\Ebb_{Z\sim\mu_m}|f(Z)|^2)^{1/2}$ for any function $f$. The local Gaussian complexity is defined as follows
\begin{equation*}
    \Gcal_{m,Z}(\Fcal,r):=\sup_{f_0\in\Fcal} \Gcal_{m,Z}( B_r(f_0;\Fcal) ),
\end{equation*}
where $B_r(f_0;\Fcal) = \{f\in\Fcal: \|f-g \|_{L_2(\mu_m)} \leq r \}$ is the ball within $\Fcal$ centered at $f_0$ of radius $r$. Again, we may omit the dependency in $Z$. The covering number $\Ncal_{2,m}(\Fcal,r)$ is defined as the minimal cardinality of an $r$-cover of $\Fcal$ with respect to $\|\cdot\|_{L_2(\mu_m)}$. As a remark, these notations differ from those in \cref{thm:bound_mourtada} by a factor $\sqrt m$ for the scale $r$. This choice of scaling will be easier to work with when computing covering numbers.

\begin{theorem}[Theorem 4.2 of \cite{mourtada2023universal}]\label{thm:bound_mourtada}
    There exist constants $c,C>0$ such that the following holds. For any function class $\Fcal$, $m\in\Nbb$, and values $Z_1,\ldots,Z_m\in\Xcal$, we have
    \begin{equation*}
       c\cdot  \inf_{r>0} \set{\Gcal_m(\Fcal,r) + \ln\Ncal_{2,m}(\Fcal,r)} \leq \ln W_m(\Fcal) \leq C\cdot \inf_{r>0} \set{\Gcal_m(\Fcal,r) + \ln\Ncal_{2,m}(\Fcal,r)}.
    \end{equation*}
\end{theorem}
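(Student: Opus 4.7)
The starting point is to rewrite the integrand by completing the square. Identifying each $f\in\Fcal$ with its evaluation vector $(f(Z_i))_{i=1}^m\in\Rbb^m$, one has
\begin{equation*}
\sum_{i=1}^m \xi_i f(Z_i) - \tfrac{1}{2} f(Z_i)^2 \;=\; \tfrac{1}{2}\|\xi\|_2^2 - \tfrac{1}{2}\|\xi - f\|_2^2,
\end{equation*}
so that
\begin{equation*}
\ln W_m(\Fcal) \;=\; \ln\Ebb_\xi \exp\!\Big( \tfrac{1}{2}\|\xi\|_2^2 - \tfrac{1}{2} d(\xi,\Fcal)^2 \Big),
\end{equation*}
where $d(\xi,\Fcal)$ is the $\ell_2$-distance from $\xi$ to the evaluation of $\Fcal$ in $\Rbb^m$. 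This geometric rewriting drives both directions: $\ln W_m$ measures how well a Gaussian vector is approximated by $\Fcal$, which is controlled from above by the discretisation error at any scale $r$ plus the residual local Gaussian complexity, and from below by the ``winning'' element of a well-separated packing.

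For the upper bound, fix $r>0$ and a minimum $r$-cover $\Ccal$ of $\Fcal$ in $\|\cdot\|_{L_2(\mu_m)}$ of size $N=\Ncal_{2,m}(\Fcal,r)$. Writing $f=g+h$ with $g\in\Ccal$ and $h\in B_r(g;\Fcal)-g$ and using
\begin{equation*}
\xi\cdot f - \tfrac12\|f\|_2^2 \;=\; \xi\cdot g - \tfrac12\|g\|_2^2 \,+\, (\xi-g)\cdot h - \tfrac12\|h\|_2^2,
\end{equation*}
the inequality $\exp(\max_g \cdot)\leq \sum_{g}\exp(\cdot)$ absorbs the discretisation cost into a single $\ln N$ factor. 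For each $g$ one has $\Ebb\exp(\xi\cdot g - \tfrac12\|g\|_2^2)=1$, while the residual factor $\Ebb\exp(\sup_{h}(\xi-g)\cdot h - \tfrac12\|h\|_2^2)$ is an exponential moment of a Gaussian-process supremum with a built-in quadratic regulariser. A standard Dudley/generic-chaining bound, with the quadratic penalty $-\tfrac12\|h\|_2^2$ playing the role of a sub-Gaussian truncation, bounds its logarithm by $O(\Gcal_m(B_r(g;\Fcal)))\leq O(\Gcal_m(\Fcal,r))$. This yields $\ln W_m(\Fcal)\leq C(\Gcal_m(\Fcal,r)+\ln\Ncal_{2,m}(\Fcal,r))$, and the upper bound follows by taking $\inf_r$.

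For the lower bound, it suffices to prove separately $\ln W_m(\Fcal)\gtrsim \Gcal_m(\Fcal,r_\star)$ and $\ln W_m(\Fcal)\gtrsim \ln\Ncal_{2,m}(\Fcal,r_\star)$ at the (nearly) optimal $r_\star$, since $a+b\leq 2\max(a,b)$. The Gaussian-complexity side comes from Jensen: for any $f_0$ that nearly achieves $\Gcal_m(\Fcal,r_\star)$,
\begin{equation*}
\ln W_m(\Fcal) \;\geq\; \Ebb\!\!\sup_{f\in B_{r_\star}(f_0;\Fcal)}\!\!\big[\xi\cdot f - \tfrac12\|f\|_2^2\big] \;\geq\; \Gcal_m(B_{r_\star}(f_0;\Fcal)) - \tfrac12\!\!\sup_{f\in B_{r_\star}(f_0;\Fcal)}\!\|f\|_2^2,
\end{equation*}
and a translation of the class (with the penalty $-\tfrac12\|f\|_2^2$ absorbed into the invariance properties of $W_m$ under constant shifts, cf. the translation invariance used in the paper) removes the quadratic defect. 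The packing side uses a maximal $r_\star$-packing $\{f_1,\ldots,f_N\}$ in $\Fcal$: the geometric identity above gives $\tfrac12\|\xi\|_2^2-\tfrac12 d(\xi,\Fcal)^2 \geq \xi\cdot f_j - \tfrac12\|f_j\|_2^2$ whenever $\xi$ lies in the Voronoi cell of $f_j$, and summing over cells using Gaussian mass balance yields $W_m(\Fcal)\gtrsim N$, hence $\ln W_m\gtrsim \ln\Ncal_{2,m}(\Fcal,r_\star)$.

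\textbf{Main obstacle.} The delicate step is turning the packing number $N$ into an additive contribution of $\ln N$ (rather than the $\sqrt{\ln N}$ one would obtain from a naive Sudakov-minoration applied to Gaussian maxima). This is where the quadratic penalty $-\tfrac12\|f\|_2^2$ in $W_m$ is essential, and it must be exploited via the Voronoi/small-ball geometry of the reformulation so that each of the $N$ packing cells contributes a multiplicative factor of order $1$ to $W_m$. Choosing $r_\star$ where $\Gcal_m(\Fcal,r_\star)\asymp \ln\Ncal_{2,m}(\Fcal,r_\star)$ then balances the two lower bounds and delivers universal constants $c,C$ independent of $\Fcal$ and $m$.
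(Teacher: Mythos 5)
First, a point of order: the paper does not prove this statement. It is imported verbatim as Theorem 4.2 of \cite{mourtada2023universal}, so there is no in-paper proof to compare against; your attempt is a from-scratch reconstruction of a cited result.

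On the merits, your geometric reformulation $\ln W_m(\Fcal)=\ln\Ebb_\xi\exp(\tfrac12\|\xi\|_2^2-\tfrac12 d(\xi,\Fcal)^2)$ is correct and is indeed the right starting point, and your upper bound is essentially sound: after the decomposition $f=g+h$, the factor $e^{\xi\cdot g-\frac12\|g\|_2^2}$ should be handled not by asserting its expectation is $1$ (the two factors are correlated) but as a Girsanov change of measure sending $\xi\mapsto\xi+g$, which turns $\Ebb\exp(A_g)$ into $W_m(B_r(g;\Fcal)-g)$; combining this with translation invariance and the quoted inequality $\ln W_m\leq\Gcal_m$ gives $\ln W_m(\Fcal)\leq \ln\Ncal_{2,m}(\Fcal,r)+\Gcal_m(\Fcal,r)$ cleanly, with no need for Dudley or chaining. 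The lower bound, however, has two genuine gaps. (i) In the Jensen step, translation invariance only lets you recentre at $f_0$; the residual quadratic defect $\tfrac12\sup_{f\in B_r(f_0;\Fcal)}\|f-f_0\|_2^2$ is of order $mr^2$ in the Euclidean norm (the cover is at normalized scale $r$), and it is not "removed" by invariance — controlling it against $\ln\Ncal_{2,m}(\Fcal,r)$ at the critical radius is precisely where the real work of the theorem lies, and you do not supply that balancing argument. (ii) The Voronoi/packing step computes $\sum_j\int_{V_j}e^{\xi\cdot f_j-\frac12\|f_j\|_2^2}\,d\gamma_m(\xi)=\sum_j\gamma_m(V_j-f_j)$, and lower-bounding each recentred cell's Gaussian mass by a constant requires the cells to contain Euclidean balls of radius $\Omega(\sqrt m)$, which holds only at normalized scale $r\gtrsim 1$; so "Gaussian mass balance yields $W_m(\Fcal)\gtrsim N$" does not follow at a general scale $r$, and you yourself flag this as the delicate step without resolving it. As it stands the lower bound direction is not established; since the statement is anyway a quoted external theorem, the honest course is to cite it rather than reprove it, or to carry out the critical-radius analysis in full.
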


In particular, we obtain the following bounds for classical behaviors of function classes.

\begin{proposition}\label{prop:bounding_Will_functional}
    Fix any values $Z_1,\ldots,Z_m\in\Xcal$. If $\Fcal$ is finite, then $\ln W_m(\Fcal) \lesssim \ln|\Fcal|$.
    If $\Fcal$ has finite VC dimension $d$, then $\ln W_m(\Fcal) \lesssim d\ln m$.

    More generally, for any $r>0$,
    \begin{equation}\label{eq:bound_local_gaussian_complexity}
        \Gcal_m(\Fcal,r) \lesssim \inf_{0\leq r'\leq r} \set{ r' m + \sqrt m\cdot \int_{r'}^r \sqrt{\mathsf{fat}_\Fcal(\epsilon)} \ln\frac{16\cdot \mathsf{fat}_\Fcal(\epsilon)}{\epsilon} d\epsilon  }
    \end{equation}

    In particular, if there exists $d\geq 1$ such that for all $r>0$, one has $\mathsf{fat}_\Fcal(r) \leq d\ln\frac{1}{r}$, we have
    \begin{equation*}
        \ln W_m(\Fcal) \lesssim d\ln^3 (dm).
    \end{equation*}
    
    In particular, if there exists some $\gamma>1$ such that for any $r>0$, $\mathsf{fat}_\Fcal(r) \leq \gamma r^{-p}$, for all $r>0$,
    \begin{equation*}
        \ln W_m(\Fcal) \lesssim_p \begin{cases}
            \gamma^{\frac{2}{2+p}} m^{\frac{p}{2+p}} \cdot \ln^{\frac{4}{2+p}}(\gamma m)& 0<p<2\\
            \sqrt{\gamma m}\cdot \ln^2(\gamma m) +\gamma \ln^2 \gamma &p=2\\
            \gamma^{\frac{1}{p}}m^{1-\frac{1}{p}} \cdot \ln^{\frac{2}{p}}(\gamma m)  +\gamma \ln^2 \gamma &p>2.
        \end{cases}
    \end{equation*}
    where $\lesssim_p$ only hides factors that depend (possibly exponentially) only on $p$.
    These bounds can be simplified as follows
    \begin{equation*}
        \ln W_m(\Fcal) \lesssim_{\gamma,p} m^{\alpha(p)} \ln^2 m,\quad \text{where}\quad \alpha(p) := :=\begin{cases}
            \frac{p}{2+p} &0<p\leq 2\\
            1-\frac{1}{p} &p\geq 2,
        \end{cases}
    \end{equation*}
    where $\lesssim_{p,\gamma}$ hides factors and additive terms depending on $p,\gamma$ only.
\end{proposition}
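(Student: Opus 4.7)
The plan is to derive everything from \cref{thm:bound_mourtada}, which up to constants identifies $\ln W_m(\Fcal)$ with $\inf_{r>0}\{\Gcal_m(\Fcal,r)+\ln\Ncal_{2,m}(\Fcal,r)\}$, combined with \cref{thm:fat_shattering_bound} to pass from fat-shattering to covering numbers. The two elementary cases come out immediately by taking $r=0$ in \cref{thm:bound_mourtada}. For finite $\Fcal$, $\Gcal_m(\Fcal,0)=0$ (a zero-radius ball is a singleton so the inner sup vanishes) and $\Ncal_{2,m}(\Fcal,0)\leq|\Fcal|$, giving $\ln W_m(\Fcal)\lesssim\ln|\Fcal|$. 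For VC dimension $d$, I first replace $\Fcal$ by its projection onto $(Z_i)_{i\in[m]}$; Sauer-Shelah (\cref{lemma:sauer_lemma}) bounds this projection by $(2em/d)^d$, so applying the finite case yields $\ln W_m(\Fcal)\lesssim d\ln(em/d)$.

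Next I would establish~\eqref{eq:bound_local_gaussian_complexity} by Dudley-style chaining against the empirical $\ell_\infty$ covers guaranteed by \cref{thm:fat_shattering_bound}. Fix a center $f_0$ and a truncation scale $r'\leq r$. Letting $\Ccal'$ be a minimal $\ell_\infty$ $r'$-cover of $B_r(f_0;\Fcal)$ on $(Z_i)_i$, decompose any $f\in B_r(f_0;\Fcal)$ as $g+(f-g)$ for a closest $g\in\Ccal'$. The residual $\sum_i\xi_i(f(Z_i)-g(Z_i))$ is dominated pointwise by $r'\sum_i|\xi_i|$, whose expectation (after taking sup) contributes $\lesssim r'm$. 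For the skeleton $\Ccal'$ the standard Gaussian chaining inequality gives $\sqrt m\int_{r'}^{r}\sqrt{\ln\Ncal(\Fcal;\epsilon,Z)}\,d\epsilon$, and substituting \cref{thm:fat_shattering_bound} with $\alpha$ chosen small (to fold the polylog overhead into the $\ln^{1+\alpha}$ factor) yields exactly the integrand $\sqrt{\mathsf{fat}_\Fcal(\epsilon)}\,\ln(16\mathsf{fat}_\Fcal(\epsilon)/\epsilon)$. Taking the supremum over $f_0$ preserves the bound since it is center-independent.

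For the quantitative consequences I plug~\eqref{eq:bound_local_gaussian_complexity} and the \cref{thm:fat_shattering_bound} covering bound into \cref{thm:bound_mourtada} and optimize scales. In the logarithmic case $\mathsf{fat}_\Fcal(\epsilon)\leq d\ln(1/\epsilon)$, the integrand is $\lesssim\sqrt d\,\log^{3/2}(1/\epsilon)$, so the chaining integral from $r'$ to $r$ is $\lesssim\sqrt d\cdot r\log^{3/2}(1/r')$; choosing $r=1$ and $r'=1/m$ balances against $\ln\Ncal_{2,m}(\Fcal,1)\lesssim d\log^3(dm)$ and gives $\ln W_m(\Fcal)\lesssim d\log^3(dm)$. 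For the polynomial case $\mathsf{fat}_\Fcal(\epsilon)\leq\gamma\epsilon^{-p}$, the integrand is $\lesssim\sqrt\gamma\,\epsilon^{-p/2}\ln(\gamma/\epsilon^{p+1})$ and the integral behavior splits at $p=2$: for $p<2$ it is dominated by the upper endpoint giving $\sqrt{m\gamma}\,r^{1-p/2}\mathrm{polylog}$, so balancing the three terms $r'm$, $\sqrt{m\gamma}\,r^{1-p/2}$ and $\gamma r^{-p}$ with $r\sim(\gamma/m)^{1/(p+2)}$ and $r'\sim\sqrt{\gamma/m}\,r^{1-p/2}$ yields $\gamma^{2/(p+2)}m^{p/(p+2)}$. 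For $p=2$ a logarithm in the integral produces the extra $\log^2(\gamma m)$ factor. For $p>2$ the integral is dominated by the lower endpoint $r'^{1-p/2}$, and balancing $r'm$ with $\sqrt{m\gamma}\,r'^{1-p/2}$ gives $r'\sim(\gamma/m)^{1/p}$ and the rate $\gamma^{1/p}m^{1-1/p}$; the additive $\gamma\log^2\gamma$ arises from the covering-number term $\ln\Ncal_{2,m}(\Fcal,r)\lesssim\gamma r^{-p}\log^2$ at the chosen $r$. The simplified form with $\alpha(p)$ absorbs the regime split.

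The main obstacle is log bookkeeping in the polynomial case: one has to track how the $\ln^{1+\alpha}$ factor in \cref{thm:fat_shattering_bound} propagates through the chaining integral and interacts with the optimization in $r$ and $r'$, to produce exactly the stated exponents $\tfrac{4}{2+p}$, $2$, and $\tfrac{2}{p}$ on the logarithms; the boundary case $p=2$ in particular needs separate care since the integral $\int\epsilon^{-1}d\epsilon$ produces the $\log^2$ rather than a power-law contribution.
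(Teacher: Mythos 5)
Two steps of your plan fail, and both are at the heart of the argument. First, your derivation of \cref{eq:bound_local_gaussian_complexity}: you chain against the empirical $\ell_\infty$ covers of \cref{thm:fat_shattering_bound} and claim the polylog overhead can be folded in by taking $\alpha$ small. But the overhead there is $\ln^{1+\alpha}\bigl(C|S|/(\mathsf{fat}_\Fcal(c\epsilon)\epsilon)\bigr)$ with the sample size $|S|=m$ \emph{inside} the logarithm; no choice of $\alpha$ removes the $m$, whereas the integrand asserted in \cref{eq:bound_local_gaussian_complexity} --- which is itself part of the statement you must prove --- is $\sqrt{\mathsf{fat}_\Fcal(\epsilon)}\,\ln\bigl(16\,\mathsf{fat}_\Fcal(\epsilon)/\epsilon\bigr)$, free of $m$. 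The paper instead uses the $L_2(\mu_m)$ covering bound $\ln\Ncal_{2,m}(\Fcal,r)\lesssim \mathsf{fat}_\Fcal(r/8)\ln^2\bigl(2\mathsf{fat}_\Fcal(r/8)/r\bigr)$ (Theorem~3.2 of \cite{mendelson2002rademacher}), which has no $m$ in the logarithm, combined with the chaining bound of Lemma~3.7 there and the fact that $\Ncal_{2,m}(B_r(f_0;\Fcal),r')=1$ for $r'>r$. With your $m$-dependent entropies the extra $\ln m$ factors also leak into the polynomial-fat case, so the stated log exponents $\tfrac{4}{2+p}$, $2$, $\tfrac{2}{p}$ (and the $m$-free additive term $\gamma\ln^2\gamma$ for $p\ge 2$) would not come out; only the final ``simplified'' form would survive.

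Second, your handling of the case $\mathsf{fat}_\Fcal(r)\le d\ln\frac1r$ is incorrect: you dropped the $\sqrt m$ prefactor of the entropy integral, writing the chaining contribution as $\sqrt d\cdot r\log^{3/2}(1/r')$. With $r=1$ in \cref{thm:bound_mourtada}, the term $\Gcal_m(\Fcal,1)$ is the global (unnormalized) Gaussian complexity, and the correct evaluation of your own bound is of order $\sqrt{dm}$ up to logarithms --- nowhere near the target $d\ln^3(dm)$, so nothing ``balances''. The paper's choice is the opposite one: take $r=1/m$ in \cref{thm:bound_mourtada} and $r'=r$ in \cref{eq:bound_local_gaussian_complexity}, so the local-complexity term is at most $r'm=1$ and the entire bound comes from the covering term $\ln\Ncal_{2,m}(\Fcal,1/m)\lesssim d\ln m\cdot\ln^2(dm)\lesssim d\ln^3(dm)$. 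Your treatment of the finite and VC cases (radius-zero balls plus Sauer--Shelah) and your exponent balancing in the polynomial case match the paper, but the two points above are genuine gaps.
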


\begin{proof}
    For function classes $\Fcal$ with finite VC dimension $d$, Sauer-Shelah's \cref{lemma:sauer_lemma} implies that $\ln \Ncal_2(\Fcal,r)\lesssim d\ln m$ for any $r\in[0,1]$, which directly implies that $\ln W_m(\Fcal) \lesssim d\ln m$. Similarly, for any finite class $\Fcal$, we obtain $\ln W_m(\Fcal) \lesssim \ln|\Fcal|$.

    Next, from \cite[Theorem 3.2]{mendelson2002rademacher}, we have for any $r>0$,
    \begin{equation}\label{eq:covering_number_bounds}
        \ln \Ncal_{2,m}(\Fcal,r) \lesssim \mathsf{fat}_\Fcal(r/8) \ln^2 \paren{\frac{2\mathsf{fat}_\Fcal(r/8)}{r}}.
    \end{equation}
    We can combine this estimate with the chaining bounds for Gaussian complexities from  \cite[Lemma 3.7]{mendelson2002rademacher} together with the fact $\Ncal_{2,m}(B_r(f_0;\Fcal),r')=1$ for all $r'>r$ and $f_0\in\Fcal$, which implies the desired bound on the local Gaussian complexity \cref{eq:bound_local_gaussian_complexity}.

    Suppose that we have $\mathsf{fat}_\Fcal(r) \leq d\ln\frac{1}{r}$ for all $r>0$. Then, we can choose $r=1/m$ in \cref{thm:bound_mourtada} and $r'=r$ in \cref{eq:bound_local_gaussian_complexity} which gives the desired result.
    
    Now suppose that $\mathsf{fat}_\Fcal(r) \leq \gamma r^{-p}$ for all $r>0$ for some $\gamma> 1$ and $p>0$. Then, for $r\in(0,1]$, \cref{eq:bound_local_gaussian_complexity} yields
    \begin{equation*}
        \Gcal_m(\Fcal,r) \lesssim_p \begin{cases}
             \sqrt{\gamma m} \cdot r^{1-\frac{p}{2}} \ln\frac{8\gamma}{r} & 0<p<2\\
             \min\set{ rm, \sqrt{\gamma m}\cdot \ln m \cdot  \ln(\gamma m) } &p=2\\
             \min\set{rm, \gamma^{\frac{1}{p}}m^{1-\frac{1}{p}} \cdot \ln^{\frac{2}{p}}(\gamma m)  }&p>2.
        \end{cases}
    \end{equation*}
    This can be obtained directly from \cref{eq:bound_local_gaussian_complexity} by plugging in the value $r'=0$ for $0<p<2$. For $p=2$, we take $r' = \min\set{r,\sqrt{\gamma/m}}$. Last, for $p>2$, we take $r' = \min\set{r,(\gamma \ln^2(\gamma m)/m)^{\frac{1}{p}} }$. 

    We then use \cref{thm:bound_mourtada} together with \cref{eq:covering_number_bounds} and the previous estimates on the local Gaussian complexity to obtain the desired bound on the Wills functional $W_m(\Fcal)$. For $0<p<2$, we used the value $r=\paren{\gamma\ln^2(\gamma m) /m}^{\frac{1}{p+2}}$. For $p\geq 2$, we used the value $r=1$.
\end{proof}

\section{Learning with expert advice guarantee for \textsc{A-Exp}}
\label{sec:proof_learning_expert}

In this section, we prove \cref{lemma:regret_exponentially_weighted}.
    Note that \textsc{A-Exp} proceeds by periods $k\geq 1$. Let $T_0=0$ and denote by $T_k$ the end of period $k$ for $k\geq 1$. That is,
    \begin{equation*}
        T_k = \min\set{t> T_{k-1}: \sum_{l=T_{k-1}+1}^t    \sum_{i\in[K]}p_{l,i}r_{l,i}^2 > \Delta_{max,k} = 4^{k-1} },\quad k\geq 1
    \end{equation*}

    On period $(T_{k-1},T_k]$, \textsc{A-Exp} exactly implements the exponential weights forecaster with parameter $\eta_k=\sqrt{2\ln K/(\Delta_{max,k}+1)}$. Hence, we can use \cref{eq:base_regret_bound} to bound the regret accumulated on this period which gives for all $T\in(T_{k-1},T_k]$,
    \begin{align*}
        \sum_{t=T_{k-1}+1}^{T} \Ebb_{\hat i_t}[\ell_{t,\hat i_t} \mid\Hcal_t] -\min_{i\in[K]}\sum_{t=T_{k-1}+1}^{T} \ell_{t,i} &\leq \frac{\ln K}{\eta_k} + \frac{\eta_k}{2} \sum_{t=T_{k-1}+1}^{T}    \sum_{i\in[K]}p_{t,i}r_{t,i}^2\\
        &\overset{(i)}{\leq} \frac{\ln K}{\eta_k} + \frac{\eta_k}{2} (\Delta_{max,k}+1)\\
        &= \sqrt{2(\Delta_{max,k}+1)\ln K  } =  \sqrt{2(4^{k-1}+1)\ln K  } \leq 2^k\sqrt{\ln K},
    \end{align*}
    where in $(i)$ we used the fact that $r_{T_k,i}\in[0,1]$ for all $i\in[K]$. Now for $T\geq 1$ denote by $k$ the last period, such that $T\in(T_{k-1},T_k]$. Provided $k\geq 2$, we can sum the previous equations for periods $k'\leq k$ to obtain
    \begin{align*}
        \textnormal{PReg}(T) \leq \sum_{k'\leq k} 2^k\sqrt{ \ln K } \leq 2^{k+1}\sqrt{\ln K} \overset{(ii)}{\leq} 8 \sqrt{\Delta_T \ln K}. 
    \end{align*}
    In $(ii)$ we used the fact that if $k\geq 2$ then
    \begin{equation*}
        \Delta_T \geq \sum_{t=T_{k-2}+1}^{T_{k-1}} \sum_{i\in[K]}p_{t,i}r_{t,i}^2 > \Delta_{max,k-1}=4^{k-2}.
    \end{equation*}
    If $k=1$, we have directly
    $\textnormal{PReg}(T) \leq 2\sqrt{\ln K}$. This ends the proof for the bound on the pseudo-regret.

    To obtain high-probability bounds on the regret $\textnormal{Reg}(T)$, we could simply use Azuma-Hoeffding's inequality. However, this would add an additional term $\sqrt {T\ln \frac{1}{\delta}}$ that is prohibitive for our bounds: potentially we have $\Delta_T\ll T$. Instead, we use Freedman's inequality which gives a more precise control on tail probabilities for martingales.  \cref{lemma:freedman_inequality} applied with $Z_t=r_{t,\hat i_t}^2 - \Ebb_{\hat i_t}[r_{t,\hat i_t}^2\mid\Hcal_t]$ for $t\in[T]$ and $\eta=1/2$ implies that with probability at least $1-\delta$,
    \begin{align*}
        \textnormal{Reg}(T) \overset{(i)}{\leq} \textnormal{PReg}(T) + \frac{1}{2}\sum_{t=1}^T \Ebb[r_{t,\hat i_t}^4\mid\Hcal_t] + 2\ln \frac{1}{\delta} \overset{(ii)}{\leq} \frac{3}{2} \textnormal{PReg}(T) + 2\ln\frac{1}{\delta},
    \end{align*}
    where in $(i)$ we used the fact that $Var(Y)\leq \Ebb[Y^2]$ for any random variable $Y$ and in $(ii)$ we used the fact that $|r_{t,i}|\leq 1$ for all $i\in[K]$ and $t\in[T]$. This ends the proof.

\section{Proof of the regret lower bound}
\label{sec:proof_lower_bound}

In this section, we prove that the regret bound from \cref{thm:main_thm} is tight up to logarithmic terms. We recall the statement of the lower bound below.

\LowerBound*

\begin{proof}
    The template function class that we use are simply the threshold functions on $[0,1]\mapsto \{0,1\}$, which have VC dimension one. To extend this to a function class with VC dimension $d$, we take $d$ copies. That is, we pose $\Xcal=\{1,\ldots,d\}\times[0,1] = [d]\times [0,1]$ and we let
    \begin{equation*}
        \Fcal:= \set{ f_{\mb\theta}:(k,x)\in \Xcal \mapsto \1[x\geq \theta_k],\; \mb \theta \in[0,1]^d }.
    \end{equation*}
    For convenience, we define $\bar x = (1,0)$, where the value $1$ was chosen arbitrarily, we also let $\Xcal_k=\{k\}\times [0,1]$. By definition, we have $\Xcal=\Xcal_1\sqcup\ldots\sqcup \Xcal_d$.
    
    Now fix a horizon $T\geq 1$ and $\sigma\in(0,1)$. Suppose for now that
    \begin{equation}\label{eq:assumption_T}
        T > \frac{4d(1-\sigma)}{\sigma}.
    \end{equation}
    We now fix a parameter $q = \sqrt{\frac{d(1-\sigma)}{\sigma T}} $ and let $N=\floor{qT/d}$. 
    Note that from the assumption on $T$, we have $q< 1/2$.
    Next, suppose that $N\leq 1$. Then, this corresponds to $q\leq 2d/T$. Classical lower bounds for VC classes show that we can construct a distribution $\mu$ (uniform on $d$ shattered points), which corresponds to $\sigma=1$ together with a function $f^\star\in\Fcal$ such that with $x_t\overset{iid}{\sim} \mu$, the expected number of mistakes of any algorithm is at least $\min(d,T)/4$.
    Now because $qT\leq 2d$, this directly implies the desired result when $N\leq 1$. 
    
    From now, we suppose that $N\geq 2$.
    Let $\mb \epsilon = (\epsilon_{k,t})_{k\in[d],t\in[N]}$ be a sequence of i.i.d.\ uniform variables on $\{0,1\}$. We now construct a generating process for the sequence $(x_t,y_t)_{t\in[T]}$ coupled with $\mb\epsilon$. In addition to the variables $(x_t,y_t)_{t\in[T]}$, the process also iteratively constructs variables $a_{k,t}<b_{k,t}$ for $k\in[d]$ and $t\in[T]$. For each $k\in[d]$, the interval $\{k\} \times (a_{k,t},b_{k,t})$ will intuitively represent the region of $\Xcal_k$ on which the learner does not have information yet at the beginning of round $t$. 
    
    We initialize the process at time $t=0$ by setting $a_{k,1}=0$ and $b_{k,1}=1$ for all $k\in[d]$. We also initialize index variables $i(k,1)=1$ for all $k\in[d]$. Suppose that we have constructed $a_{k,t},b_{k,t}$ for $k\in[d]$ at some iteration $t\in[T]$, as well as the indices $i(k,t)$ for $k\in[d]$. We then define the distribution
    \begin{equation}\label{eq:def_mu_t}
        \mu_t := (1-q) \delta_{\bar x} + \sum_{k=1}^d \frac{q}{d} \paren{ \delta_{ (k,(a_{k,t}+b_{k,t})/2 )} \1_{i(k,t)\leq N} + \delta_{\bar x} \1_{i(k,t)>N} },
    \end{equation}
    where $\delta_z$ denotes the Dirac distribution at $z$, and $q\in(0,1)$ is a fixed probability value. We then sample $x_t\sim\mu_t$ independently from $\mb\epsilon$ and let
    \begin{equation*}
        y_t := \begin{cases}
            0 & \text{if } x_t=\bar x\\
            \epsilon_{i(k,t)} & x_t\in\Xcal_k.
        \end{cases}
    \end{equation*}
    We then pose for all $k\in[d]$,
    \begin{equation*}
        (a_{k,t+1},b_{k,t+1}) :=\begin{cases}
            (a_{k,t},b_{k,t}) &\text{if } x_t=\bar x \text{ or } x_t \notin \Xcal_k\\
            (a_{k,t}, (a_{k,t}+b_{k,t})/2 )&\text{if } x_t=(k,(a_{k,t}+b_{k,t})/2 ) \text{ and }\epsilon_{i(k,t)}=1 \\
            ( (a_{k,t}+b_{k,t})/2, b_{k,t} )&\text{if } x_t=(k,(a_{k,t}+b_{k,t})/2 ) \text{ and }\epsilon_{i(k,t)}=0.
        \end{cases}
    \end{equation*}
    Last, we pose for all $k\in[d]$,
    \begin{equation*}
        i(k,t+1) :=\begin{cases}
            i(k,t) & \text{if } x_t=\bar x \text{ or } x_t\notin \Xcal_k\\
            i(k,t)+1 &\text{otherwise}. 
        \end{cases}
    \end{equation*}

    This concludes the construction of the process $(x_t,y_t)_{t\in[T]}$. Note that by construction, whenever $x_t\neq \bar x_t$, a fresh random variable from $\mb\epsilon$ is used to define $y_t$. In particular, we can check that conditionally on the history up to time $t$, we have $y_t=0$ if $x_t=\bar x$ and $y_t\sim\Ucal(\{0,1\})$ otherwise. In particular, we always have
    \begin{align*}
        \Ebb\sqb{ \sum_{t=1}^T \1_{\hat y_t\neq y_t}} 
        &= \Ebb\sqb{\sum_{t=1}^T \Ebb\sqb{ \1_{\hat y_t\neq y_t} \mid \hat y_t,\; (x_l,y_l)_{l\leq t-1} }} \\
        &\geq \frac{1}{2}\Ebb\sqb{\sum_{t=1}^T  \Ebb\sqb{ \1_{x_t\neq\bar x} \mid  (x_l,y_l)_{l\leq t-1} }}\\
        &= \frac{1}{2}\Ebb\sqb{\sum_{t=1}^T  \frac{q}{d}\sum_{k\in[d]} \1_{i(k,t)\leq N}  }\\
        &\overset{(i)}{=}\frac{q}{2}\Ebb\sqb{\sum_{t=1}^T  \1_{i(1,t)\leq N}  } = \frac{q}{2}\Ebb_{Z\sim \text{NB}(N,q/d)}\sqb{\max(N+Z,T)  } ,
    \end{align*}
    where $\text{NB}(r,p)$ denotes the negative binomial distribution with $r$ successes and probability of success $p$. Indeed, $i(1,t)$ grows when $x_t=(1,(a_{1,t}+b_{1,t})/2)$, which has probability $q/d$ conditionally on the history.
    In $(i)$ we use the fact that all coordinates are treated symmetrically.
    From now let $Z$ be a random variable distributed according to $\text{NB}(N,q/d)$. From \cite{van1993bounds} since $\Ebb[Z] = \frac{N(1-q/d)}{q/d} > N$, letting $\eta$ be the median of $Z$, we have
    \begin{equation*}
        T-N \geq \Ebb[Z] \geq\eta \geq 1+ \frac{N-1}{N}\Ebb[Z] = 2+ \frac{(N-1)d}{q}-N.
    \end{equation*}
    As a result, we have
    \begin{equation*}
        \Ebb\sqb{ \sum_{t=1}^T \1_{\hat y_t\neq y_t}} \geq  \frac{q(N+\eta-1)}{4} \geq \frac{(N-1)d }{4}.
    \end{equation*}
    \comment{
    In summary, we obtained
    \begin{equation*}
        \Ebb\sqb{ \sum_{t=1}^T \1_{\hat y_t\neq y_t}} \geq \frac{qT + q-d}{4}.
    \end{equation*}
    Now note that the event $i(1,T)>N$ corresponds to the case where there were at least $N$ times for which $x_t=(1,(a_{1,t}+b_{1,t})/2)$, which has probability $q/d$ conditionally on the history. Hence, since $N\geq 2qT/d$, Bernstein's inequality implies that for any $k\in[d]$,
    \begin{align*}
        \Pbb\paren{i(k,T) > N} &\leq \exp\paren{-\frac{\frac{1}{2}(N-qT/d)^2}{qT/d + (N-qT/d)/3}} \\
        &\leq \exp\paren{-\frac{3qT}{8d}} \\
        &= \exp\paren{-\frac{1}{8}\sqrt{\frac{T(1-\sigma)}{d\sigma}}} \leq \frac{1}{2d},
    \end{align*}
    where in the last inequality we used \cref{eq:assumption_T}.
    In summary, we have
    \begin{equation*}
        \Ebb\sqb{ \sum_{t=1}^T \1_{\hat y_t\neq y_t}} \geq \frac{q T}{2} \paren{1-\sum_{k=1}^d \Pbb\paren{i(k,T) > N} } \geq \frac{qT}{4}
    \end{equation*}
    }
    By the law of total probabilities, there is a realization of $\mb\epsilon$ which we denote $\tilde{ \mb\epsilon}$ such that
    \begin{equation*}
        \Ebb\sqb{ \sum_{t=1}^T \1_{\hat y_t\neq y_t}  \mid \mb\epsilon = \tilde{\mb\epsilon}} \geq \frac{(N-1)d}{4}.
    \end{equation*}
    By construction of the process, to each realization of $\mb\epsilon$ is associated a function in class $f_{\mb\theta(\mb\epsilon)}\in\Fcal$ that realizes all the values $(x_t,y_t)$. Indeed, we can take for instance
    \begin{equation*}
        \theta(\mb\epsilon)_i := \frac{1}{2^{T+1}} + \sum_{t=1}^T \frac{1-\epsilon_{k,t}}{2^k}.
    \end{equation*}
    Indeed, the main point is that defining the intervals $[a_{k,t},b_{k,t}]$ for $t\in[T]$, only used the variables $\epsilon_{k,t}$ for $t\in[T]$. These are only updated when we sample $x_t=(k,(a_{k,t} + b_{k,t})/2)$ in which case we use the first value within $\{\tilde \epsilon_{k,1},\ldots,\tilde \epsilon_{k,T}\}$ that was not used up to this point. In particular, this implies that the number of possible values that the sequence $(x_t)_{t\in[T]}$ can take is at most $1+dN$ where the term $1$ corresponds to the value $\bar x$. For convenience, let $\nu$ denote the uniform distribution on these $dN$ points where we deleted the value $\bar x$.

    It now remains to argue that the sequence $(x_t)_{t\in[T]}$ constructed with $\tilde{\mb\epsilon}$ is $\sigma$-smooth. To do so, we construct the measure
    \begin{equation*}
        \mu := \sigma \delta_{\bar x} + (1-\sigma) \nu.
    \end{equation*}
    Importantly, this distribution is fixed \emph{a priori} (it does not depend on the actions of the learner, only on $\tilde{\mb\epsilon}$ that is fixed).
    Given its definition in \cref{eq:def_mu_t}, to check that at any time $t\in[T]$, the distribution $\mu_t$ is $\sigma$-smooth compared to the base measure $\mu$, it suffices to check that
    \begin{equation*}
        \frac{q/d}{(1-\sigma)/(dN)}  = \frac{qN}{1-\sigma} \leq\frac{q^2 T }{d(1-\sigma)} \leq \frac{1}{\sigma}.
    \end{equation*}
    In the last inequality we used the definition of $q$.
    As a summary, the sequence $(x_t)_{t\in[T]}$ is $\sigma$-smooth compared to $\mu$ and using the realizable values $y_t=f_{\mb\theta(\tilde{\mb\epsilon})}(x_t)$, we obtained
    \begin{equation*}
        \Ebb\sqb{ \sum_{t=1}^T \1_{\hat y_t\neq y_t}} \geq \frac{(N-1)d}{4} \geq \frac{qT}{12} = \frac{1}{12}\sqrt{\frac{dT(1-\sigma)}{\sigma}}.
    \end{equation*}
    In the second inequality we used the assumption that $N\geq 2$ to show that $N-1\geq qT/3d$.

    We now consider the case when \cref{eq:assumption_T} is not necessarily satisfied. Then, with $T_0 = \ceil{\frac{4d(1-\sigma)}{\sigma}}$, the previous arguments imply that for some realizable data and a $\sigma$-smooth adversary, we have
    \begin{equation*}
        \Ebb\sqb{ \sum_{t=1}^{T_0} \1_{\hat y_t\neq y_t}} \geq \frac{1}{12}\sqrt{\frac{dT_0(1-\sigma)}{\sigma}} \geq \frac{T_0}{24}.
    \end{equation*}
    As a result, considering the interval of time that incurred the most regret, this shows that for all $T\leq T_0$, there is a $\sigma$-smooth realizable adversary under which the expected number of mistakes for any learning algorithm is at least $T/24$.
    This ends the proof.
\end{proof}

\section{Proofs from \cref{sec:technical_overview}}
\label{sec:simpler_proofs}

In this section, we prove the results related to the simplified algorithm \textsc{Cover}. 
We start by giving a detailed proof sketch for \cref{prop:simplified}. 

\paragraph{Proof sketch of \cref{prop:simplified}.}
Fix a fixed threshold $q$. For convenience, we denote by $E_k=(T_{k-1},T_k]$ the $k$-th epoch for $k\in[K]$. We also fix a value $w\asymp d\ln\frac{T}{\sigma}$ which will play the role of $w(T,\delta)$. Given the desired result, we will only focus on times $t\in E_k$ for $k\in[K]$, for which $\gamma_{T_{k-1}}(t) \geq q$. For simplicity, we now assume that we have discarded all other times, so that for all $t\in E_k$ one has $\gamma_{T_{k-1}}(t)\geq q$. Further, note that the quantity of interest for \cref{prop:simplified} only involves epochs $k\in[K]$ for which $\sum_{t\in E_k} \gamma_{T_{k-1}}(t) \geq w$. Therefore, for each epoch $E_k$ such that $\sum_{t\in E_k} \gamma_{T_{k-1}}(t) \geq w$, we also discard all times $t\in E_k$ with $t>t_{max,k}$, where $t_{max,k}\in E_k$ is the first time for which $\sum_{t\in E_k,t\leq t_{max,k}}\gamma_{T_{k-1}}(t) \geq w$.
Up to renaming the remaining times in the sequence, we now consider that the epochs containing remaining times correspond to $E_k=(t_{k-1},t_k]$ for $k\in[K]$ where $t_0\leq t_1\leq \cdots\leq t_K$. The remaining sequence therefore has horizon $t_K\leq T$ (within the formal proof this is denoted by $A$). We can in fact check that these times can be deleted online, and as a result, the corresponding subsequence of $(x_t)_{t\in[T]}$ is still $\sigma$-smooth.

Clearly, we have
\begin{equation}\label{eq:initial_bound_sketch}
    \abs{\set{k\in[K]: \sum_{t\in E_k}\gamma_t \geq w}} \leq \frac{1}{w} \sum_{t=1}^{t_K} \gamma_t ,
\end{equation}
where we used the shorthand $\gamma_t$ to denote $\gamma_{T_{k-1}}(t)$ where $t\in E_k$. Therefore, in the rest of the proof, we aim to bound the right-hand side $\sum_{t=1}^{t_K} \gamma_t$.

To each time $t\in[t_K]$ we associate a function $g_t$ as follows. By construction, we recall that $\gamma_t\geq q$. Hence, by definition of $\gamma_t$, we can fix $f_t,h_t\in\Fcal$ such that
\begin{equation*}
    \Pbb_{x\sim \mu_t}(f_t(x)\neq h_t(x)) \geq \frac{\gamma_t}{2},
\end{equation*}
and such that $f_t(x_s)=h_t(x_s)$ for all times $s\in[t_{k-1}]$ from previous epochs. We then pose $g_t:x\mapsto \1[f_t(x)\neq h_t(x)]$. In particular,
\begin{equation*}
    \sum_{t=1}^{t_K} \Ebb[g_t(x_t)\mid\Hcal_t] \geq \frac{1}{2}\sum_{t=1}^{t_K} \gamma_t.
\end{equation*}
We next upper bound on the right-hand side. To do so, we use the decoupling result \cref{lemma:decoupling_stronger} (akin to \cite[Lemma 3]{block2024performance}), which gives
\begin{equation}\label{eq:upper_lower_bound_sketch}
    \frac{1}{2}\sum_{t=1}^{t_K} \gamma_t \leq \sum_{t=1}^{t_K} \Ebb[g_t(x_t)\mid\Hcal_t,g_t] \lesssim \sqrt{\frac{t_K\ln t_K}{\sigma} \paren{\ln t_K + \sum_{t=1}^{t_K} \frac{1}{t} \sum_{s=1}^{t-1} \Ebb[g_t(x_s')\mid\Hcal_t,g_t]  }},
\end{equation}
where $(x_t')_{t\leq t_K}$ is a tangent sequence to $(x_t)_{t\leq t_K}$. We next fix $t\in E_k$ and bound\begin{align*}
    \sum_{s=1}^{t-1} \Ebb[g_t(x_s')\mid\Hcal_t,g_t] \leq \sum_{s=1}^{t_{k-1}} \Ebb[g_t(x_s') \mid\Hcal_t,g_t] + \sum_{s=t_{k-1}+1}^{t-1} \gamma_t \leq \Ebb_{x_1',\ldots,x_{t_K}'}\sqb{\sum_{s=1}^{t_{k-1}} g_t(x_s') - 2 g_t(x_s) } + w.
\end{align*}
In the last inequality, we used the fact that by construction, $t_k$ is the first index for which $\sum_{t\in E_k,t\leq T_k}\gamma_t \geq w$, as well as the fact that for all $s\leq t_{k-1}$, $g_t(x_s)=f_t(x_s)-h_t(x_s)=0$. An analog term to the right-hand side expectation was bounded in \cite[Theorem 2]{block2024performance} in terms of the Wills functional of the function class $\Fcal$. For our purposes, we need to use a high-probability variant \cref{lemma:block_whp}. After computations, we obtain with high probability,
\begin{equation*}
    \sum_{s=1}^{t-1} \Ebb[g_t(x_s')\mid\Hcal_t,g_t] \lesssim \ln\Ebb_\mu\sqb{W_{4T\ln T/\sigma} \paren{ \Fcal}}  + \ln T +w \lesssim d\ln\frac{T}{\sigma} + w \lesssim w.
\end{equation*}
We plug this into \cref{eq:upper_lower_bound_sketch} and using the fact that $\ln t_K\leq \ln T$ and $t_K \leq \frac{1}{q}\sum_{t=1}^{t_K}\gamma_t$ by construction, which gives
\begin{equation*}
    \sum_{t=1}^{t_K}\gamma_t \lesssim \frac{\ln^2 T}{q\sigma} \cdot w.
\end{equation*}
Together with \cref{eq:initial_bound_sketch} this proves the desired bound.

\vspace{2mm}

We now formally prove \cref{prop:simplified}. As a remark, how the epochs are constructed is very flexible: we considered in \cref{prop:simplified} the fixed schedule of epochs for \textsc{Cover} ($(T_k)_{k\in[K]}$) but randomized epochs are also possible, which may be useful for improved regret bounds in the regression case. The proof is essentially a simplification from its counterpart \cref{lemma:main_bound_modified}, hence we will only highlight the main differences. We start by proving \cref{prop:simplified}. In this result, how the epochs are constructed is very flexible: \cref{prop:simplified} considered fixed schedule for \textsc{Cover} but randomized epochs are also possible, which may be useful for improved regret bounds in the regression case. We prove the corresponding generalization below.

\begin{proposition}\label{prop:simplified_generalized}
    Let $T\geq 2$ and $\Fcal:\Xcal\to\{0, 1\}$ be a function class with VC dimension $d$. 
    
    Consider any online mechanism to construct epochs $(T_{k-1},T_k]$ for $k\in[K]$. That is, let $(T_k)_{k\geq 0}$ be random times such that (1) $T_0=0$, (2) for all $k\geq 1$, $T_k\mid \{T_{k-1},T_{k-1}<T\}$ is a stopping time adapted to the filtration $(\Hcal_t)_{t\geq T_{k-1}}$, and (3) for all $k\geq 1$ almost surely, $T_{k-1}<T_k\leq T$ conditionally on $T_{k-1}<T$. Let $K\leq T$ denote the first index such that $T_K=T$.

    Fix any parameters $q,\delta\in (0,1]$ and denote $w(T,\delta):=d\ln\paren{\frac{T}{\sigma}\ln\frac{1}{\delta}} + \ln\frac{T}{\delta}+2$. Then, with probability at least $1-\delta$,
    \begin{equation*}
        \abs{ \set{k\in[K]: \sum_{t=T_{k-1}+1}^{T_k} \gamma_{T_{k-1}}(t) \cdot \1[\gamma_{T_{k-1}}(t) \geq q] \geq w(T,\delta)  }} \leq C \frac{\ln^2 T}{q\sigma},
    \end{equation*}
    for some universal constant $C\geq 1$. For a bound in expectation we can simply take $w(T):=d\ln\frac{T}{\sigma} +2$.
\end{proposition}

\begin{proof}[of \cref{prop:simplified_generalized}]
    \cref{lemma:main_bound_modified} essentially proves this result. The main difference is that in \cref{lemma:main_bound_modified} the proof was adapted to the specific schedule of the depths-$p$ epochs $(T_{k-1}^{(p)},T_k^{(p)}]$ for $k\in[N_p]$ for some $p\in[P]$. Within \cref{prop:simplified}, because the epochs are also constructed online, we can replicate the same proof arguments with the online epochs $(T_{k-1},T_k]$ for $k\in[K]$.

    Fix $w\geq 2$. We construct the equivalent alternative smooth process $(z_a)_a$ together with probabilities $(\gamma_a)_a$ as follows. On each epoch $k\in[K]$, we enumerate
    \begin{equation*}
        \set{t\in(T_{k-1},T_k]:\gamma_{T_{k-1}}(t) \geq q} =:\{t_1^{(k)}<\ldots<t_{b_k}^{(k)} \}.
    \end{equation*}
    Using the same notations as in the proof of \cref{lemma:main_bound_modified}, we denote $\gamma_l^{(k)}:=\gamma_{T_{k-1}}(t_l^{(k)})$ for all $l\in[b_k]$. From now the construction of the alterative smooth process is identical. The length of the sequence is now $A=a_{K}$.

    We now construct the functions $g_a$ for $a\in[T]$. As in the original proof we let $g_a=0$ for $a>A$. For $a\leq A$, letting $t_l^{(k)}$ be the time used to construct $z_a=x_{t_l^{(k)}}$, we let $f_l^{(k)},h_l^{(k)}$ such that
    \begin{equation*}
        \Pbb\paren{ f_l^{(k)}(x_{t_l^{(k)}}) \neq h_l^{(k)}(x_{t_l^{(k)}}) \mid \Hcal_{t_l^{(k)}-1}} \geq (1-\zeta)\gamma_a,
    \end{equation*}
    for some fixed value $\zeta>0$ then pose $g_a=\1[f_l^{(k)}\neq h_l^{(k)}]$. Another difference with the proof of \cref{lemma:main_bound_modified} is that we essentially have $\epsilon=0$-covers, which significantly simplifies the analysis. All the rest of the proof holds by using $\tilde\Fcal:=\{\1[f\neq g]:f,g\in\Fcal\}$ instead of $\Fcal_p$. Altogether, we obtain that with probability at least $1-\delta$,
    \begin{align*}
        \sum_{a=1}^A \gamma_a &\lesssim \frac{\ln^2 T}{q\sigma} \paren{\ln\Ebb_\mu\sqb{W_{8T\ln (\frac{T}{\delta})/\sigma} \paren{ \tilde \Fcal}} + \ln \frac{T}{\delta}+w}\\
        &\lesssim \frac{\ln^2 T}{q\sigma} \paren{ d\ln\paren{\frac{T}{\sigma}\ln\frac{1}{\delta}}  + \ln \frac{T}{\delta}+w}.
    \end{align*}
    where in the last inequality we use the fact that $\tilde\Fcal$ has VC dimension at most $2d$ and \cref{prop:bounding_Will_functional} to bound the Wills functional. Furthering the bounds with the same arguments as in the proof of \cref{lemma:main_bound_modified} and letting $w=w(T,\delta) =  d\ln\paren{\frac{T}{\sigma}\ln\frac{1}{\delta}}  + \ln \frac{T}{\delta}+2\geq 2$ ends the proof.

    For the bound in expectation, it suffices to take $w=w(T)\geq 2$ and use the high probability bound with $\delta=1/T$, which implies
    \begin{equation*}
        \Ebb \abs{ \set{k\in[K]: \sum_{t=T_{k-1}+1}^{T_k} \gamma_{T_{k-1}}(t) \cdot \1[\gamma_{T_{k-1}}(t) \geq q] \geq w(T,\delta)  }} \leq \delta T + C\frac{\ln^2 T}{q\sigma} \lesssim \frac{\ln^2 T}{q\sigma}.
    \end{equation*}
\end{proof}

We are now ready to prove the main regret bound for \textsc{Cover}, which is also essentially the same as for its counterpart within \cref{sec:regret_bound}.

\vspace{3mm}

\begin{proof}[of \cref{thm:regret_simple_algo}]
    Again, this is a simplified version of the proof of \cref{thm:main_thm}. Fix $f^\star\in\Fcal$. Instead of using \cref{lemma:regret_exponentially_weighted}, we can simply use the equivalent classical regret bound for the Hedge algorithm. Taking the union bound over all runs of Hedge on each epoch $k\in[K]$ and assuming that $K\leq T$, the regret decomposition \cref{eq:decomposed_regret} simply becomes with probability at least $1-\delta T$,
    \begin{align*}
        \sum_{t=1}^T \ell_t(\hat y_t) - \ell_t(f^\star(x_t)) &\lesssim \sum_{k=1}^K \sqrt{(T_k-T_{k-1}) d\ln T} + K\ln\frac{T}{\delta} + \sum_{k=1}^K\sum_{t=T_{k-1}+1}^{T_k} \ell_t\paren{f_{k,S}(x_t)} - \ell_t(f^\star(x_t))\\
        &\lesssim \sqrt{K d T\ln T} + K\ln\frac{T}{\delta} + \sum_{k=1}^K\sum_{t=T_{k-1}+1}^{T_k} \ell_t\paren{f_{k,S}(x_t)} - \ell_t(f^\star(x_t)).
    \end{align*}
    where we denoted by $f_{k,S}$ the function from the cover constructed at the beginning of epoch $(T_{k-1},T_k]$ that had the same values as $f^\star$ on prior epoch queries (see line 3 of \cref{alg:simplified_algo}). In the last inequality we use Jensen's inequality. The exact same arguments as for bounding $\Lambda_k^{(p)}$ in \cref{lemma:bounding_deltas} imply that with probability at least $1-\delta$,
    \begin{equation*}
        \sum_{t=T_{k-1}+1}^{T_k} \ell_t\paren{f_{k,S}(x_t)} - \ell_t(f^\star(x_t)) \leq 2 \sum_{t=T_{k-1}+1}^{T_k} \gamma_{T_{k-1}}(t) + 3\ln\frac{T}{\delta},\quad k\in[K].
    \end{equation*}

    From there the rest of the proof is essentially the same as for \cref{thm:main_thm}. As in \cref{eq:bound_for_Lambda}, \cref{prop:simplified} together with the bound above implies that with probability at least $1-\delta$,
    \begin{equation*}
        \abs{\set{ k\in[K]: \sum_{t=T_{k-1}+1}^{T_k} \ell_t\paren{f_{k,S}(x_t)} - \ell_t(f^\star(x_t)) \geq 5q(T_{k-1}-T_k)}} \leq C\frac{\ln^2 T}{q\sigma}
    \end{equation*}
    whenever $q\geq C'\frac{w(T,\delta)}{L-1}$ where $C,C'>0$ are some universal constants, $w(T,\delta)$ is as defined in \cref{prop:simplified}, and $L=\max_{k\in[K]} T_k-T_{k-1}$ is the minimum length of a period. Note that because $K\leq T$, we have $L=\ceil{T/K}$. From there, as when bounding the terms $\Lambda_k^{(p)}$, we define
    \begin{equation*}
        q_0:=C_1 \cdot \max\paren{\frac{w(T,\delta)}{L-1},\frac{\ln^3T}{\sigma K}},
    \end{equation*}
    where $C_1$ is a constant that may depend on the constants $C,C'$ from above. Then, we obtain that on an event of probability at least $1-c\delta\ln T$ for some constant $c>0$, we have
    \begin{equation*}
        \sum_{k=1}^K\sum_{t=T_{k-1}+1}^{T_k} \ell_t\paren{f_{k,S}(x_t)} - \ell_t(f^\star(x_t)) \leq (1+c)q_0 T.
    \end{equation*}
    This is the equivalent of \cref{eq:bound_Lambda_final}. Altogether, we obtain that with probability at least $1-\delta$,
    \begin{align*}
        \sum_{t=1}^T \ell_t(\hat y_t) - \ell_t(f^\star(x_t)) \lesssim \sqrt{K d T\ln T} + K w(T,\delta) + \frac{\ln^3 T}{\sigma K}T.
    \end{align*}
    We then take the value $K=\lfloor\ln T \cdot (T/d)^{1/3} \sigma^{-2/3} \rfloor$. Note that of $K\geq T$, the regret bound from \cref{thm:regret_simple_algo} is immediate. This is also the case if $d/\sigma\gtrsim T$. Hence, from now we suppose that $K\leq T$ and $d/\sigma \leq T$. Then, we obtain with probability at least $1-\delta$,
    \begin{equation*}
         \sum_{t=1}^T \ell_t(\hat y_t) - \ell_t(f^\star(x_t)) \lesssim \ln^2 T  \paren{\frac{d T^2}{\sigma}}^{1/3} + K w(T,\delta).
    \end{equation*}
    We then turn this oblivious regret guarantee into an adaptive regret guarantee using the same arguments as for \cref{thm:main_thm} in \cref{subsec:oblivious_to_adaptive}. Altogether, we obtain that with probability at least $1-\delta$,
    \begin{equation*}   
        \sum_{t=1}^T \ell_t(\hat y_t) - \inf_{f\in\Fcal}\sum_{t=1}^T \ell_t(f(x_t)) \lesssim \ln^2 T  \paren{\frac{d T^2}{\sigma}}^{1/3} + K d\ln\frac{T}{\delta} \lesssim \ln T  \paren{\frac{d T^2}{\sigma}}^{1/3} \ln\frac{T}{\delta}.
    \end{equation*}
    In the last inequality we used $d/\sigma\leq T$. This ends the proof of the theorem.
\end{proof}

\section{Technical lemmas}
\label{sec:technical_lemmas}

\subsection{Covering bounds}

We first state the classical Sauer-Shelah's lemma \cite{sauer1972density,shelah1972combinatorial} which bounds the size of the projection of a function class with finite VC dimension onto a set $\{x_1,\ldots,x_n\}\subset \Xcal$.

\begin{lemma}[Sauer-Shelah's lemma]\label{lemma:sauer_lemma}
    Let $\Fcal$ be a function class from $\Xcal$ to $\{0, 1\}$ of VC dimension $d$. Then, for any $x_1,\ldots,x_n\in\Xcal$,
    \begin{equation*}
        \abs{\set{(f(x_i))_{i\in[n]},f\in\Fcal}} \leq \sum_{i=0}^d \binom{n}{i}.
    \end{equation*}
    In particular, the above quantity is bounded by $2n^d$ and if $n\geq d$ it is bounded by $\paren{\frac{2en}{d}}^d$.
\end{lemma}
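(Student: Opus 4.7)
The plan is to prove the bound $|\{(f(x_i))_{i\in[n]}:f\in\Fcal\}|\leq \sum_{i=0}^d\binom{n}{i}$ by the classical double induction on $n+d$, and then derive the two simpler closed-form bounds from it. Since the quantity we want to bound only depends on the restriction of $\Fcal$ to $\{x_1,\ldots,x_n\}$, I would first reduce to the purely combinatorial setting where $\Fcal$ is a subset of $\{0,1\}^{[n]}$ whose VC dimension is at most $d$, and show $|\Fcal|\leq\Phi_d(n):=\sum_{i=0}^d\binom{n}{i}$.

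The inductive step is where the main idea lies. Fix $n$ and $d$, and assume the result for all smaller values of $n$ (with any $d$). Consider the two families
\begin{equation*}
    \Fcal' = \set{(f(x_1),\ldots,f(x_{n-1})):f\in\Fcal},
\end{equation*}
\begin{equation*}
    \Fcal'' = \set{(f(x_1),\ldots,f(x_{n-1})): \text{both extensions by $0$ and by $1$ at $x_n$ lie in }\Fcal}.
\end{equation*}
A counting argument gives $|\Fcal|=|\Fcal'|+|\Fcal''|$, since each restriction in $\Fcal'\setminus\Fcal''$ lifts to exactly one element of $\Fcal$, while each element of $\Fcal''$ lifts to two. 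Now $\Fcal'$ is a family on $n-1$ points with VC dimension at most $d$, so the inductive hypothesis yields $|\Fcal'|\leq \Phi_d(n-1)$. The key observation is that $\Fcal''$ has VC dimension at most $d-1$: if $\Fcal''$ shattered some set $S\subseteq\{x_1,\ldots,x_{n-1}\}$ of size $d$, then by construction $\Fcal$ would shatter $S\cup\{x_n\}$, contradicting $\mathrm{VC}(\Fcal)\leq d$. Hence $|\Fcal''|\leq \Phi_{d-1}(n-1)$, and Pascal's identity $\binom{n-1}{i}+\binom{n-1}{i-1}=\binom{n}{i}$ gives $\Phi_d(n-1)+\Phi_{d-1}(n-1)=\Phi_d(n)$, completing the induction. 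The base cases $n=0$ or $d=0$ are immediate since then $|\Fcal|\leq 1=\Phi_d(0)$.

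For the two closed-form bounds, the estimate $\Phi_d(n)\leq 2n^d$ follows by bounding each $\binom{n}{i}\leq n^i/i!\leq n^d/i!$ for $i\leq d$ and summing $\sum_{i\geq 0}1/i!=e<2$ after splitting off the correct factor (with an easy check for small $n$). The bound $\Phi_d(n)\leq (2en/d)^d$ for $n\geq d$ is the standard binomial estimate: multiply both sides of $\Phi_d(n)=\sum_{i=0}^d\binom{n}{i}$ by $(d/n)^d$, use $(d/n)^d\leq (d/n)^i$ for $i\leq d$ and $n\geq d$ to get $(d/n)^d\Phi_d(n)\leq \sum_{i=0}^d\binom{n}{i}(d/n)^i\leq (1+d/n)^n\leq e^d$, and rearrange (the factor of $2$ is slack absorbed into the base). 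I do not expect any real obstacle here: the delicate point is the inductive step, and specifically the identification of the right subfamily $\Fcal''$ and the verification that its VC dimension drops by one, whereas the closed-form derivations are routine inequalities.
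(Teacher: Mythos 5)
The paper does not prove this lemma at all---it is quoted as the classical Sauer--Shelah result with citations---so there is no in-paper argument to compare against; your double induction with the families $\Fcal'$ and $\Fcal''$ is exactly the standard textbook proof, and its core is correct: the counting identity $|\Fcal|=|\Fcal'|+|\Fcal''|$, the drop in VC dimension for $\Fcal''$, and Pascal's identity all hold as you state them, as does your derivation of the $(2en/d)^d$ bound (you in fact get the stronger $(en/d)^d$ and the extra factor $2$ is slack).

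The one step that does not go through as written is the $2n^d$ bound: you invoke $\sum_{i\geq 0}1/i!=e<2$, but $e\approx 2.72>2$, so bounding $\binom{n}{i}\leq n^d/i!$ and summing only yields $\Phi_d(n)\leq e\,n^d$, not $2n^d$. The fix is a one-liner with a different grouping: for $n\geq 2$ and $d\geq 1$,
\begin{equation*}
    \sum_{i=0}^d \binom{n}{i} \leq \sum_{i=0}^d \frac{n^i}{i!} \leq \sum_{i=0}^d n^i \leq n^d\sum_{j\geq 0} n^{-j} \leq 2n^d,
\end{equation*}
(one can keep the $1/i!$ factors, but they are not needed), while $n=1$ gives $\Phi_d(1)=2\leq 2\cdot 1^d$ and $d=0$ gives $\Phi_0(n)=1\leq 2$ directly. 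With that correction your proof is complete; the main combinatorial argument needs no changes.
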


Generalizing Sauer-Shelah's lemma, it is known that the fat-shattering dimension can be used to bound the size of empirical covers regression function classes.

\begin{theorem}[Theorem 4.4 from \cite{rudelson2006combinatorics}]
\label{thm:fat_shattering_bound}
    Let $\Fcal:\Xcal\to [0,1]$ be a function class and let $S\subset \Xcal$ be a finite set. Then, for any $\alpha\in(0,1)$ there are constants $c,C>0$ such that
    \begin{equation*}
        \ln \Ncal(\Fcal;\epsilon,S) \lesssim \mathsf{fat}_\Fcal(c\alpha\epsilon) \ln^{1+\alpha}\paren{\frac{C|S|}{\mathsf{fat}_\Fcal(c\epsilon) \epsilon}}.
    \end{equation*}
\end{theorem}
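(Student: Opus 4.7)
The plan is to follow the combinatorial route that reduces the covering-number bound to a scale-sensitive analogue of the Sauer-Shelah lemma. Throughout I would work with the empirical $\ell_\infty$ metric on the finite set $S$, identifying functions with their restrictions $\Fcal|_S\subset[0,1]^{|S|}$, and bound the packing number $\Pcal(\Fcal;\epsilon,S)$, which differs from $\Ncal(\Fcal;\epsilon,S)$ by a factor of at most two.

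First, I would discretize: fix $\delta = c\epsilon$ for a small constant $c$ and round every function value to the nearest multiple of $\delta$, obtaining a class $\tilde\Fcal$ taking values in the finite grid $\{0,\delta,2\delta,\ldots\}$ of size roughly $1/\delta$. A standard comparison shows that an $\epsilon$-cover of $\tilde\Fcal$ yields a $(1+o(1))\epsilon$-cover of $\Fcal$, and that the fat-shattering dimension of the rescaled class $\tilde\Fcal/\delta$ at unit scale is controlled by $\mathsf{fat}_\Fcal(c'\epsilon)$ for a suitable constant $c'$. Next, I would invoke the Alon--Ben-David--Cesa-Bianchi--Haussler scale-sensitive Sauer-Shelah lemma for integer-valued classes: for a class with values in $\{0,1,\ldots,k\}$ and fat-shattering dimension $d$ at unit scale, the number of distinct restrictions to $n$ points is at most $(nk)^{O(d\log k)}$. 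Applying this with $k\asymp 1/\epsilon$ and $n=|S|$ yields a preliminary bound of the form
\begin{equation*}
\ln\Ncal(\Fcal;\epsilon,S)\;\lesssim\;\mathsf{fat}_\Fcal(c\epsilon)\;\log^2\!\paren{\frac{|S|}{\epsilon}},
\end{equation*}
which already has the right shape but loses a logarithmic factor compared to the target.

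The main obstacle is sharpening this $\log^2$ factor to $\log^{1+\alpha}$ for an arbitrarily small $\alpha>0$, which is the genuine content of the Rudelson--Vershynin bound. I would replace the direct counting by a dyadic chaining along scales $2^{-j}\epsilon$, controlling at each level the increment of an $\ell_\infty$-net by a dimension-free empirical-process inequality applied to a random coordinate projection of $S$. The key step is a Bourgain--Tzafriri-type selection of a subset of coordinates of size roughly $\mathsf{fat}_\Fcal(c\alpha\epsilon)\log^{\alpha}|S|$ on which a dual-Sudakov estimate controls the packing, and then summing the resulting geometric series so that only a single $\log^{1+\alpha}$ survives. The delicate point, and what I expect to be the hardest part, is certifying that the combinatorial dimension does not blow up under these random subsamplings, so that one can freely trade a small exponent $\alpha$ in the scale argument for a matching saving in the exponent of the logarithm; this is exactly the dimension-free comparison at the heart of Rudelson--Vershynin's argument and where the bulk of the technical work lies.
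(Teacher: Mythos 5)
This statement is not proved in the paper at all: it is imported verbatim as Theorem 4.4 of \cite{rudelson2006combinatorics}, so the ``paper's proof'' is simply the citation. Your first stage is fine and standard: discretizing to a grid of mesh $c\epsilon$ and invoking the Alon--Ben-David--Cesa-Bianchi--Haussler scale-sensitive Sauer--Shelah lemma does give $\ln\Ncal(\Fcal;\epsilon,S)\lesssim \mathsf{fat}_\Fcal(c\epsilon)\ln^2(|S|/\epsilon)$, and this weaker bound would in fact already suffice for most of the uses the paper makes of the theorem (it would only worsen some logarithmic factors in \cref{thm:oblivious_adversary}).

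The gap is in the second stage, which is precisely where the content of the theorem lies. You name the ingredients of the Rudelson--Vershynin argument --- chaining across dyadic scales, random coordinate selection in the spirit of Bourgain--Tzafriri, a dual-Sudakov estimate, and a comparison showing the combinatorial dimension is stable under these projections --- but you do not carry out any of these steps; indeed you explicitly flag the crucial dimension-free comparison as the part you expect to be hardest and leave it unestablished. Asserting that ``summing the resulting geometric series'' leaves a single $\log^{1+\alpha}$ is exactly the conclusion to be proved, not an argument for it: the naive chaining sum of per-scale entropies reproduces the extra logarithm, and removing it is the delicate point of \cite{rudelson2006combinatorics}. So as a blind proof the proposal establishes only the $\log^2$ version and otherwise reduces to citing the very result in question, which is also what the paper does; if you intend the sharper statement to be self-contained, the entire random-projection/entropy-comparison machinery still has to be supplied.
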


\subsection{Concentration inequalities}

We first state Freedman's inequality \cite{freedman1975tail} which gives tail probability bounds for martingales. The following statement is for instance taken from \cite[Theorem 1]{beygelzimer2011contextual} or \cite[Lemma 9]{agarwal2014taming}.
    
    \begin{lemma}[Freedman's inequality]\label{lemma:freedman_inequality}
        Let $(Z_t)_{t\in T}$ be a real-valued martingale difference sequence adapted to filtration $(\Fcal_t)_t$. If $|Z_t|\leq R$ almost surely, then for any $\eta\in(0,1/R)$ it holds that with probability at least $1-\delta$,
        \begin{equation*}
            \sum_{t=1}^T Z_t \leq \eta \sum_{t=1}^T \Ebb[Z_t^2\mid\Fcal_{t-1}] + \frac{\ln1/\delta}{\eta}.
        \end{equation*}
    \end{lemma}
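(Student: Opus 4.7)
The proof follows the standard exponential supermartingale approach for Freedman-type bounds. The plan is to construct the process $M_t := \exp\bigl(\eta \sum_{s=1}^t Z_s - \eta^2 \sum_{s=1}^t \Ebb[Z_s^2 \mid \Fcal_{s-1}]\bigr)$ with $M_0 = 1$, show it is a supermartingale, and then apply Markov's inequality.

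The first step is to verify the one-step inequality $\Ebb[e^{\eta Z_t} \mid \Fcal_{t-1}] \leq \exp\bigl(\eta^2 \Ebb[Z_t^2 \mid \Fcal_{t-1}]\bigr)$ almost surely, which immediately implies that $(M_t)_{t\geq 0}$ is a supermartingale for $(\Fcal_t)$. For this I will use the elementary inequality $e^x \leq 1 + x + x^2$ valid for $|x| \leq 1$, which applies here because $|\eta Z_t| \leq \eta R \leq 1$ by the hypothesis $\eta \in (0,1/R)$ and $|Z_t| \leq R$ a.s. Taking conditional expectations and invoking the martingale-difference property $\Ebb[Z_t \mid \Fcal_{t-1}] = 0$ yields
\begin{equation*}
\Ebb[e^{\eta Z_t} \mid \Fcal_{t-1}] \leq 1 + \eta^2\, \Ebb[Z_t^2 \mid \Fcal_{t-1}] \leq \exp\bigl(\eta^2\, \Ebb[Z_t^2 \mid \Fcal_{t-1}]\bigr),
\end{equation*}
where the last step uses the universal bound $1 + y \leq e^y$.

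The second step is to turn the supermartingale property into a tail bound. Since $(M_t)$ is a nonnegative supermartingale with $M_0 = 1$, we have $\Ebb[M_T] \leq 1$. Markov's inequality then yields $\Pbb[M_T \geq 1/\delta] \leq \delta$, so on the complementary event of probability at least $1 - \delta$,
\begin{equation*}
\eta \sum_{t=1}^T Z_t - \eta^2 \sum_{t=1}^T \Ebb[Z_t^2 \mid \Fcal_{t-1}] \leq \ln(1/\delta).
\end{equation*}
Dividing through by $\eta > 0$ produces exactly the stated inequality.

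No step is substantively hard; the only item worth double-checking is that $e^x \leq 1 + x + x^2$ really does hold over the full range $|x| \leq 1$ (one checks that $\varphi(x) := 1 + x + x^2 - e^x$ satisfies $\varphi(0) = \varphi'(0) = 0$ and analyzes the sign of $\varphi''(x) = 2 - e^x$ together with the boundary values $\varphi(\pm 1) \geq 0$), and that the hypothesis $\eta \in (0, 1/R)$ combined with $|Z_t| \leq R$ a.s. indeed places $\eta Z_t$ in this range. All other manipulations are routine.
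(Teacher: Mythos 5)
Your proof is correct. The paper does not prove this lemma itself---it cites it from \cite{beygelzimer2011contextual} and \cite{agarwal2014taming}---and your exponential-supermartingale argument (the bound $e^x \leq 1+x+x^2$ for $|x|\leq 1$ applied to $\eta Z_t$, the resulting supermartingale $M_t$, and Markov's inequality) is exactly the standard derivation used in those references, so there is nothing to add.
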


\subsection{Technical tools for smoothed analysis}

For our purposes, we need strengthened versions of tools that were used in prior works on smoothed online learning. We start by giving a strengthened version of \cite[Lemma 3]{block2024performance}.

\begin{lemma}\label{lemma:decoupling_stronger}
    Let $(x_t)\subset \Xcal$ be a sequence of random variables and let $g_t:\Xcal\to[0,1]$ be a sequence of random functions adapted to a filtration $(\Hcal_t)_{t\geq 0}$ such that $g_t$ is $\Hcal_{t-1}$-measurable and $x_t\mid (\Hcal_{t-1},g_t)$ is $\sigma$-smooth with respect to some measure $\mu$. Let $x_s'$ be a tangent sequence. Finally, let $\tau$ be a stopping time for the filtration $(\Hcal_t)_{t\geq 0}$. Then,
    \begin{equation*}
        \sum_{t=1}^{\tau} \Ebb[ g_t(x_t)\mid \Hcal_{t-1},g_t] \leq 3 \sqrt{\frac{\tau (1+2\ln\tau)}{\sigma}\paren{1+\ln\tau + \sum_{t=1}^{\tau}  \frac{1}{t}\sum_{s=1}^{t-1} \Ebb[g_t(x_s')\mid \Hcal_{t-1},g_t] } }.
    \end{equation*}
\end{lemma}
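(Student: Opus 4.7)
I would prove \cref{lemma:decoupling_stronger} as a stopping-time extension of \cite[Lemma 3]{block2024performance}, using three ingredients: a pointwise smoothness bound, a Cauchy--Schwarz reduction across time, and a decoupling argument that compares $\sum_t h_t$ to the self-normalized tangent sum $B_\tau := \sum_t (1/t)\sum_{s<t} b_{t,s}$, where I write $a_t := \Ebb[g_t(x_t)\mid\Hcal_{t-1},g_t]$, $b_{t,s} := \Ebb[g_t(x_s')\mid\Hcal_{t-1},g_t] = \int g_t\,d\mu_s$, and $h_t := \int g_t\,d\mu$.

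\emph{Pointwise smoothness.} For each $t$, $a_t = \int g_t (d\mu_t/d\mu)\, d\mu$. Splitting the integrand as $\sqrt{g_t}\cdot \sqrt{g_t}(d\mu_t/d\mu)$ and applying Cauchy--Schwarz with $g_t \le 1$ and $d\mu_t/d\mu \le 1/\sigma$ gives
\[
a_t \le \sqrt{h_t}\cdot\sqrt{\int g_t (d\mu_t/d\mu)^2\,d\mu} \le \sqrt{h_t\cdot a_t/\sigma},
\]
where I used $\int g_t(d\mu_t/d\mu)^2 d\mu = \int g_t(d\mu_t/d\mu)\,d\mu_t \le a_t/\sigma$. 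Hence $a_t^2 \le h_t/\sigma$. A second Cauchy--Schwarz over time then yields $\sum_{t\le\tau} a_t \le \sqrt{\tau\sum_t a_t^2} \le \sqrt{(\tau/\sigma)\sum_t h_t}$, so it suffices to bound $\sum_{t\le\tau} h_t$ by $9(1+2\ln\tau)(1+\ln\tau+B_\tau)$.

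\emph{Reduction to the tangent sum.} Introduce $\bar p_t := (1/t)\sum_{s<t} d\mu_s/d\mu \in [0,1/\sigma]$, so that $B_\tau = \sum_t \int g_t \bar p_t\,d\mu$. For each $t$, split $\Xcal$ at threshold $1/\ln\tau$: on $\set{\bar p_t\ge 1/\ln\tau}$, one has $\int g_t \mathbf{1}_{\bar p_t\ge 1/\ln\tau}\,d\mu \le \ln\tau\cdot \int g_t \bar p_t\,d\mu$, which sums to $\ln\tau \cdot B_\tau$. On the low-density complement, I would use that by smoothness any ``novel'' region (one on which all past $\mu_s$ place zero density) has $\mu$-mass at least $\sigma$, and such regions are disjoint across $t$, so their total $\mu$-mass is $\le 1$. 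Combining this with Freedman's inequality (\cref{lemma:freedman_inequality}) applied to the martingale $\sum_t (g_t(x_t)-a_t)$ stopped at $\tau$ (justified by optional stopping) converts the conditional expectations into actual sums. The $(1+2\ln\tau)$ factor outside the square root arises from a standard doubling trick across the dyadic scales $2^k \le \tau < 2^{k+1}$ combined with a union bound, while the $(1+\ln\tau)$ inside arises from the peeling over levels of $\bar p_t$.

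\emph{Main obstacle.} The crux is the reduction in the preceding paragraph. The naive Cauchy--Schwarz reduction via $\sum h_t$ is too loose when $\mu_t$ is supported away from $\set{g_t>0}$ (so $a_t=0$ while $h_t$ can be of order one), so the proof must keep track of the overlap between $\mu_t$ and past measures, not just of $h_t$. I expect the careful book-keeping of ``novel'' versus ``revisited'' regions under smoothness, together with applying Freedman's inequality at a stopping time rather than at a deterministic horizon, to be the most delicate step, and it is precisely where the extension over the fixed-horizon argument in \cite[Lemma 3]{block2024performance} needs work.
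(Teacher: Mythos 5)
There is a genuine gap, and it sits exactly where you flag it yourself: the opening Cauchy--Schwarz reduction to $\sum_{t\le\tau} h_t$ with $h_t=\int g_t\,d\mu$ cannot be closed. The quantity $h_t$ involves the base measure globally, while both the left-hand side $\sum_t a_t$ and the tangent sum $B_\tau=\sum_t \frac1t\sum_{s<t}\Ebb[g_t(x_s')\mid\Hcal_{t-1},g_t]$ only see the adversary's measures $\mu_s$. Concretely, take $g_t=\1_A$ for a fixed set $A$ with $\mu(A)=1/2$ and let every $\mu_s$ be $\mu$ conditioned on $\Xcal\setminus A$ (this is $\sigma$-smooth for $\sigma\le 1/2$): then $a_t=0$ and $B_\tau=0$, yet $\sum_{t\le\tau}h_t=\tau/2$, so the intermediate claim ``$\sum_{t\le\tau}h_t\lesssim(1+\ln\tau)(1+\ln\tau+B_\tau)$'' is false, and no downstream book-keeping can rescue a reduction that has already discarded the overlap information. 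Your ``main obstacle'' paragraph concedes this failure mode but does not supply the fix; the ``novel region'' argument you sketch (each novel region has $\mu$-mass $\ge\sigma$, regions disjoint, hence at most $1/\sigma$ of them) does not control $\int g_t\1_{\bar p_t<1/\ln\tau}\,d\mu$ either, since past densities can be small but nonzero, the relevant regions need not be disjoint across $t$, and in any case bounding $h_t$ is the wrong target. A further misdirection: the statement to prove involves only the conditional expectations $\Ebb[g_t(x_t)\mid\Hcal_{t-1},g_t]$, not the realized values $g_t(x_t)$, so Freedman's inequality and optional stopping play no role; the logarithmic factors do not come from peeling or a doubling trick.

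The paper's proof keeps the overlap structure intact by never integrating against $\mu$ alone: writing $p_t$ for the conditional law of $x_t$, it expresses $\sum_{t\le\tau}\Ebb[g_t(x_t)\mid\Hcal_{t-1},g_t]$ as $\Ebb_{Z\sim\mu}\bigl[\sum_{t\le\tau}\tfrac{dp_t}{d\mu}(Z)\,g_t(Z)\bigr]$ and applies, pointwise in $Z$, the deterministic ``few surprises'' combinatorial lemma (\cref{lemma:few_surprises_lemma}, the strengthened variant of Lemma 2 of \cite{block2024performance}) to the density sequence $\sigma\tfrac{dp_t}{d\mu}(Z)$: except for at most $\epsilon\tau+\ln(\tau/\sigma)$ surprise times, each contributing at most $1/\sigma$ by smoothness, one has $\tfrac{dp_t}{d\mu}(Z)\le\tfrac{K}{t}\bigl(1+\sum_{s<t}\tfrac{dp_s}{d\mu}(Z)\bigr)$ with $K\approx\epsilon^{-1}\ln(\tau/\sigma)$; taking the expectation over $Z$ turns $\Ebb_Z[\tfrac{dp_s}{d\mu}(Z)g_t(Z)]$ into exactly the tangent terms $\Ebb[g_t(x_s')\mid\Hcal_{t-1},g_t]$, and optimizing $\epsilon$ yields the $1/\sqrt{\sigma}$ square-root bound. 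The stopping time is handled simply because $\{\tau\ge t\}$ is $\Hcal_{t-1}$-measurable and the argument is deterministic in $Z$, with no concentration inequality. Your proposal would need to be rebuilt around a pointwise-in-$Z$ comparison of $\tfrac{dp_t}{d\mu}$ with the running average of past densities (your $\bar p_t$), rather than around $\sum_t h_t$.
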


As an important remark, compared to \cite[Lemma 3]{block2024performance}, the bound from \cref{lemma:decoupling_stronger} has an improved dependency in $\sigma$. The bound is proportional $1/\sqrt \sigma$ instead of $1/\sigma$, which is needed to achieve the tight regret bounds from \cref{thm:main_thm}. Indeed, the lower bound from \cref{thm:lower_bound} also grows as $1/\sqrt \sigma$.

To prove \cref{lemma:decoupling_stronger} we first need to generalize \cite[Lemma 2]{block2024performance} as follows.
\begin{lemma}\label{lemma:few_surprises_lemma}
    Let $(a_t)_{t\in\Nbb}$ be a sequence of numbers in $[0,1]$ such that $a_0>0$. For $K\geq 1$ and $T\geq 1$, define
    \begin{equation*}
        B_T(a,K) := \set{t\in [T] : a_t\geq \frac{K}{t}\sum_{s=0}^T a_s}.
    \end{equation*}
    Then, for any $\epsilon\in(0,1]$, it holds that $|B_T(a,K)|\leq \epsilon T + \ln\frac{T}{a_0}$ for any $K \geq \frac{1}{\epsilon}\ln\frac{T}{a_0}$.
\end{lemma}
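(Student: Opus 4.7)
The argument rests on a single aggregation of the defining inequalities. Setting $S := \sum_{s=0}^T a_s \geq a_0 > 0$ and $B := B_T(a,K)$, the definition $a_t \geq KS/t$ for $t \in B$ gives
\begin{equation*}
    KS \sum_{t \in B} \frac{1}{t} \;\leq\; \sum_{t \in B} a_t \;\leq\; \sum_{s=1}^T a_s \;\leq\; S,
\end{equation*}
and dividing by $KS > 0$ yields the harmonic budget
\begin{equation*}
    \sum_{t \in B} \frac{1}{t} \;\leq\; \frac{1}{K}.
\end{equation*}

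Next, I convert this budget into a cardinality bound via the crude estimate $1/t \geq 1/T$ for each $t \in B \subseteq [T]$, which gives $|B| \leq T/K$. Plugging in the hypothesis $K \geq \epsilon^{-1}\ln(T/a_0)$ yields $|B| \leq \epsilon T / \ln(T/a_0)$.

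To finish, I split on the size of $\ln(T/a_0)$. In the main regime $\ln(T/a_0) \geq 1$ — which is automatic once $T \geq 3$, since $a_0 \leq 1$ — the right hand side is at most $\epsilon T \leq \epsilon T + \ln(T/a_0)$, as required. In the residual regime $\ln(T/a_0) < 1$, which forces $T \in \{1,2\}$ given $a_0 \leq 1$, the conclusion reduces to a short integer-valued verification. Combining $a_t \leq 1$ with $a_t \geq KS/t$ forces $t \geq Ka_0$ for each $t \in B$, and the arithmetic constraint $\sum_{t \in B} a_t \leq S - a_0 < S$ together with $K \geq 1$ rules out $|B| = T$ and pins $|B| \leq 1$. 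Whenever $|B| = 1$ is actually realizable, one moreover has $K \leq T/a_0$, so the hypothesis $K \geq \epsilon^{-1}\ln(T/a_0)$ forces $\epsilon \geq a_0 \ln(T/a_0)/T$; a direct monotonicity check on the function $a_0 \mapsto (a_0+1)\ln(T/a_0)$, whose minimum on $(0,1]$ is $2\ln 2 > 1$, then yields $\epsilon T + \ln(T/a_0) > 1 \geq |B|$.

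The main obstacle is purely cosmetic: the aggregation step immediately gives the near-optimal bound $|B| \leq T/K$, which already nearly matches the target; the only real work is the edge-case bookkeeping for $T \in \{1,2\}$, where the additive $\ln(T/a_0)$ term in the target bound is small, and where integrality of $|B|$ together with the interplay of $K \geq 1$ and $a_t \leq 1$ forces the conclusion cleanly.
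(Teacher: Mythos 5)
The statement you were handed contains a typo that your proof has, in effect, exploited: the threshold should involve the \emph{prefix} sum $\sum_{s=0}^{t-1} a_s$, not the global sum $\sum_{s=0}^{T} a_s$. This is unambiguous from how the lemma is applied in the proof of \cref{lemma:decoupling_stronger}, where $B_\tau(K)$ is defined by $\frac{dp_t}{d\mu}(Z) \geq \frac{K}{t}(1+\sum_{s<t}\frac{dp_s}{d\mu}(Z))$, i.e.\ $a_t \geq \frac{K}{t}\sum_{s=0}^{t-1}a_s$ after the substitution $a_t=\sigma\frac{dp_t}{d\mu}(Z)$, $a_0=\sigma$; and also from the paper's own proof, whose product formula $b_{t_i}=\frac{K}{t_i}\prod_{j<i}(1+\frac{K}{t_j})\,b_0$ can only arise from prefix sums. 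Your argument is internally correct for the literal global-sum statement (including the careful $T\in\{1,2\}$ bookkeeping), but that statement is strictly weaker and does not support the application.

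The gap is that your one-line aggregation collapses for the intended version. You sum $a_t\geq \frac{K}{t}S$ over $t\in B$ and cancel $S$ against $\sum_{t\in B}a_t\leq S$ to get the harmonic budget $\sum_{t\in B}1/t\leq 1/K$. With $P_{t-1}:=\sum_{s<t}a_s$ in place of $S$, the same step only yields $K a_0\sum_{t\in B}1/t \leq K\sum_{t\in B}P_{t-1}/t \leq \sum_{t\in B}a_t$, and since $P_{t-1}$ can be as small as $a_0$ this gives nothing of the right order. The two sets are genuinely different: if $a_t$ saturates the prefix-sum condition at a sparse set of times (so the prefix sums grow geometrically from $a_0$), the prefix-sum set has size of order $\frac{T}{K}\ln\frac{T}{a_0}$ while the global-sum set for the same sequence is empty. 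The intended mechanism is multiplicative: each $t\in B$ forces $P_t\geq (1+\frac{K}{t})P_{t-1}\geq(1+\frac{K}{T})P_{t-1}$, and since the relevant terms stay bounded by $1$ while starting from $a_0$, the number of such times is at most $\ln(T/a_0)/\ln(1+K/T)\leq(\frac{T}{K}+1)\ln\frac{T}{a_0}\leq \epsilon T+\ln\frac{T}{a_0}$ under the stated hypothesis on $K$. None of this appears in your proposal, so it does not establish the lemma the paper actually relies on.
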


\begin{proof}
    The proof is a simple adaptation from that of \cite[Lemma 2]{block2024performance}, we only detail the modifications. As in the original proof, we define a new sequence $(b_t)_{t\in\{0,\ldots,T\}}$ such that $b_0=a_0$ and for $t\in[T]$,
    \begin{equation*}
        b_t=\begin{cases}
            0 & t\notin B_T(a,K)\\
            \frac{K}{t}\sum_{s=0}^t b_s &t\in B_T(a,K).
        \end{cases}
    \end{equation*}
    Their arguments show that $b_t\in[0,1]$ for all $t\in[T]$ and $B_T(a,K)= B_T(b,K)$ hence it suffices to focus on the sequence $b$. We enumerate $B_T(b,K) = \{t_1<\ldots<t_i\}\subset[T]$. Their arguments show that
    \begin{equation*}
        1\geq b_{t_i} = \frac{K}{t_i}\cdot\prod_{j=1}^{i-1}\paren{1+\frac{K}{t_j}} b_0.
    \end{equation*}
    We recall that $b_0=a_0$. Following their arguments, we obtain
    \begin{equation*}
        |B_T(a,K)| = i \leq \frac{\ln\frac{T}{Ka_1}}{\ln\paren{1+\frac{K}{T}}} \leq \paren{ \frac{T}{K} + 1} \ln\frac{T}{Ka_1} \leq  \paren{ \frac{T}{K} + 1} \ln\frac{T}{a_1},
    \end{equation*}
    where in the second inequality we used $\ln(1+x) \geq \frac{x}{1+x}$ for all $x\geq 0$. This ends the proof.
\end{proof}

We are now ready to prove \cref{lemma:decoupling_stronger}.
The proof is essentially the same as \cite[Lemma 3]{block2024performance}, we give it for completeness.

\vspace{3mm}

\begin{proof}[of \cref{lemma:decoupling_stronger}]
    Using the same notations as in \cite{block2024performance}, let $p_t$ denote the law of $x_t$ conditioned on $\sigma(\Hcal_{t-1},g_t)$. By assumption, $\tau$ is a stopping, hence $\{\tau\geq t\}$ is $\Hcal_{t-1}$-measurable. Then, denoting by $Z\sim \mu$ a random variable independent from $(x_t,g_t)_{t\geq 0}$ we have
    \begin{align*}
        \sum_{t=1}^{\tau} \Ebb[ g_t(x_t)\mid \Hcal_{t-1},g_t]
        =\sum_{t=1}^{\tau}  \Ebb_Z\sqb{ \frac{dp_t}{d\mu}(Z)g_t(Z) \mid p_t,g_t } = \Ebb_{Z,g_t}\sqb{\sum_{t=1}^{\tau} \frac{dp_t}{d\mu}(Z)g_t(Z) \mid \tau,g_t,p_t,t\leq \tau}.
    \end{align*}
    Next, for any $K= \frac{1}{\epsilon}(1+\ln \frac{\tau}{\sigma}) \geq 1$ where $\epsilon\in(0,1]$ will be specified later, we define $B_\tau(K)$ as in \cref{lemma:few_surprises_lemma} to the sequence $(\sigma\frac{dp_t}{d\mu}(Z))_{t\in[\tau]}$ augmented with the value $a_0=\sigma$ at $t=0$. That is, we let
    \begin{equation*}
        B_\tau(K):=\set{t\leq \tau:\frac{dp_t}{d\mu}(Z) \geq  \frac{K}{t} \paren{1 + \sum_{s<t} \frac{dp_s}{d\mu}(Z) }}.
    \end{equation*}
    Note that because $(x_t)_{t\in[T]}$ is $\sigma$-smooth, the constructed sequence has values in $[0,1]$. Then, \cref{lemma:few_surprises_lemma} shows that $|B_\tau(K)|\leq \epsilon\tau + \ln\frac{T}{\sigma}$.
    Furthering the previous bounds and taking $K= 2\ln(\tau)/\epsilon$, we then obtain
    \begin{align}
        \sum_{t=1}^{\tau} \frac{dp_t}{d\mu}(Z)g_t(Z)
        &\overset{(i)}{\leq} \frac{|B_\tau(K)|}{\sigma} +\sum_{t=1}^{\tau}  \frac{K}{t}\paren{1+\sum_{s=1}^{t-1} \frac{dp_s}{d\mu}(Z)g_t(Z) } \notag \\
        &\leq \frac{\epsilon\tau+\ln\frac{\tau}{\sigma}}{\sigma} + \frac{1+\ln\frac{\tau}{\sigma}}{\epsilon}\paren{ 1+\ln\tau +\sum_{t=1}^{\tau}  \frac{1}{t}\sum_{s=1}^{t-1} \frac{dp_s}{d\mu}(Z)g_t(Z)}. \label{eq:bound_deterministic}
    \end{align}
    In $(i)$ we used the fact that $g_t$ has values in $[0,1]$ and that the process $(x_t)_t$ is $\sigma$-smooth. The additional $1$ comes from the fact that $\tau\notin B_\tau(K)$. We take the value 
    \begin{equation*}
        \epsilon = \sqrt{\frac{\sigma(1+2\ln\tau)}{\tau}\paren{1+\ln\tau + \sum_{t=1}^{\tau}  \frac{1}{t}\sum_{s=1}^{t-1} \Ebb[g_t(x_s')\mid \Hcal_{t-1},g_t] } }.
    \end{equation*}
    Note that if $\epsilon>1$, the bound from \cref{lemma:decoupling_stronger} is immediate since $\sigma\in(0,1]$ and we could have bounded the sum by $\tau$ directly. Similarly, if $\sigma\leq 1/\tau$ the bound is also immediate. Therefore, from now we suppose that $\epsilon\leq 1$ and $\sigma\geq 1/\tau$. In particular, this implies that $\epsilon\tau \geq \ln\tau$
    Then, taking the expectation over $Z$ in \cref{eq:bound_deterministic} gives
    \begin{align*}
        \sum_{t=1}^{\tau} \Ebb[ g_t(x_t)\mid \Hcal_{t-1},g_t] &\leq \frac{\epsilon\tau +2 \ln\tau}{\sigma} + \frac{1+2\ln\tau}{\epsilon}\paren{ 1+\ln\tau+\sum_{t=1}^{\tau}  \frac{1}{t}\sum_{s=1}^{t-1} \Ebb[g_t(x_s')\mid \Hcal_{t-1},g_t] }\\
        &\leq \frac{2\epsilon\tau}{\sigma} + \frac{1+2\ln\tau}{\epsilon}\paren{ 1+\ln\tau+\sum_{t=1}^{\tau}  \frac{1}{t}\sum_{s=1}^{t-1} \Ebb[g_t(x_s')\mid \Hcal_{t-1},g_t] }\\
        &\leq 3 \sqrt{\frac{\tau (1+2\ln\tau)}{\sigma}\paren{1+\ln\tau + \sum_{t=1}^{\tau}  \frac{1}{t}\sum_{s=1}^{t-1} \Ebb[g_t(x_s')\mid \Hcal_{t-1},g_t] } }.
    \end{align*}
    This gives the desired result.
\end{proof}

Next, we provide a high-probability version of \cite[Theorem 2]{block2024performance}. As a remark, this is only needed to obtain our high-probability oblivious regret bounds. In order to get expected oblivious regret bounds it suffices to use \cite[Theorem 2]{block2024performance} directly. This is however necessary to obtain our adaptive regret bounds in the case of function classes $\Fcal$ with finite VC dimension, since these use the high-probability oblivious regret bounds to achieve low regret compared to a covering of the function class $\Fcal$. 

\begin{lemma}\label{lemma:block_whp}
    Let $\Fcal:\Xcal\to [0,1]$ be a function class and let $(x_t)_{t\in[T]}$ be a smooth stochastic process with respect to some base measure $\mu$ on $\Xcal$. Denote by $(x_t')_{t\in[T]}$ a tangent sequence to $(x_t)_{t\in[T]}$. Then, there exists a constant $C_0\geq 1$ such that for any $c>0$ and $\delta\in(0,1/2]$, with probability at least $1-\delta$,
    \begin{equation*}
        \sup_{f\in\Fcal} \sum_{t=1}^\tau f(x_t') - (1+2c)f(x_t) \leq  C_0\frac{(1+c)^2}{c}\paren{  \ln \Ebb_\mu\sqb{W_{2T\ln (\frac{T}{\delta})/\sigma}\paren{\frac{c}{1+c}\Fcal}}+\frac{1}{c}\ln\frac{1}{\delta} }.
    \end{equation*}
\end{lemma}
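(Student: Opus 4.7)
The plan is to adapt the in-expectation bound of \cite{block2024performance} (Theorem 2) into a high-probability bound by a Chernoff/Cram\'er argument: exponentiate the supremum, bound the MGF by (a multiple of) the Wills functional, then apply Markov's inequality. The key feature of $W_N(\cdot)$ that makes this work is that it is itself an MGF---precisely $\Ebb_\xi[\exp(\sup_f \sum_i \xi_i f(Z_i) - \tfrac{1}{2}f(Z_i)^2)]$---so any MGF-based tail bound is naturally quantified by $\ln W$. The parameter $c$ in the statement arises because, in order to absorb the quadratic term $-\tfrac{1}{2}f^2$ appearing in the Wills MGF, one has to rescale the class $\Fcal$ by a factor $\lambda \in (0,1)$ that trades off the linear and quadratic terms.

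\textbf{Step 1 (Coupling to an i.i.d.\ pool).} I apply the coupling lemma (\cref{lemma:coupling}) separately to $(x_t)_{t\in[T]}$ and to its tangent sequence $(x_t')_{t\in[T]}$ with $k=\lceil \sigma^{-1}\ln(4T/\delta)\rceil$. This yields a pool of $N\leq 2T\sigma^{-1}\ln(T/\delta)$ i.i.d.\ points $Z_1,\dots,Z_N\sim\mu$ and an event $\Ecal_{\mathrm{cpl}}$ of probability at least $1-\delta/2$ on which every $x_t$ and every $x_t'$ coincides with some $Z_s$. On $\Ecal_{\mathrm{cpl}}$ the quantity to be bounded is at most $\sup_{f\in\Fcal}\sum_{s=1}^N \alpha_s f(Z_s)$, where the random weights $\alpha_s\in\{0,\,+1,\,-(1+2c)\}$ are measurable w.r.t.\ the coupling and satisfy $|\{s:\alpha_s=+1\}|\le T$ and $|\{s:\alpha_s=-(1+2c)\}|\le T$.

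\textbf{Step 2 (MGF against the Wills functional).} Set $\lambda=c/(1+c)\in(0,1)$. By Gaussian symmetrization and a Gaussian comparison (together with Jensen over the sample), the bounded weighted process $\sum_s \lambda \alpha_s f(Z_s)$ can be controlled by the Gaussian one appearing in the definition of the Wills functional: conditionally on the coupling,
\begin{equation*}
\Ebb\!\left[\exp\!\left(\sup_{f\in\Fcal}\sum_{s=1}^N \lambda\alpha_s f(Z_s)-\tfrac{1}{2}\sum_{s=1}^N (\lambda f(Z_s))^2\right)\right]\;\le\;\Ebb_\mu\!\left[W_N(\lambda\Fcal)\right].
\end{equation*}
Applying Markov's inequality at level $\delta/2$, on an event $\Ecal_{\mathrm{Ch}}$ of probability at least $1-\delta/2$,
\begin{equation*}
\sup_{f\in\Fcal}\Bigl(\lambda\sum_{s=1}^N \alpha_s f(Z_s)-\tfrac{\lambda^2}{2}\sum_{s=1}^N f(Z_s)^2\Bigr)\;\le\;\ln\!\bigl(\Ebb_\mu[W_N(\lambda\Fcal)]\bigr)+\ln(2/\delta).
\end{equation*}

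\textbf{Step 3 (Bookkeeping and absorbing the variance).} On $\Ecal_{\mathrm{cpl}}\cap\Ecal_{\mathrm{Ch}}$, I use $f\in[0,1]$ hence $f^2\le f$ to absorb the quadratic term: $\sum_s f(Z_s)^2 \le \sum_t f(x_t')+\sum_t f(x_t)$. Substituting back, the LHS becomes $\lambda\sum_t f(x_t')-\lambda(1+2c)\sum_t f(x_t)$ minus $\tfrac{\lambda^2}{2}(\sum_t f(x_t')+\sum_t f(x_t))$. To make the coefficient in front of $\sum_t f(x_t)$ on the final LHS exactly $(1+2c)$, I run the Chernoff step not at $\lambda$ but at $\tilde\lambda = c/(1+c)^2$; the linear coefficient of $\sum_t f(x_t')$ then becomes of order $\tilde\lambda$, and dividing through by it introduces the overall factor $1/\tilde\lambda \asymp (1+c)^2/c$, which matches the prefactor $C_0(1+c)^2/c$ in the statement. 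The additional $(1/c)\ln(1/\delta)$ is the $\ln(2/\delta)/\tilde\lambda$ contribution from Markov. Finally, union-bounding over $\Ecal_{\mathrm{cpl}}$ and $\Ecal_{\mathrm{Ch}}$ gives the desired probability $1-\delta$.

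\textbf{Main obstacle.} The delicate part is Step~3: coupling both $\sum_t f(x_t')$ and $\sum_t f(x_t)$ carry a shared quadratic correction $\tfrac{\lambda^2}{2}f^2$ from the Wills regulariser, so a naive choice of $\lambda$ either leaves a positive $f(x_t')$-coefficient that is too small (blowing up the final $1/\lambda$ rescaling) or yields a $f(x_t)$-coefficient strictly smaller than $(1+2c)$. Choosing $\tilde\lambda = c/(1+c)^2$ is exactly what makes both sides balance after absorbing the variance into the LHS; this choice is also the reason the Wills functional in the statement is evaluated on the rescaled class $\tfrac{c}{1+c}\Fcal$ rather than on $\Fcal$ itself. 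Verifying that the Gaussian comparison step of Step~2 goes through with the binary/ternary weights $\alpha_s$ (as opposed to Gaussian multipliers) uses standard contraction/comparison arguments; a convenient workaround is to further symmetrise the pool by pairing every $x_t$-sample with the corresponding $x_t'$-sample and introducing Rademacher signs, which cleanly reduces to the Gaussian Wills statement at the cost of an absolute constant absorbed into $C_0$.
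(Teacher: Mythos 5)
There is a genuine gap at the heart of your argument, in Step~2. After coupling, your weights $\alpha_s\in\{0,+1,-(1+2c)\}$ are neither centered nor independent of the pool $(Z_s)$: which pool element is selected at time $t$ depends on the adversary's adaptive choice of $\mu_t$, hence on previously realized pool values, and the $+1$ and $-(1+2c)$ coefficients are not symmetric about zero. There is no ``Gaussian symmetrization and comparison'' that converts the exponential moment of $\sup_{f}\sum_s \lambda\alpha_s f(Z_s)-\tfrac{\lambda^2}{2}\sum_s f(Z_s)^2$ into $\Ebb_\mu[W_N(\lambda\Fcal)]$, whose Gaussian multipliers are mean-zero and independent of the $Z_s$; Slepian/Sudakov--Fernique-type comparisons do not give MGF bounds of this kind, and plain symmetrization requires comparing against an independent copy, which is exactly the tangent-sequence structure you discarded by coupling first. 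Your suggested workaround (pair $x_t$ with $x_t'$ and insert Rademacher signs) also does not ``cleanly reduce'' to the Wills statement: the coefficients $+1$ and $-(1+2c)$ are not antisymmetric under the swap, so one must first write $f(x_t')-(1+2c)f(x_t)=(1+c)\bigl(f(x_t')-f(x_t)\bigr)-c\bigl(f(x_t')+f(x_t)\bigr)$, put Rademacher signs only on the antisymmetric part (this is legitimate because $x_t,x_t'$ are conditionally i.i.d.), and keep the $-c(\cdot+\cdot)$ term, which is later converted into the quadratic Wills regularizer via $f\geq f^2$. This decomposition, and the ordering symmetrize-first/couple-later, is precisely what the paper's proof (following Theorem~2 of \cite{block2024performance}) does and what your outline skips.

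The second missing ingredient is the high-probability replacement of the Rademacher multipliers by Gaussian ones. In expectation this is a one-line comparison, but for a tail bound the paper has to fix a near-maximizer $\hat f$, apply a sub-Gaussian deviation bound to $\sum_t\epsilon_t(\sqrt{\pi/2}\,|\xi_t|-1)\hat f(\mathbf{x}_t(\epsilon))$, and split into cases according to whether $\sum_t\hat f(\mathbf{x}_t(\epsilon))$ is small or large so that the deviation can be absorbed into half of the $-c$ term; this case analysis is the source of the $\frac{(1+c)^2}{c^2}\ln\frac1\delta$ term in the statement. Your accounting attributes the $\delta$-dependence solely to Markov's inequality divided by $\tilde\lambda$, which gives only $\frac{(1+c)^2}{c}\ln\frac1\delta$ and signals that the route you describe is not the one producing the stated bound. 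The final Markov step applied to the exponential moment restricted to the coupling event (your Step~2 Chernoff idea) is indeed how the paper finishes, but it is only available after the decoupling, the Rademacher-to-Gaussian conversion, and the $f\ge f^2$ step have produced a genuine Wills-type functional evaluated on the coupled i.i.d.\ pool with multipliers independent of it. As written, your proposal asserts the key inequality rather than proving it, so the argument does not go through without essentially reinstating the paper's machinery.
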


Note that compared to \cite[Theorem 2]{block2024performance}, \cref{lemma:block_whp} bounds the sum of the values $f(x_t)-(1+2c)f(x_t')$ instead of $f^2(x_t)-(1+2c)f^2(x_t')$. Up to considering the function class $\Fcal^2=\{f^2:f\in\Fcal\}$, this implies the same result up to constants in light of \cite[Theorem 4.1]{mourtada2023universal} which implies that for any $1$-Lipschitz real-valued function $\psi$, we have $W_m(\psi\circ\Fcal)\leq W_m(\Fcal)$ where $\psi\circ\Fcal = \{\psi\circ f:f\in\Fcal\}$.

\vspace{3mm}

\begin{proof}
    We follow similar arguments as in the proof of \cite[Theorem 2]{block2024performance}. At the high level, the result is obtained by following the proof therein and turning each expectation step to a high-probability one. Using the same notations therein, their proof implies that the left hand side $\sup_{f\in\Fcal} \sum_{t=1}^\tau f(x_t') - (1+2c)f(x_t)$ has the same distribution as
    \begin{multline*}
        \sup_{f\in\Fcal} \sum_{t=1}^\tau (1+c) \epsilon_t \paren{f(\mb x_t(\epsilon)) - f(\mb x_t'(\epsilon))} - c\paren{f(\mb x_t(\epsilon)) + f(\mb x_t'(\epsilon))}\\
        \leq \underbrace{\sup_{f\in\Fcal} \set{\sum_{t=1}^\tau (1+c) \epsilon_t f(\mb x_t(\epsilon)) -c f(\mb x_t(\epsilon)) }}_A  +  \underbrace{\sup_{f\in\Fcal} \set{\sum_{t=1}^\tau -(1+c) \epsilon_t f(\mb x_t(\epsilon)) -c f(\mb x_t(\epsilon)) }}_{A'}.
    \end{multline*}
    They then note that $A$ and $A'$ have the same distribution by the symmetry of the Rademacher variables $\epsilon_t$, hence we can focus on bounding $A$ then use the union bound. 
    Now introduce i.i.d. standard Gaussians $\xi_1,\ldots,\xi_T$ independent from all other random variables. We also fix a function $\hat f\in\Fcal$ such that
    \begin{equation*}
        \sum_{t=1}^T (1+c) \epsilon_t \hat f(\mb x_t(\epsilon)) -c \hat f(\mb x_t(\epsilon)) \geq (1-\eta) A,
    \end{equation*}
    for some fixed parameter $\eta\in(0,1)$.  Conditionally on other variables, including $\hat f$, the variables $|\xi_1|,\ldots,|\xi_T|$ are still i.i.d. and we note that $\sqrt{\frac{\pi}{2}}(1+c)\epsilon_t |\xi_t| \hat f(\mb x_t(\epsilon))$ is sub-Gaussian with parameter $C(1+c)^2\hat f^4(\mb x_t(\epsilon))$ for some universal constant $C\geq 1$. Applying the classical concentration bound for independent sub-Gaussian random variables, we obtain
    \begin{align}\label{eq:concentration_subgaussian}
        \sum_{t=1}^T \epsilon_t \paren{\sqrt{\frac{\pi}{2}}|\xi_t|-1 } \hat f(\mb x_t(\epsilon))
        \leq \sqrt{2C_1(1+c)^2\sum_{t=1}^T \hat f(\mb x_t(\epsilon)) \cdot \ln\frac{1}{\delta}} .
    \end{align}
    Here, we also used the fact that $\hat f$ takes values in $[0,1]$.
    Denote by $\Fcal_\delta$ this event. We next consider the event
    \begin{equation*}
        \Gcal_\delta:=\set{ \sum_{t=1}^T \hat f(\mb x_t(\epsilon)) \leq \frac{8C_1(1+c)^2}{c^2}\ln\frac{1}{\delta} }.
    \end{equation*}
    Note that on the event $\Gcal_\delta$, we directly have
    \begin{equation*}
        A\leq \frac{1}{1-\eta}\sum_{t=1}^T \hat f(\mb x_t(\epsilon)) \leq \frac{8C_1(1+c)^2}{c^2(1-\eta)}\ln\frac{1}{\delta}
    \end{equation*}
    On the other hand, on $\Fcal_\delta \cap \Gcal_\delta^c$, we can further bound \cref{eq:concentration_subgaussian} by $\frac{c}{2}\sum_{t=1}^T \hat f(\mb x_t(\epsilon))$. Then, we obtain
    \begin{align*}
        A&\leq \frac{1}{1-\eta}\paren{\sum_{t=1}^T (1+c) \epsilon_t \hat f(\mb x_t(\epsilon)) -c \hat f(\mb x_t(\epsilon))  } \\
        &\leq  \frac{1}{1-\eta}\paren{\sum_{t=1}^T \sqrt{\frac{\pi}{2}}(1+c) \epsilon_t |\xi_t| \hat f(\mb x_t(\epsilon)) -\frac{c}{2} \hat f(\mb x_t(\epsilon))  }\\
        &\leq  \frac{1}{1-\eta}\paren{\sum_{t=1}^T \sqrt{\frac{\pi}{2}}(1+c) \epsilon_t |\xi_t| \hat f(\mb x_t(\epsilon)) -\frac{c}{2} \hat f^2(\mb x_t(\epsilon))  }\\
        &\leq\frac{\pi(1+c)^2}{2c(1-\eta)}  \underbrace{ \sup_{f\in\Fcal}\sum_{t=1}^T c' \epsilon_t |\xi_t|  f(\mb x_t(\epsilon)) -\frac{c'^2}{2} f^2(\mb x_t(\epsilon)) }_B ,
    \end{align*}
    where $c'=\sqrt{\frac{2}{\pi}}\frac{c}{1+c}$. In the third inequality we used the fact that the functions have values in $[0,1]$.
    Note that $\epsilon_t |\xi_t|$ has the same distribution as $\xi_t$. Hence defining $\mb x_t(\xi):=\mb x_t(\text{sign}(\xi))$,  $B$ has the same distribution as
    if we replaced $\epsilon_t|\xi_t|$ by $\xi_t$, and replaced $\mb x_t(\epsilon)$ with $\mb x_t(\xi)$. 
    Let $\Ecal_\delta$ be the same event as in the proof of \cite[Theorem 2]{block2024performance} on which $\mb x_t(\epsilon)\in\{Z_{t,j},j\in[k]\}$ for all $t\in[T]$, where $k=\ceil{\frac{1}{\sigma}\ln\frac{T}{\delta}}$. We have $\Pbb(\Ecal)\geq 1-Te^{-\sigma k}\geq 1-\delta$. Then, the arguments in \cite[Theorem 2]{block2024performance} show that
    \begin{align*}
        \Ebb\sqb{\exp\paren{\1[\Ecal_\delta]\cdot B}} &\leq \Ebb_{Z_{t,j}\sim\mu} W_{kT}\paren{c'\cdot\Fcal}.
    \end{align*}
    In particular, Markov's inequality shows that with probability at least $1-\delta$,
    \begin{equation*}
        \1[\Ecal_\delta]\cdot B \leq \ln \Ebb_{Z_{t,j}\sim\mu} W_{kT}\paren{c'\cdot\Fcal} + \ln\frac{1}{\delta}.
    \end{equation*}
    Denote by $\Hcal_\delta$ this event. Putting everything together shows that on $\Ecal_\delta\cap\Fcal_\delta\cap\Hcal_\delta$,
    \begin{equation*}
        A \leq \frac{8C_1(1+c)^2}{c^2(1-\eta)}\ln\frac{1}{\delta} + \frac{\pi(1+c)^2}{2c(1-\eta)} \paren{\ln \Ebb_{Z_{t,j}\sim\mu} W_{kT}\paren{c'\cdot\Fcal} + \ln\frac{1}{\delta}},
    \end{equation*}
    which has probability at least $1-3\delta$. We then use the union bound to similarly bound $A'$. This shows that for some universal constant $C_2\geq 1$, with probability at least $1-6\delta$,
    \begin{equation*}
        \sup_{f\in\Fcal} \sum_{t=1}^T  f(x_t)- (1+2c)f(x_t') \leq C_2\paren{\frac{(1+c)^2}{c} \ln \Ebb_{Z_{t,j}\sim\mu} W_{2T\ln(\frac{T}{\delta}) /\sigma }\paren{c'\cdot\Fcal} + \frac{(1+c)^2}{c^2}\ln\frac{1}{\delta} }.
    \end{equation*}
    Noting that $c'\leq \frac{c}{1+c}$, this gives the desired result.
\end{proof}

\end{document}